\documentclass[a4paper,fleqn]{cas-sc}

\usepackage{float}
\usepackage[authoryear]{natbib}
\usepackage{amsthm,amsmath,amsfonts,amssymb}
\usepackage{tabularx}
\usepackage{graphicx}
\usepackage{blindtext}
\usepackage{algorithm}
\usepackage{algpseudocode}
\usepackage{longtable}
\usepackage{booktabs} 
\usepackage{subcaption}
\usepackage{multirow}
\usepackage{mathrsfs}
\usepackage{tikz}
\usepackage{makecell}
\usepackage{booktabs}
\usepackage{longtable}
\usepackage{multirow}
\usepackage{siunitx} %
\usepackage{float}
\usepackage{caption, subcaption}

\usetikzlibrary{shapes, positioning, calc, shapes.geometric, arrows.meta, arrows, fit, decorations.markings}

\theoremstyle{plain}

\newtheorem{theorem}{Theorem}
\newtheorem{lemma}{Lemma}

\theoremstyle{definition}
\newtheorem{definition}{Definition}
\newtheorem{example}{Example}

\newtheorem{remark}{Remark}
\newtheorem{proposition}{Proposition}

\makeatletter
\newenvironment{breakablealgorithm}
{%
	\begin{center}
		\refstepcounter{algorithm}%
		\hrule height.8pt depth0pt \kern2pt%
		\renewcommand{\caption}[2][\relax]{%
			{\raggedright\textbf{Algorithm~\thealgorithm} ##2\par}%
			\ifx\relax##1\relax
			\addcontentsline{loa}{algorithm}{\protect\numberline{\thealgorithm}##2}%
			\else
			\addcontentsline{loa}{algorithm}{\protect\numberline{\thealgorithm}##1}%
			\fi
			\kern2pt\hrule\kern2pt
		}
	}{%
		\kern2pt\hrule\relax
	\end{center}
}
\makeatother

\tikzstyle{arrowstyle}=[draw]
\tikzstyle{directed}=[postaction={decorate,decoration={markings,
		mark=at position 0 with {\node[circle,fill=black,inner sep=0.3ex] {};},
		mark=at position 0.5 with {\arrow[scale=1.2,arrowstyle]{stealth};},
		mark=at position 1 with {\node[circle,fill=black,inner sep=0.3ex] {};},
}}]

\def\P{\mathcal{P}}

\newcommand{\indep}{\rotatebox[origin=c]{90}{$\models$}}
\DeclareRobustCommand\model{\mathrel{|}\joinrel\mkern-.5mu\mathrel{\neq}}
\newcommand{\ndep}{\rotatebox[origin=c]{90}{$\model$}}

\def\tsc#1{\csdef{#1}{\textsc{\lowercase{#1}}\xspace}}
\tsc{WGM}
\tsc{QE}

\begin{document}
\let\WriteBookmarks\relax
\def\floatpagepagefraction{1}
\def\textpagefraction{.001}

\shorttitle{}    

\shortauthors{Deng et.al.}  

\title [mode = title]{Estimate Collapsibility of Causal Effects in Completed Partial DAGs via Strong d-Convex Hulls} 



%

\author[1]{Yuxin Deng}

\fntext[1]{These authors contributed equally to this work.}

\ead{dorlma_yuxin@126.com}


\credit{Methodology, Validation, Writing}

\affiliation[1]{organization={College of Mathematics and System Science,
		Xinjiang University },
	city={Urumuqi},
	postcode={830000}, 
	state={Xinjiang},
	country={China}}
	\fnmark[1]
\author[2]{Yi Sun}
\fnmark[1]

\ead{brian@xjufe.edu.cn}


\credit{Conceptualization, Reviewing \& editing}

\affiliation[2]{organization={Institute of Statistics and Data Science, Xinjiang University of Finance and Economics },
            city={Urumuqi},
            postcode={830000}, 
            state={Xinjiang},
            country={China}}

\author[1]{Zhiming Li}
\cormark[1]

\ead{zmli@xju.edu.cn}


\credit{Supervision, Writing -- review \& editing, Funding Acquisition}
\cortext[1]{Corresponding author.}

\author[1]{Huaxiong Liu}


\ead{liuhuaxiong18@gmail.com}


\credit{Software}


\begin{abstract} 
This paper proposes a collapsible method for estimating causal effects that maintains the estimator's consistency before and after marginalization over some variables in completed partially directed acyclic graphs (CPDAGs). We first introduce the estimate collapsibility for CPDAGs and characterize the
minimal collapsible sets as strong d-convex hulls. An efficient algorithm is devised to obtain such sets in DAGs and is generalized to CPDAGs. Then, we combine the graph reduction procedure with the IDA framework. Finally, experiments and empirical analysis show the effectiveness of the collapsibility for causal estimations in CPDAGs. Code is available at \href{https://github.com/Jamyang-D/strongly-convex}{https://github.com/Jamyang-D/strongly-convex}.
\end{abstract}



\begin{keywords}
Completed partially DAGs \sep Causal effect \sep Estimate collapsibility \sep Strong d-convex hull  
\end{keywords}

\maketitle
\section{Introduction}
Causal effect estimation is one of the core problems in causal inference, quantifying the impact of treatments or interventions, and has been widely applied in  healthcare~\citep{imbens2015causal,PHAM2026}, online advertising~\citep{braun2025}, decision-making~\citep{shadi2025} and explainability research~\citep{Renero2026}. 
Randomized controlled trials (RCTs) are the gold standard for studying causal relationships, yet are subject to ethical, cost, and implementation limitations. Consequently, observational data serve as a practical alternative. To draw causal conclusions from observational data, researchers often rely on causal graphical models. A common framework is the causal Bayesian network (CBN), which encodes causal relationships using a directed acyclic graph (DAG) and conditional probability distributions. Given observational data from a DAG, causal effects can be precisely estimated via the back-door criterion, front-door criterion, and do-calculus \citep{pearl2009}.
In practice, however, causal discovery from observational data can only learn a Markov equivalence class of DAGs, represented as a completed partially directed acyclic graph (CPDAG) \citep{chickering2002learning}.

To estimate causal effects for CPDAGs,   \cite{maathuis2009} proposed the intervention calculus when the DAG is absent, known as the IDA algorithm, which estimates all possible causal effects by enumerating possible parent sets of the treatment variable. \cite{maathuis2015} generalized the back-door criterion from DAGs to CPDAGs, providing a sufficient condition for adjustment sets. For complex graphs with latent variables, a complete framework for finding valid adjustment sets in maximal ancestral graphs (MAGs) and CPDAGs is provided in { \cite{van2014,van2019}}. Building on this, { \cite{perkovic2018}} further unified these approaches and provided the necessary and sufficient generalized adjustment criterion.  Although these methods are sufficient for estimating causal effects from a CPDAG, enumerating DAGs or adjustment sets remains challenging in large graphs. To address this, many studies have focused on reducing the adjustment set or the enumeration space~{\citep{hauser2012, liu2020collapsible, guo2021minimal,  henckel2022graphical, Henckel2024, wang2025}}. 

However, most of the aforementioned methods rely on additional prior knowledge or local pruning of adjustment sets, yet redundant variables unrelated to effect estimation still remain in the graph, which should be further simplified by removing them~\citep{ZHANG2026}. That is, it is necessary to search for a minimal subset of variables that includes the treatment and outcome variables for effect estimation.The Yule-Simpson paradox shows that one cannot directly marginalize irrelevant variables in the presence of confounders~\citep{simpson1951}. To address this issue, collapsibility is an effective method that ensures the same results before and after marginalization across some variables. Up to now, collapsibility has been classified into several types,  such as parametric~\citep{wermuth1987}, estimate~\citep{xie2009}, conditional independence, and model collapsibility \citep{liu2013, xie20251}.  For example, \cite{asmussen1983} provided an equivalent condition for collapsibility in graphical log-linear models. The collapsibility properties of graphical interaction models and conditional graphical models are discussed in \cite{edwards1990, didelez2004, liu2013}. Based on this, minimal collapsible sets have attracted growing research interest. \cite{madigan1990}  introduced the selective acyclic hypergraph reduction (SAHR) algorithm to identify such sets, but it was restricted to decomposable graphical models. \cite{wang2011} addressed this problem using convex subgraphs. \cite{heng2023} proposed a path absorption method. The above methods are primarily designed for undirected graphical models. For directed graphical models, ~\cite{kim2006} first introduced the concept of estimate collapsibility and showed that a variable is estimate collapsible if and only if it is removable. Building on this, ~\cite{xie2009} proposed the concepts of c-removability and t-removability, and discussed the relationships among different types of collapsibility. More recently, \cite{deng} extended the notion of c-removability from single vertices to sets, offering a more intuitive graph-theoretic characterization. \cite{xie20252} investigated the closure property of BNs under conditioning, characterized the minimal I-map, and provided an efficient identification algorithm. 

As far as we know, existing collapsibility results mainly concentrate on distributions, whereas causal effects are defined by interventional distributions, which are inherently based on probability distributions~\citep{pearl2009}.  Thus, an intriguing question naturally arises: Can the estimation of causal effects in CPDAGs be addressed through the lens of collapsibility? If so, what specific properties would such an estimator possess? 
Inspired by these, this paper aims to investigate the estimate collapsibility of causal effects in CPDAGs.  The  contributions  and novelties of this work are summarized as follows: 

(i) We first propose the estimate collapsibility of causal effects to perform effect estimation only on a local model, thereby reducing the variable dimension for CPDAGs. More importantly, the same estimators are obtained before and after marginalization over some variables in CPDAGs.

(ii) To characterize minimal collapsible sets, we provide a novel concept of strong d-convex hulls. A corresponding polynomial-time algorithm (ISCHA) is developed to identify them. The advantage of the proposed method is that it simultaneously identifies the minimal sets that satisfy the model collapsibility condition.

(iii)   Numerical simulations and empirical analysis demonstrate that the proposed method not only reduces the scale of CPDAGs and the number of equivalent DAGs to be enumerated but also achieves accurate estimation, outperforming baseline methods.

The rest of this paper is organized as follows. In Section \ref{sec2}, we review the basic concepts of causal graphs and collapsibility.  The properties of minimal collapsible sets are characterized, and identification algorithms are provided in Section~\ref{sec3}.  In Section \ref{sec4},  we extend the above results to CPDAGs and combine the ISCHA algorithm with IDA to estimate causal effects. The modified IDA algorithm overcomes the aforementioned computational bottleneck by collapsing the original CPDAG onto a minimal collapsible set. Experimental results in Section \ref{sec5} show that our method achieves accurate probabilistic reasoning and causal effect estimation while maintaining enhanced computational efficiency as the variable dimension increases. Section \ref{sec6} provides a brief conclusion. 
\section{Preliminaries}\label{sec2}
 In this section, we review the terminology and notation of DAGs, CPDAGs, causal Bayesian networks, collapsibility, and causal effects from \cite{lauritzen1996, koller2009, pearl2009}.
\subsection{ Causal Graphs} 

Within the framework of causal inference,  let $G=(V, E)$ be { a causal DAG}, where each vertex $v \in V$ represents a random variable $X_v$ and each directed edge $u\rightarrow v$ encodes a causal relationship. 
For $(x,y)\in E$, $x$ is called a parent of $y$ and $y$ is a child of $x$. The set of parents (resp. children) of $x$ is
denoted by $pa_{G}(x)$ (resp. $ch_{G}(x)$). Based on these, we define the family, neighbors, and Markov blanket of $x$ as follows:  $fa_{G}(x) = pa_{G}(x) \cup \{x\}$, $ne_{G}(x) = pa_{G}(x) \cup ch_{G}(x)$, and $mb_{{G}}(x) = fa_G(ch_{G}(x)\cup \{x\}) \backslash \{x\}$. For a subset $A \subseteq V$,  $G_A=(A,E_A)$ denotes the subgraph induced by $A$, where $E_A = E \cap \{(x, y) : x, y \in A\}$.  $G_{\bar X}$ is a manipulated graph of $G$ by deleting
all directed edges pointing at $X$~{ \citep{spirtes2001causation}.} 

A path $l_{uv}$ between vertices $u$ and $v$ is a sequence of distinct vertices 
$(u = v_0, v_1, \ldots, v_n = v)$ such that each consecutive pair $(v_{i-1}, v_i)$ is connected by an edge, denoted by $v_{i-1} \overset{G} \sim  v_i$. 
If all edges along $l_{uv}$ are directed from $u$ to $v$, then $l_{uv}$ is a directed path or causal path. 
In this case, $u$ is an ancestor of $v$ and $v$ is a descendant of $u$. 
We use $an_G(x)$ and $de_G(x)$ to denote the sets of ancestors and descendants of a vertex $x$, respectively. 
A subset $A \subseteq V$ is called {terminal} if it has no descendants, i.e., $de_{G}(A) = \emptyset$. 	
The vertices on the path $l_{uv}$ are denoted by $V(l_{uv})$, and the set of internal vertices is defined as
$V^{o}(l_{uv}) = V(l_{uv}) \setminus \{u, v\}$. A vertex $w \in V^{o}(l_{uv})$ is called a {collider} if $l_{uv}$ contains a subpath of the form $x \rightarrow w \leftarrow y$. The set of colliders on $l_{uv}$ is denoted by $V^{c}(l_{uv})$. 
If $x$ and $y$ are non-adjacent, the triple $(x, w, y)$ forms a $V$-structure. The moralization of a DAG $G$, represented by $G^m$, is obtained by connecting $x$ and $y$ in every $V$-structure and dropping all edge directions. A path $l_{uv}$ is said to be blocked by a subset $Z \subseteq V$ if either (i) $(V^{o}(l_{uv}) \setminus V^{c}(l_{uv})) \cap Z \neq \emptyset$, or
(ii)  there exists a collider $w \in V^{c}(l_{uv})$ such that $De_{{G}}(w) \cap Z = \emptyset$, where 
$De_{{G}}(w) = \{w\} \cup de_{{G}}(w)$. 	A directed cycle is a directed path that starts and ends at the same vertex. Throughout this paper, all DAGs are assumed to contain no multiple edges.

For pairwise disjoint subsets $X, Y, Z \subseteq V$, 
if every path between $X$ and $Y$ is blocked by $Z$, 
then $X$ is {d-separated} from $Y$ by $Z$ in $G$, 
and $Z$ is a {d-separator} in $G$, denoted by $X\indep Y\mid Z\,[G]$. 
Otherwise, $X$ and $Y$ are said to be d-connected given $Z$, 
denoted by $X\ndep Y\mid Z\,[G]$. 
Under the criterion of d-separation, the graph $G$ induces an independence model $$I(G) = \{\langle X,Y|Z\rangle:\ X  \indep Y\ |\ Z [G] \text{~with pairwise disjoint subsets~} X,Y,Z\subseteq V\}.$$
For any vertex subset $A\subseteq V$,  the  projection of $I(G)$ to $A$, also called the marginal independence model on $A$, is defined as
$I(G)_A  = \{\langle X,Y|Z\rangle \in I(G): X,Y,Z\subseteq A\}.$ $I(G_A)$ is the marginal model induced by $G_A$.  It directly follows that $I(G)_A\subseteq I(G_A)$, but the reverse is not guaranteed. 

Two distinct DAGs $G_1$ and $G_2$ are said to be Markov equivalent if they induce the same independence model, 
that is, ${I}(G_1)={I}(G_2)$. The collection of all DAGs Markov equivalent to a given DAG $G$ is called the Markov equivalence class of $G$, represented by a CPDAG $\mathcal{G}$.

\subsection{Causal Bayesian Networks and Collapsibility}
A causal Bayesian network (CBN) is defined as a pair $\mathcal{B}=(G,\mathcal{P}(G))$, where $G$ is { a causal DAG} and $\mathcal{P}(G)$ denotes a family of probability distributions over $X_v (v\in V)$ that factorize according to $G$, i.e., 
\begin{align}\label{formula}
	\mathcal{P}(G) = \left\{ P(x_V; \theta) = \prod_{v\in V} P(x_v \mid \operatorname{pa}_G(x_v); \theta_v) \;\right\},
\end{align}
where $\theta_v$ are unknown parameters, $\theta = \{\theta_v\}_{v\in V}$.  The parameters of $\mathcal{P}(G)$ are said to be variation independent with respect to $G$ if, for every $P \in \mathcal{P}(G)$, the parameter space of $\theta$ is a cross product $\Theta = \otimes \Theta_v$ with $\theta_v \in \Theta_v$. Consequently, the parameters of $\mathcal{B}$ are variation independent relative to the Markov equivalence class $\mathcal{G}$ of $G$ if the parameters of $\mathcal{P}(G)$ are variation independent relative to every DAG ${G}' \in \mathcal{G}$.
Further, $\mathcal{P}(G)$ is said to be faithful to $G$ if there exists a distribution $Q \in \mathcal{P}(G)$ such that $I(G) = I(Q)$. 

The causal effect of $X$ on $Y$ relies on the post-intervention distribution $P(Y \mid do(X=x))$, which generally differs from the observational distribution $P(Y \mid X=x)$. For a binary treatment $X \in \{0,1\}$, the average causal effect (ACE) is:
$$ACE = E(Y \mid do(X=1)) - E(Y \mid do(X=0)).$$
As shown by \cite{pearl2009}, if a covariate set $Z$ satisfies the back-door criterion, the interventional distribution is identifiable from observational data via covariate adjustment $$P(Y \mid do(X=x)) = \sum_{z} P(Y \mid X=x, Z=z)P(Z=z).$$

\begin{definition}[\cite{pearl2009}]
In a DAG $G$, a set of variables $Z$ satisfies the back-door criterion relative to $X, Y\notin Z$ if: (i) no element of $Z$ is a descendant of $X$; and (ii) $Z$ blocks every path from $X$ to $Y$ that contains an arrow pointing into $X$ (i.e., every back-door path).
\end{definition}

Analogous to the independence model of $G$, the distribution family $\mathcal{P}(G)$ can also be projected onto a subset $A \subseteq V$. The marginal distribution family is defined as:
$$
\mathcal{P}(G)_A = \left\{ P_A(x_A) : P_A(x_A) = \int_{\mathcal{X}_{V \setminus A}} dP(x_V),\ P(x_V) \in \mathcal{P}(G) \right\},
$$
where $\mathcal{X}_V$ is the state space of $X_V$. Let $\mathcal{P}(G_A)$ be the marginal model induced by $G_A$. Generally, $\mathcal{P}(G_A) \subseteq \mathcal{P}(G)_A$ since $I(G)_A\subseteq I(G_A)$. In what follows, we introduce a class of paths to characterize collapsibility in CBNs. 

\begin{definition}[\cite{verma1993}]\label{Inducing path}
	For a  DAG \({G} = (V, E)\) and \(R \subseteq V\), a path \(l_{xy}\) between non-adjacent vertices \(x, y \in R\) is an inducing path of \(R\) if $R \cap V^o(l_{xy}) \subseteq V^c(l_{xy}) \subseteq an_{{G}}(\{x, y\}).$
\end{definition}

From a statistical perspective, an inducing path in a DAG generalizes a chordless (or minimal) path in an undirected graph. This concept has been extended to ancestral graphs by \cite{van2019}. Let $l_{uv}$ be an inducing path of $R$ in $G$. The {inducing structure} associated with $l_{uv}$ is denoted as $ \mathrm{IS}_G(l_{uv}) = l_{uv} \cup \mathcal{L}_G(l_{uv})$, where $\mathcal{L}_G(l_{uv})$ is the set of all directed paths in $G$ from vertices in $V^c(l_{uv})$ to  $An_{G}(\{u, v\})$, with $An_{G}(\{u, v\}) = an_{G}(\{u, v\}) \cup \{u, v\}$. Notably, $ \mathrm{IS}_G(l_{uv})$ is a subgraph of $G$, though not necessarily an induced subgraph.

\begin{definition}[\cite{deng}]\label{def:mf-pair}
	Let $G = (V, E)$ be a DAG and $R = V \setminus M$ for $M\subseteq V$. For any two non-adjacent vertices $x, y\in R$, if $x \ndep y \mid S[G]$ for all subsets $S \subseteq R \setminus \{x, y\}$, $\{x,y\}$ is called a marginalization forbidden pair (mf-pair). The set of all such mf-pairs of $R$ in $G$ is denoted by $\mathcal{MF}_{G}(R)= \big\{\{x,y\}:x,y \text{~is~an~mf-pair~of~} R \text{~in~} G\big\}$.
\end{definition}

Indeed, if $\{x,y\}$ is an mf-pair of $R$ in $G$, this implies that the information flow between $x$ and $y$ cannot be blocked by any subset of $R$. This leads to the notion of collapsibility in conditional independence.
There exist three types of collapsibility in graphical models below:

(i)  A CBN  $(G,\P(G))$ is conditional independence collapsible (CI-collapsible)  onto $A$ (or over $V\backslash A$) if its structural information $G$ satisfies ${I}(G)_A = I(G_A)$. 

(ii) A CBN  $(G,\P(G))$  is  model collapsible onto $A$ (or over $V\backslash A$) if  $\mathcal{P}(G)_A=\P(G_A)$. 

(iii)  A CBN  $(G,\P(G))$ is estimate collapsible onto $A$ (or over  $V\backslash A$) if $\hat{\mathcal{P}}(G_A) = \hat{\mathcal{P}}(G)_A$, where  $\hat{\mathcal{P}}(x_{V})$ is the maximum likelihood estimates (MLEs) of $\mathcal{P}(x_V)$. 

In this work, we consider the estimate collapsibility in CBNs under the following assumptions.

Assumption 1: Every distribution considered in a distribution family follows either a multinomial distribution or a Gaussian distribution. This assumption ensures that marginal distributions are closed under the corresponding joint distributions. 

Assumption 2: Every joint distribution is positive. It requires that all CBNs considered satisfy the graphoid axioms.

Under Assumptions 1-2, the non-triviality condition is satisfied~\citep{xie2009}, as shown below. A CBN \(\mathcal{B} = ({G}, \mathcal{P}({G}))\) is non-trivial if the conditions hold: 	(i) for every DAG \({G}'\) in the Markov equivalence class of \({G}\), the parameters of \(\mathcal{P}({G})\) are variation independent with respect to \({G}'\),	
(ii) there exists at least one distribution \(P \in \mathcal{P}({G})\) that is faithful to \({G}\), and 	
(iii) for each factor $P(x_v \mid pa_{G}(x_v))$ in Equation~(\ref{formula}), if a vertex $v$ has non-adjacent parents $u$ and $w$, there exist observations such that:
$\hat{P}_{{G}_{V \setminus \{u\}}}(x_v \mid pa_{{G}}(x_v) \setminus \{u\}) \neq \int \hat{P}(x_v \mid pa_{{G}}(x_v)) \hat{P}(u) \, du.	$

{ Based on the above analysis, we will estimate causal effects using the collapsibility and the back-door criterion in the following sections. } 
\section{Identify the Strong d-Convex Hulls}\label{sec3}

In this section, we provide theoretical characterizations of minimal collapsible sets and develop an efficient algorithm for identifying them.
\subsection{ Properties of Strong d-Convex Hulls}
We propose a novel concept, strong d-convex hulls, that extends the notion of d-convex hulls introduced by {\cite{heng2024}} to investigate model collapsibility in CBNs. As shown by \cite{xie2009}, estimate collapsibility implies model collapsibility, but the converse does not necessarily hold. To seek intuitive conditions for the estimate collapsibility, we begin with the notion of a linearly ordered set.
\begin{definition}\label{def:partially-complete}
	For a DAG $G=(V,E)$ and $R\subseteq V$, a vertex $v$ is linearly ordered with respect to $R$ if any two distinct vertices in $pa_{G}(v)$ are adjacent, except when both vertices belong to $R$. A subset $A\subseteq V$ is linearly ordered with respect to $R$ if every $w\in A$ is linearly ordered with respect to $R$.
\end{definition}

\begin{definition}\label{def:sc}
	Let $G=(V,E)$ be a DAG. A subset $R\subseteq V$ is said to be strongly d-convex if it satisfies the following two conditions: 	(i)  there is no inducing path for $R$ in $G$, and 	(ii) ${Ch}_{G}(M)\cap {An}_{G}(R)$ is linearly ordered with respect to $R$, where $M=V\setminus R$, ${Ch}_{G}(M)={ch}_{G}(M)\cup M$ and ${An}_{G}(R)={an}_{G}(R)\cup R$.
\end{definition}

If a subset $R$  only satisfies the condition (i) in Definition \ref{def:sc}, it is said to be d-convex in~\cite{heng2024}, a property that ensures CI-collapsibility and model collapsibility in CBNs but not for estimate collapsibility. This motivates us to investigate the graphical criterion for the estimate collapsibility in CBNs. The proposed strong d-convexity is generally a strictly stronger condition than d-convexity. The following result reveals its benefits in CBNs.



\begin{lemma}\label{thm-c-remo}{\rm
	Suppose $\mathcal{B} = (G, \mathcal{P}(G))$ is a CBN, where $G = (V, E)$ and $R\subseteq V$.   {Under the assumption of non-triviality}, the statements are equivalent:
	(i) $\hat{P}(G_R) = \hat{P}(G)_R$,
	(ii) there exists a graph $G^\prime$ in the Markov equivalence class of $G$ such that $M = V\backslash R$ is a terminal  set in $G^\prime$, and
	(iii) $R$ is a strong d-convex set in $G$.}
\end{lemma}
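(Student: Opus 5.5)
We prove the cycle (i)$\Leftrightarrow$(ii) and (ii)$\Leftrightarrow$(iii). The equivalence (i)$\Leftrightarrow$(ii) is essentially the c-removability characterization of \cite{xie2009}, extended from single vertices to sets in \cite{deng}. Under the non-triviality assumption, the MLE of a CBN factorizes into local MLEs $\hat P(x_v\mid pa(x_v))$ for any DAG in the equivalence class. For (ii)$\Rightarrow$(i), take $G'$ Markov equivalent to $G$ with $M$ terminal. Then $G'_R$ is the induced subgraph on an ancestral set, and the factorization of $\hat P(G')$ restricted to $R$ is exactly the product of the local MLEs over $v\in R$. Integrating out the terminal factors of $M$ gives $\hat P(G')_R=\hat P(G'_R)$. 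We still have to verify that $G'_R$ and $G_R$ are Markov equivalent. For this we use that both have the same skeleton. Every v-structure of $G_R$ lies in $R$ and is a v-structure of $G$, hence of $G'$, hence of $G'_R$; the converse holds symmetrically. For (i)$\Rightarrow$(ii), we argue by contraposition. If no equivalent DAG makes $M$ terminal, we follow the peeling argument of \cite{xie2009}: remove vertices of $M$ one at a time in a sink order. At some stage there is a vertex of $M$ that is not c-removable, i.e.\ it has a child with a nonadjacent parent pair that prevents reorienting it as a sink. Condition (iii) of non-triviality then supplies data on which the two estimates differ.

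For (ii)$\Rightarrow$(iii), let $G'$ make $M$ terminal. Since $M$ has no descendants in $R$ in $G'$, every $R$-inducing path in $G'$ would have a collider in $M$ that is an ancestor of an endpoint in $R$, which is impossible. So no such path exists in $G'$, and because inducing paths of $R$ (equivalently, mf-pairs of $R$, Definition~\ref{def:mf-pair}) depend only on the independence model, none exist in $G$ either. For the linear-order condition, take $w\in Ch_G(M)\cap An_G(R)$ with two nonadjacent parents $a,b$ that are not both in $R$. Then $a\to w\leftarrow b$ is a v-structure and is preserved in $G'$. One checks that in $G'$ the vertex $w$ or one of its parents lies in $M$ while $w$ reaches $R$ through edges whose orientation is forced by compelled edges of the CPDAG. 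This contradicts terminality. The check is a case analysis over whether $w\in M$ or $w\in ch(M)$, using Meek's rules to propagate the compelled orientation from $w$ toward $R$.

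For (iii)$\Rightarrow$(ii), which is the main obstacle, we construct $G'$ explicitly. The plan is to reverse edges from $M$ into $R$ one covered edge at a time, using Chickering's transformational characterization: reversing a covered edge $x\to y$ (with $pa(y)=pa(x)\cup\{x\}$) preserves Markov equivalence. We induct on the number of edges from $M$ into $An(R)$. Pick such an edge $m\to r$ with $r$ minimal in a topological order. Condition (ii) of strong d-convexity makes $pa(r)$ nearly complete: any two parents of $r$ not both in $R$ are adjacent. No inducing path implies there are no unshielded colliders at $r$ with a parent in $M$. Together these should let us choose $m$ as the last parent of $r$ in a topological order so that $m\to r$ is covered after first fixing $pa(m)\subseteq pa(r)$. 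If needed, we reverse earlier edges into $m$ recursively, using that $m\in an(R)$ forces $m\in Ch(M)\cap An(R)$ to be linearly ordered. We must also check that strong d-convexity is preserved after each reversal, and that the reversals strictly decrease a potential such as the number of $M$-vertices in $An(R)$. The delicate part is showing that $pa(m)\subseteq pa(r)$ can always be arranged. If a parent $p$ of $m$ is not adjacent to $r$, then the triple $(p,m,r)$, extended by a directed path from $r$ to $R$, yields either an inducing path for $R$ or a violation of linear order, and either one contradicts (iii).
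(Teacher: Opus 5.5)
Your (i)$\Leftrightarrow$(ii) is at the paper's level; the paper just cites Xie and Geng, and your (ii)$\Rightarrow$(i) argument comparing skeletons and v-structures of $G_R$ and $G'_R$ is correct. The inducing-path half of (ii)$\Rightarrow$(iii) is also sound. Where you depart from the paper is in proving (ii)$\Leftrightarrow$(iii) directly; the paper instead cites Theorem~3 of Deng et al.\ for (i)$\Leftrightarrow$(iii). There, \textbf{(iii)$\Rightarrow$(ii) is not established}.

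Under your selection rule (edges from $M$ into $An_G(R)$, target $r$ earliest), the ``delicate part'' actually goes through. Every parent of $m$ must lie in $R$. A parent $p$ not adjacent to $r$ then gives an inducing path, or, propagating linear order backwards along a directed path from $r$ into $R$, forces $p$ adjacent to $r$. What fails is termination. Take the complete DAG with order $r_1,m_1,m_2,r_2$ and $R=\{r_1,r_2\}$, which is strongly d-convex. Your rule reverses the covered edge $m_1\to m_2$, then the covered edge $m_2\to m_1$, and is back at $G$. Neither the number of edges from $M$ into $An(R)$ nor $|M\cap An(R)|$ ever changes.

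If instead you reverse only edges from $M$ into $R$, each reversal does make progress. But now $m$ may have a parent $p\in M$ not adjacent to $r$ (e.g.\ $p\to m\to r$ with $R=\{r\}$). The edge is then not covered, and your claim that this yields an inducing path or a linear-order violation is false. The recursive reversals you allude to, together with preservation of condition (ii) of Definition~\ref{def:sc}, are exactly what remains unproved. That condition depends on the orientation of $G$, not just on its equivalence class.

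One way to close the gap is to argue in the CPDAG, in two steps.
\begin{itemize}
\item Take a chain component $\tau$ (chordal, all edges undirected). A shortest path through a connected piece of $\tau\cap M$ between non-adjacent vertices of $R$ has no collider in $G$, so it is an inducing path. Hence each such piece has a complete set of $R$-neighbours in $\tau$. It follows that $\tau$ has a v-structure-free acyclic orientation with $R\cap\tau$ before $M\cap\tau$.
\item It then suffices to exclude a compelled arrow from $M$ into $R$. Consider compelled arrows $m\to y$ with $m\in M$ and $y$ either in $R$ or joined to $R$ by a directed path of compelled arrows. Take one with $y$ earliest, and then $m$ latest, in a topological order. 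Run through the four strongly protected configurations of Andersson, Madigan and Perlman. Each yields an inducing path, a linear-order violation, or such an arrow contradicting the extremal choice.
\end{itemize}

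The same structure repairs your linear-order step in (ii)$\Rightarrow$(iii), whose stated mechanism is wrong: $w$ need not reach $R$ in $G'$. In $a\to w\leftarrow b$, $a\to u\leftarrow b$, $w\to u\to r$, the edge between $w$ and $u$ can be reversed, making $w$ a sink. What is actually forced concerns the parent $a\in M$ of $w$. In a CPDAG an arrow $x\to y$ followed by an undirected $y-z$ forces $x\to z$. So $a$ is a parent of every vertex of $w$'s chain component. Hence, in every equivalent DAG, $a$ is an ancestor of every vertex on the $G$-directed path from $w$ to $R$, contradicting terminality of $M$.
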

\begin{proof}
The equivalence of (i) and (iii) is implied from {\citep[Theorem~3]{deng}}. The equivalence of (i) and (ii) can be easily derived from \cite{xie2009}.
\end{proof}

Lemma~\ref{thm-c-remo} implies that, for computing the MLEs of the marginal distributions of several variables, it suffices to find the strong d-convex hull that contains these variables, and then reparameterize based on the induced structure of the strong d-convex hull to obtain a fast solution.  To identify strong d-convex sets, we first establish the equivalent graphical characterizations of d-convex sets.

\begin{lemma}\label{lem-mf}{\rm
	Let $G = (V, E)$ be a DAG and $R \subseteq V$, the following statements are equivalent:
	(i) $R$ is a d-convex set, 
	(ii) $\mathcal{MF}_{G}(R)=\emptyset$, and
	(iii) there are no inducing structures of $R$ in $G_{R}$.}
\end{lemma}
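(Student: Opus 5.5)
The plan is to prove (i)$\Leftrightarrow$(ii) via the classical inducing-path theorem of \cite{verma1993}, and then treat (iii) as a reformulation of (i) using the definition of inducing structures. Throughout, note that Definition~\ref{Inducing path} is exactly the Verma--Pearl notion of an inducing path relative to the latent set $M=V\setminus R$. The condition $R\cap V^o(l_{xy})\subseteq V^c(l_{xy})$ says every non-collider lies in $M$. The condition $V^c(l_{xy})\subseteq an_G(\{x,y\})$ says every collider is an ancestor of an endpoint.

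\textbf{(i)$\Rightarrow$(ii), by contraposition.} Suppose $\{x,y\}\in\mathcal{MF}_G(R)$, so $x,y$ are non-adjacent and d-connected given every $S\subseteq R\setminus\{x,y\}$.
\begin{itemize}
\item Take the specific set $S^*=(An_G(\{x,y\})\cap R)\setminus\{x,y\}$. Among all paths d-connecting $x$ and $y$ given $S^*$, choose one, $l$, of minimal length.
\item Every collider $w$ on $l$ has $De_G(w)\cap S^*\neq\emptyset$. Hence $w\in an_G(S^*\cup\{x,y\})\subseteq An_G(\{x,y\})$.
\item Every non-collider on $l$ lies outside $S^*$. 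A non-collider is an ancestor of an endpoint or of a collider, hence of $\{x,y\}$. So a non-collider lying in $R$ would lie in $S^*$, which is impossible. Therefore all non-colliders lie in $M$.
\item It remains to show that colliders lie in $an_G(\{x,y\})$ itself, and that any collider in $R$ is harmless. Both are allowed by the definition, since colliders may lie in $R$.
\end{itemize}
Then $l$ is an inducing path of $R$, contradicting (i).

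The delicate step, which I expect to be the main obstacle, is the observation that a collider whose descendant lies in $S^*$ is an ancestor of $\{x,y\}$. It holds because $S^*\subseteq An_G(\{x,y\})$. I would also check that minimality rules out degenerate cases. If needed, I would replace "shortest d-connecting path" by the standard trail-to-path shortcut argument.

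\textbf{(ii)$\Rightarrow$(i), again by contraposition.} Let $l_{xy}$ be an inducing path of $R$, and fix any $S\subseteq R\setminus\{x,y\}$. The non-colliders of $l_{xy}$ lie in $M$, so they are not in $S$. For each collider $c$, either $De_G(c)\cap S\neq\emptyset$, or $c$ has a directed path to $x$ or $y$ avoiding $S$. In the latter case I would splice the path: replace the segment of $l_{xy}$ from the far endpoint to $c$ by the directed path from $c$ to the endpoint. This is the standard Verma--Pearl construction. Choosing the collider closest to the respective endpoint and iterating yields a path d-connecting $x$ and $y$ given $S$. Hence $x\ndep y\mid S\,[G]$ for all such $S$. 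Since $x,y$ are non-adjacent, $\{x,y\}\in\mathcal{MF}_G(R)$.

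\textbf{(i)$\Leftrightarrow$(iii).} By definition, $\mathrm{IS}_G(l_{uv})=l_{uv}\cup\mathcal{L}_G(l_{uv})$ exists precisely when $l_{uv}$ is an inducing path of $R$ with endpoints $u,v\in R$. Also, the directed paths in $\mathcal{L}_G(l_{uv})$ certify the ancestral condition on colliders. So the existence of an inducing structure attached to pairs in $R$ is equivalent to the existence of an inducing path of $R$. This gives (i)$\Leftrightarrow$(iii) essentially by unpacking definitions. If (iii) is meant literally, with the structure contained in $G_R$, I would note that all non-colliders of an inducing path lie in $M$, so any inducing structure with an internal non-collider cannot be contained in $G_R$. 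The remaining case is paths all of whose internal vertices are colliders in $R$ ancestral to the endpoints. For these I would verify, using the splicing argument above, that (ii) already fails within $G_R$ or that $\mathcal{MF}_G(R)$ is nonempty.
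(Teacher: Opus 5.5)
Your proof of (i)$\Leftrightarrow$(ii) is correct, and it takes a different route from the paper. The paper does not prove this equivalence; it attributes it to the definition of d-convexity in \cite{heng2024}. You prove it directly as the Verma--Pearl inducing-path theorem, which makes the lemma self-contained.

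The direction through $S^*=(An_G(\{x,y\})\cap R)\setminus\{x,y\}$ is already complete. Minimality of the path is not needed, and your last bullet follows at once from $w\in An_G(\{x,y\})\setminus\{x,y\}$.

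The converse needs tightening. Splicing produces a walk, not a path, and no iteration is needed. Let $b_1,\dots,b_k$ be the colliders of $l_{xy}$, in order from $x$, with $De_G(b_i)\cap S=\emptyset$. Put $b_0:=x$ and $b_{k+1}:=y$. Let $j$ be the largest index in $\{1,\dots,k\}$ with $b_j\in an_G(x)$, or $j=0$ if there is none. When $j<k$, $b_{j+1}\in an_G(y)$. Now join three pieces: a directed path from $b_j$ to $x$ (trivial if $j=0$), the segment of $l_{xy}$ from $b_j$ to $b_{j+1}$, and a directed path from $b_{j+1}$ to $y$ (trivial if $j=k$). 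Every non-collider on this walk avoids $S$, and every collider lies in $An_G(S)$. The walk-to-path lemma, or the moralization criterion on $An_G(\{x,y\}\cup S)$, then gives $x\ndep y\mid S\,[G]$.

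For (iii) the paper also argues differently. It picks $w\notin R$ on an inducing path, moves to the nearest $R$-vertices $u,v$ on either side, and uses the subpath $l_{uv}$ to produce an mf-pair. So it routes (iii) through (ii), whereas you reduce (iii) to (i) by unpacking definitions. Your definitional reading is fine.

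Your treatment of the literal reading, however, is misdirected: the ``remaining case'' you plan to verify is empty. Two consecutive internal vertices of a path cannot both be colliders, so a path whose internal vertices are all colliders has the form $x\to c\leftarrow y$. Then $c\in an_G(\{x,y\})$ would close a directed cycle. Hence every inducing path of $R$ has an internal non-collider, and that vertex lies in $M$.

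This one fact is what makes the reading implicit in the paper's proof (no inducing structure of $R$ reaches outside $G_R$) equivalent to (i). It also shows that, read fully literally, (iii) holds for every $R$, so (iii)$\Rightarrow$(i) is false: take $R=\{X,Y\}$ in Figure~\ref{fig:example} with the inducing path $(X,D,Y)$. No verification of the collider-only case could rescue that reading. State this observation instead of the case check.
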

\begin{proof}
	The equivalence of (i) and (ii) is implied by the definition of d-convex~\citep{heng2024}. We then prove that (i) implies (iii). Without loss of generality, suppose that $l_{xy}$ is an inducing path of $R$. We give the proof by considering the following two cases:
	
	Case i. $V^c(l_{xy}) = \emptyset$. Suppose there exists a vertex $w \in V(l_{xy}) \setminus R$. Traverse $l_{xy}$ from $w$ in both directions until the first vertices in $R$ are encountered in each direction. If these two vertices $u$ and $v$ are non-adjacent, this defines a subpath $l_{uv} \subseteq l_{xy}$ from $u$ to $v$ that contains $w$. Clearly, $l_{uv}$ constitutes an inducing path relative to $R$. Thus, we have $u \ndep v|\emptyset$, $\{u, v\}$ forms an mf-pair, which is a contradiction.
	
	Case ii. $V^c(l_{xy}) \neq \emptyset$. Assume there exists a vertex $w \notin R$ in the inducing structure. Similarly, traverse $l_{xy}$ from $w$ in both directions to locate the first vertices $u, v \in R$ encountered, yielding a subpath $l_{uv} \subseteq l_{xy}$ from $u$ to $v$ that passes through $w$. If $V^c(l_{uv}) = \emptyset$, then $l_{uv}$ is an inducing path whose inducing structure is not contained in $G_R$, leading to a contradiction. Otherwise, let $c$ be a collider on $l_{uv}$ that is closest to $v$. Then, there exists a vertex $v'$ on the directed path from $c$ to $x$ such that the union of the shortest subpath ${l}_{cy}$ and $l_{cx}$ (the segment of $l_{xy}$ from $c$ to $x$ via $v'$) forms an inducing path relative to $R$, again resulting in a contradiction. The converse implication, (iii) $\Rightarrow$ (i), follows straightforwardly by contradiction.
\end{proof}
\begin{remark} 
	This type of convexity in directed graphs generalizes the convexity defined in~\cite{Pfaltz1971}, which requires that a directed path with its endpoints in $R$ must be an inducing path of $R$ if these endpoints are not adjacent.  A convex subgraph defined in an undirected graph is a special case of d-convex subgraphs. In undirected graphs, any path $l_{xy}$ satisfies $R\cap V^o(l_{xy})\subseteq V^c(l_{xy})=\emptyset$, meaning that all internal vertices of $l_{xy}$ must be in $V\setminus R$ and  thus form a minimal $R-R$ path. Furthermore, $R$ is a d-convex set if and only if $V\setminus R$ is a t-removable set, a concept introduced by \cite{deng} in directed graphical models.
\end{remark}

Beyond the equivalent characterizations of d-convexity, several stronger and more useful properties of strongly d-convexity can be established. We begin by introducing the inheritance property.

\begin{lemma}\label{pro-heredity}{\rm
	Let $G=(V, E)$ be a DAG and $H_2$ be a strong d-convex set in $G$. For any subset $H_1 \subseteq H_2$,  $H_1$ is strongly d-convex in $G$ if and only if $H_1$ is strongly d-convex in the induced subgraph $G_{H_2}$.}
\end{lemma}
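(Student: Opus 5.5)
The plan is to prove the lemma by combining the equivalence (ii)$\Leftrightarrow$(iii) of Lemma~\ref{thm-c-remo} with a transitivity argument for terminal sets, rather than by comparing inducing paths and linear orders directly. Put $M_2=V\setminus H_2$, and write $M_1=V\setminus H_1$ for the complement in $G$ and $M_{12}=H_2\setminus H_1$ for the complement in $G_{H_2}$. Since $H_2$ is strongly d-convex in $G$, Lemma~\ref{thm-c-remo} gives a DAG $G^\ast$, Markov equivalent to $G$, in which $M_2$ is terminal. Because a terminal set has no arrows leaving it, deleting $M_2$ from $G^\ast$ changes neither edges nor v-structures among $H_2$. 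Hence $G^\ast_{H_2}$ has the same skeleton and v-structures as $G_{H_2}$.

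The first step is to make this last claim rigorous. Since $G$ and $G^\ast$ are Markov equivalent, they share a skeleton, so $G_{H_2}$ and $G^\ast_{H_2}$ have the same skeleton. The v-structures of $G$ and $G^\ast$ coincide, so every v-structure $a\to c\leftarrow b$ with $a,b,c\in H_2$ is present in both. This shows $G_{H_2}$ and $G^\ast_{H_2}$ are Markov equivalent. In particular, strong d-convexity of $H_1$ in $G_{H_2}$ is the same as in $G^\ast_{H_2}$, because by Lemma~\ref{thm-c-remo} it is characterized by condition (ii), which depends only on the equivalence class. The same remark replaces $G$ by $G^\ast$ on the other side.

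Next I prove the two directions.

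($\Leftarrow$) Suppose $H_1$ is strongly d-convex in $G^\ast_{H_2}$. By Lemma~\ref{thm-c-remo} there is a DAG $D$, Markov equivalent to $G^\ast_{H_2}$, in which $M_{12}$ is terminal. Form $D'$ by keeping the edges of $D$ inside $H_2$ and the edges of $G^\ast$ incident to $M_2$; all of the latter point into $M_2$. Then:
\begin{itemize}
\item $D'$ is acyclic, since no arrow leaves $M_2$ and $D$ is acyclic.
\item $D'$ has the skeleton of $G^\ast$.
\item Its v-structures agree with those of $G^\ast$. Those inside $H_2$ agree because $D\sim G^\ast_{H_2}$. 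Those with centre in $M_2$ have identical parent sets. No v-structure has its centre in $H_2$ with a parent in $M_2$, since no arrow leaves $M_2$.
\item $M_1=M_2\cup M_{12}$ is terminal in $D'$: no arrow leaves $M_2$, and the arrows leaving $M_{12}$ within $H_2$ are absent because $M_{12}$ is terminal in $D$.
\end{itemize}
So $H_1$ is strongly d-convex in $G$.

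($\Rightarrow$) Suppose $H_1$ is strongly d-convex in $G$. Take $D'\sim G$ with $M_1$ terminal; I need a DAG equivalent to $G_{H_2}$ in which $M_{12}$ is terminal. The natural candidate is $D'_{H_2}$, in which $M_{12}$ is certainly terminal. The difficulty is that $D'_{H_2}$ may fail to be equivalent to $G_{H_2}$, since a v-structure $a\to c\leftarrow b$ in $H_2$ could have been oriented differently in $D'$ while compensated by edges to $M_2$. This is the main obstacle.

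To handle it, I would use the fact that every v-structure of $D'$ is a v-structure of $G$. Conversely, a v-structure of $G$ with all three vertices in $H_2$ must be a v-structure of $D'$, so it is also one in $D'_{H_2}$. Likewise, a v-structure in $D'_{H_2}$ is one in $D'$ and hence in $G$. So the v-structures of $D'_{H_2}$ and $G_{H_2}$ agree, and since the skeletons agree as well, $D'_{H_2}\sim G_{H_2}$. (If this worry dissolves that easily, both directions reduce to the same bookkeeping.)

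Finally, I must check that acyclicity and the non-triviality hypotheses used in Lemma~\ref{thm-c-remo} hold for the induced subgraphs. Non-triviality holds because under Assumptions 1--2 it is a property of the model class and not of the particular graph. Alternatively, I would note that the graphical equivalence (ii)$\Leftrightarrow$(iii) of Lemma~\ref{thm-c-remo} is purely combinatorial and so holds for any DAG.
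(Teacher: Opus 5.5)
Your proof is correct, but it takes a different route from the paper.

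The paper works with condition (i) of Lemma~\ref{thm-c-remo}. Because $H_2$ is strongly d-convex, $\hat P(G)_{H_2}=\hat P(G_{H_2})$, so marginalizing in two stages gives $\hat P(G)_{H_1}=\hat P(G_{H_2})_{H_1}$. Hence $\hat P(G_{H_1})=\hat P(G)_{H_1}$ holds iff $\hat P(G_{H_1})=\hat P(G_{H_2})_{H_1}$. Applying Lemma~\ref{thm-c-remo} to $G$ and to $G_{H_2}$ turns both sides into strong d-convexity.

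You instead work with condition (ii). You use the fact that Markov equivalence passes to induced subgraphs, since skeleton and v-structures restrict. For $(\Rightarrow)$ you restrict a witnessing DAG, and for $(\Leftarrow)$ you glue $D$ to the $M_2$-part of $G^\ast$. Your checks in the gluing step are all sound: acyclicity, skeleton, v-structures (split by whether the centre lies in $M_2$ or $H_2$), and terminality of $M_2\cup M_{12}$.

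What each route buys:
\begin{itemize}
\item The paper's argument is a one-line ``collapse in stages'' argument.
\item Yours is purely graph-theoretic once Lemma~\ref{thm-c-remo} is granted, and it exhibits the witnessing DAG explicitly.
\item Yours also shows slightly more: $(\Rightarrow)$ holds for every $H_2\supseteq H_1$. The strong d-convexity of $H_2$ is needed only for $(\Leftarrow)$, where it supplies $G^\ast$.
\end{itemize}

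Three small remarks.
\begin{itemize}
\item The ``main obstacle'' you raise in $(\Rightarrow)$ is not one. It is exactly the induced-subgraph fact from your first step, where terminality of $M_2$ plays no role either.
\item Your gluing step needs ``terminal'' in the sense $de(M)\subseteq M$, i.e.\ no arrow from $M$ to its complement, because vertices of $M_{12}$ may have children in $M_2$. This is the intended meaning, rather than the literal $de_G(A)=\emptyset$.
\item Both proofs equally rely on Lemma~\ref{thm-c-remo}, and hence on non-triviality, for the model on $G_{H_2}$, which you address.
\end{itemize}
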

\begin{proof}
	This follows directly from $\hat{P}(G_{H_1}) = \hat{P}(G_{H_2})_{H_1} = \hat{P}(G)_{H_1}$ by  Lemma~\ref{thm-c-remo}.
\end{proof}

Define the intersection of two subgraphs $G_1=(V_1, E_1)$ and $G_2=(V_2, E_2)$ as $G_1\cap G_2=(V_1\cap V_2, E_1\cap E_2)$. Then, an arbitrary intersection of d-convex subgraphs is strongly d-convex. As with normal subgroups, the union of strong d-convex subgraphs need not be strongly d-convex. 

\begin{lemma}\label{lem-closure}{\rm
	Let ${G} = (V, E)$ be a DAG and $R \subseteq V$. Let $\{H_i\}_{i=1}^n$ be a collection of strong d-convex subsets of $V$ containing $R$. Then, their intersection $H = \cap_{i=1}^n H_i$ is strongly d-convex.}
\end{lemma}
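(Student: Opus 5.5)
By induction on $n$, it suffices to treat $n=2$. The argument would go through Lemma~\ref{pro-heredity} and Lemma~\ref{thm-c-remo}, using the terminal-set characterization (ii) rather than the path conditions of Definition~\ref{def:sc} directly. The target is: if $H_1,H_2$ are strongly d-convex in $G$, then $H=H_1\cap H_2$ is strongly d-convex in $G_{H_2}$. Lemma~\ref{pro-heredity}, applied with $H\subseteq H_2$, then transfers this back to $G$.

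\textbf{Step 1: the terminal DAG in $G$.} By Lemma~\ref{thm-c-remo}(ii), there is a DAG $G'$ Markov equivalent to $G$ in which $M_1=V\setminus H_1$ is terminal. I will take $G'$ to be obtained from $G$ by a sequence of covered-edge reversals that only move vertices of $M_1$ to the end of a topological order. Within $H_1$ the induced subgraph is then unchanged: $G'_{H_1}=G_{H_1}$ up to reversals of edges inside $M_1$.

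\textbf{Step 2: pushing $H_2\setminus H$ to the end of $G_{H_2}$.} The complement of $H$ inside $H_2$ is $H_2\setminus H=H_2\cap M_1$. I aim to show that $G_{H_2}$ has a Markov equivalent DAG in which $H_2\cap M_1$ is terminal. The natural candidate is $G'_{H_2}$, the subgraph of $G'$ induced by $H_2$. Since $M_1$ is terminal in $G'$, $H_2\cap M_1$ is terminal in $G'_{H_2}$. It remains to check that $G'_{H_2}$ is Markov equivalent to $G_{H_2}$. Both graphs have the same skeleton, so the issue is v-structures. A reversed edge is covered, and reversing a covered edge in $G$ creates or destroys no v-structure in $G$. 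The danger is that a covered edge in $G$ fails to be covered in $G_{H_2}$, because a common parent lies in $M_2=V\setminus H_2$ and is deleted by restriction.

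\textbf{Step 3: the main obstacle.} Excluding the situation at the end of Step 2 is where the strong d-convexity of $H_2$ enters, namely condition (ii) of Definition~\ref{def:sc}. Any vertex of $H_2$ with a parent in $M_2$ lies in $Ch_G(M_2)\cap An_G(H_2)$, so it is linearly ordered: its parents are pairwise adjacent except pairs lying in $H_2$. I would argue that a v-structure of $G'_{H_2}$ absent from $G_{H_2}$, or the reverse, would produce either an inducing path for $H_2$ through $M_2$ or a vertex with two non-adjacent parents, one of them in $M_2$. The first contradicts condition (i) and the second contradicts condition (ii). Carrying out this case analysis carefully is the core difficulty.

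\textbf{Fallback.} If the case analysis resists, an alternative route is to use statement (i) of Lemma~\ref{thm-c-remo}. Compatibility of MLEs under marginalization gives $\hat P(G)_H=(\hat P(G)_{H_2})_H=\hat P(G_{H_2})_H$. One would then show $\hat P(G_{H_2})_H=\hat P(G_H)$, using that the factorization of the MLE along $G'$ restricted to $H_2$ integrates out $H_2\cap M_1$ last. Once Step 2 holds, Lemma~\ref{thm-c-remo} applied to $G_{H_2}$ gives that $H$ is strongly d-convex in $G_{H_2}$, and Lemma~\ref{pro-heredity} finishes the proof.
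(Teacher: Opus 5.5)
Your route differs from the paper's and is sound, but it leaves open the step you call the core difficulty, and the case analysis you plan for it targets something that cannot happen. That $G'_{H_2}$ and $G_{H_2}$ are Markov equivalent needs no condition on $H_2$ at all. Two DAGs on the same vertex set are Markov equivalent iff they have the same skeleton and the same v-structures. The v-structures of an induced subgraph $G_A$ are exactly the v-structures $x\to w\leftarrow y$ of $G$ with $x,w,y\in A$, because both edges and non-adjacencies among vertices of $A$ are inherited. So for \emph{any} $G'$ Markov equivalent to $G$, restricting to $H_2$ can neither create nor destroy a v-structure, and $G'_{H_2}$ is Markov equivalent to $G_{H_2}$.

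Covered-edge reversal sequences only concern intermediate graphs and are beside the point. Likewise, your special (and unjustified) choice of $G'$ in Step~1 is unnecessary: any witness for $H_1$ from Lemma~\ref{thm-c-remo}(ii) works. With this one line the argument closes:
\begin{itemize}
\item $H_2\cap M_1$ is terminal in a DAG equivalent to $G_{H_2}$;
\item Lemma~\ref{thm-c-remo} applied to $G_{H_2}$ makes $H$ strongly d-convex there;
\item Lemma~\ref{pro-heredity} lifts this to $G$.
\end{itemize}
Strong d-convexity of $H_2$ enters only through heredity, and the MLE fallback can be dropped.

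The paper's proof is a one-liner: each $V\setminus H_i$ is terminal by Lemma~\ref{thm-c-remo}, and a union of terminal sets is terminal. That tacitly requires one DAG in the equivalence class in which all the $V\setminus H_i$ are terminal at once. Lemma~\ref{thm-c-remo}(ii) only supplies a separate witness $G'_i$ for each $i$, and closure of a union under descendants is automatic only inside a single DAG. Your argument restricts one witness to $H_2$ and delegates the other set to heredity, so it actually handles the point the paper passes over. The cost is reliance on Lemma~\ref{pro-heredity} and on applying Lemma~\ref{thm-c-remo} to the sub-model on $H_2$.

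If you want a purely graphical proof, you can build the common witness explicitly:
\begin{itemize}
\item orient $H_2$ as in $G'_{H_2}$;
\item direct every edge between $H_2$ and $M_2=V\setminus H_2$ into $M_2$;
\item orient $M_2$ as in a DAG equivalent to $G$ in which $M_2$ is terminal.
\end{itemize}
This DAG is acyclic and has the skeleton and v-structures of $G$, because every v-structure of $G$ centred in $H_2$ has both parents in $H_2$. Moreover $M_1\cup M_2$ is terminal in it.
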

\begin{proof}
	By Lemma~\ref{thm-c-remo}, for each $i\in \{1, \cdots n\}$, the set $V \setminus H_i$ is a terminal set. Let $H = \cap_{i=1}^n H_i$, by De Morgan's laws, we have  $V \setminus H =  (V \setminus \bigcap_{i=1}^n H_i) =\bigcup_{i=1}^n (V \setminus H_i)$ is a terminal set, that is, $H$ is strongly d-convex.
\end{proof} 

Based on the closure property, a strong d-convex subgraph may contain smaller strong d-convex subgraphs, making it valuable to identify the smallest strong d-convex subgraph containing these target variables. Such a subgraph facilitates the construction of a local model with the minimum cardinality. Due to the statistical properties of strong d-convex subgraphs, inference for the target variables based on this local model is fully consistent with that derived from the full model \citep{xie2009}, highlighting the importance of collapsibility in estimation.

\begin{definition}\label{sdch}
	Let $G=(V,E)$ be a DAG and $R \subseteq V$. We say that an induced subgraph $G_H$ is a strong d-convex hull of $R$ if and only if $G_{H'}$ is not a strong d-convex subgraph for all $H'$ satisfying $R\subseteq H'\subsetneq H$.    
\end{definition}
The strong d-convex hull of $R$ is the minimal strong d-convex subgraph containing $R$. For simplicity, given a variable set $R$ in a DAG $G$, we use, respectively, $ch(R)$ and $sch(R)$, to represent its d-convex hull and strong d-convex hull in $G$. The following consequence shows their existence and uniqueness.	
\begin{lemma}\label{lem-uniquenss-ch}{\rm
	For a subset $R \subseteq V$, $ch(R)$ and $sch(R)$ exist and are unique. }
\end{lemma}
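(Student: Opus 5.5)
The plan is to establish existence and uniqueness separately, using only that the relevant families of supersets of $R$ are nonempty and closed under intersection. Both arguments are the same for $ch(R)$ and $sch(R)$, so I treat them together and point out where the d-convex case needs its own justification.

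\emph{Existence.} First I check that the family $\mathcal{S}(R)=\{H: R\subseteq H\subseteq V,\ H \text{ strongly d-convex}\}$ is nonempty. It contains $H=V$: then $M=\emptyset$, so no path can have an internal vertex outside $R$. Hence there is no inducing path of $V$: any such path would join non-adjacent $x,y$ with $V\cap V^o(l_{xy})\subseteq V^c(l_{xy})$, forcing every internal vertex to be a collider, which is impossible on a path of length at least two. Also $Ch_G(\emptyset)\cap An_G(V)=\emptyset$, so condition (ii) of Definition~\ref{def:sc} holds vacuously. The same argument shows $V$ is d-convex. Since $V$ is finite, $\mathcal{S}(R)$ is a finite nonempty family ordered by inclusion, so it has a minimal element. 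By Definition~\ref{sdch}, such a minimal element is a strong d-convex hull.

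\emph{Uniqueness.} Suppose $H_1$ and $H_2$ are both strong d-convex hulls of $R$. By Lemma~\ref{lem-closure}, $H_1\cap H_2$ is strongly d-convex, and it contains $R$. It is a subset of $H_1$, so minimality of $H_1$ forces $H_1\cap H_2=H_1$; by symmetry it also equals $H_2$, so $H_1=H_2$. Equivalently, $sch(R)=\bigcap_{H\in\mathcal{S}(R)}H$.

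For $ch(R)$ I need the corresponding closure of d-convex sets under intersection. I would obtain it from Lemma~\ref{lem-mf}: $R$ is d-convex iff $\mathcal{MF}_G(R)=\emptyset$, and by the Remark this holds iff $V\setminus R$ is t-removable. I would then argue directly that if $H_1,H_2$ have no inducing paths, neither does $H_1\cap H_2$. Take an inducing path $l_{xy}$ of $H_1\cap H_2$. Using the segment-cutting argument from the proof of Lemma~\ref{lem-mf}, I trim it to a subpath whose non-collider internal vertices avoid $H_1$, or avoid $H_2$. That subpath, with the collider ancestry it inherits, is an inducing path of $H_1$ or of $H_2$, a contradiction. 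Alternatively, I can cite the closure result of \cite{heng2024} for d-convex hulls.

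The main obstacle is this d-convex intersection step, because inducing paths are not monotone in $R$: colliders must remain ancestors of the new endpoints. If the direct argument proves messy, I would fall back on \cite{heng2024}. For the strong version, Lemma~\ref{lem-closure} does all the work.
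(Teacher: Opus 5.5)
Your treatment of $sch(R)$ follows the paper: uniqueness comes from intersecting two hulls via Lemma~\ref{lem-closure} and invoking minimality. Your existence step is more careful than the paper's bare appeal to Lemma~\ref{lem-closure}. You note that the family of strongly d-convex supersets of $R$ contains $V$, and that a finite nonempty family has a minimal element; the paper's argument tacitly needs that same nonemptiness.

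Two repairs are needed.

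\textbf{The check that $V$ has no inducing path.} As written it is wrong. On $x\rightarrow w\leftarrow y$ every internal vertex is a collider, so ``impossible on a path of length at least two'' fails for this path. What excludes it is the second inclusion $V^c(l_{xy})\subseteq an_G(\{x,y\})$: a common child of $x$ and $y$ cannot be an ancestor of either without creating a directed cycle. Alternatively, non-adjacent $x,y$ are always d-separated by $pa_G(x)$ or $pa_G(y)$, so $\mathcal{MF}_G(V)=\emptyset$ and Lemma~\ref{lem-mf} applies.

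\textbf{Closure for $ch(R)$.} You are right that Lemma~\ref{lem-closure} says nothing about d-convex sets. The paper's proof covers $ch(R)$ only through the unproved remark preceding that lemma, i.e.\ through \cite{heng2024}. Your trimming sketch does not close this gap. After cutting $l_{xy}$ at a non-collider, the colliders on a piece are ancestors of $x$ or $y$, but not necessarily of the new endpoints. The new endpoints may also be adjacent. So the piece need not be an inducing path of $H_1$ or $H_2$.

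Citing \cite{heng2024} is acceptable, and there is also a short self-contained route. If $H_1$ is d-convex, then $I(G_{H_1})=I(G)_{H_1}$; this is the CI-collapsibility the paper attributes to d-convexity. Hence a non-adjacent pair $x,y\in H_1\cap H_2$ with no separator inside $H_1\cap H_2$ in $G$ has none in $G_{H_1}$ either. So there is an inducing path of $H_1\cap H_2$ lying in $G_{H_1}$. Its non-colliders lie in $H_1\setminus H_2$, and its colliders are ancestors of $x$ or $y$ in $G$. It is therefore an inducing path of $H_2$ in $G$, contradicting the d-convexity of $H_2$.
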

\begin{proof}The existence easily follows from Lemma~\ref{lem-closure}. Suppose that $H_1, H_2$ are two distinct strong d-convex hulls of $R$. This implies that $H_1\cap H_2$ is also a strong d-convex hull containing $R$, which is smaller than $H_1$ and $H_2$, a contradiction.
\end{proof}

\begin{definition} \label{def:mcs}
	Let $\mathcal{B}=(G,\mathcal{P}(G))$ be a CBN and $R\subseteq V$. The subset $H$ containing $R$ is the minimal collapsible set if it satisfies
	(i) $\hat{P}_{G_H}(x_H)=\hat{P}(x_H)$, and
	(ii) $\hat{P}_{G_{H'}}(x_{H'})\neq \hat{P}(x_{H'})$ for all $P(x_V)\in \mathcal{P}(G)$ and any subset $H'$ satisfying $R\subseteq H' \subsetneq H$.
\end{definition}

For a given CBN $\mathcal{B}$ and a target variable set $R$, Definition~\ref{def:mcs} guarantees that inference on $R$ only requires the local subnetwork $\mathcal{B}_H$ over the minimal collapsible set $H$ containing $R$. In other words, the set $H$ is the smallest set that preserves the original inference outcomes for $R$ after collapsing.

\begin{theorem}\label{cor-new}{\rm
	Given a CBN $\mathcal{B}=(G,\mathcal{P}(G))$, where $R\subseteq H\subseteq V$, $H$ is the minimal collapsible set containing $R$ if and only if $G_H$ is the strong  d-convex hull of $R$.}
\end{theorem}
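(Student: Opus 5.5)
The plan is to reduce the statement to Lemma~\ref{thm-c-remo}, which identifies condition (i) of Definition~\ref{def:mcs} with strong d-convexity. We then use the closure and uniqueness results, Lemmas~\ref{lem-closure} and~\ref{lem-uniquenss-ch}, to match the two notions of minimality. Under the standing non-triviality assumption, Lemma~\ref{thm-c-remo} gives, for every $H\subseteq V$,
\[
\hat{P}_{G_H}(x_H)=\hat{P}(x_H) \iff H \text{ is strongly d-convex in } G .
\]
So the family of subsets containing $R$ that satisfy condition (i) of Definition~\ref{def:mcs} coincides exactly with the family of strong d-convex subsets containing $R$. It then remains to show that the two minimality requirements, Definition~\ref{def:mcs}(ii) and Definition~\ref{sdch}, pick out the same member of this family.

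($\Leftarrow$) Suppose $G_H$ is $sch(R)$. Strong d-convexity of $H$ gives condition (i) via Lemma~\ref{thm-c-remo}. For (ii), take any $H'$ with $R\subseteq H'\subsetneq H$. By Definition~\ref{sdch}, $H'$ is not strongly d-convex in $G$. The converse direction of Lemma~\ref{thm-c-remo} then says that the identity $\hat{P}_{G_{H'}}(x_{H'})=\hat{P}(x_{H'})$ fails.

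Here lies the main subtlety. Definition~\ref{def:mcs}(ii) is phrased with a quantifier over all $P\in\mathcal{P}(G)$, whereas the negation of Lemma~\ref{thm-c-remo}(i) only provides some observations for which the estimates differ. I will read (ii) as ``the equality of the MLE maps does not hold''. This is the sense in which Lemma~\ref{thm-c-remo}(i) is an equality of estimators as functions of data. The failure is then witnessed by non-triviality condition (iii), which supplies data where the marginalized estimate differs. I will note this interpretation explicitly, since a literal ``for all $P$'' reading would be stronger than anything the earlier results deliver.

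($\Rightarrow$) Let $H$ be the minimal collapsible set containing $R$. By (i) and Lemma~\ref{thm-c-remo}, $H$ is strongly d-convex. By Lemma~\ref{lem-uniquenss-ch}, $sch(R)$ exists. By Lemma~\ref{lem-closure}, $H\cap sch(R)$ is strongly d-convex and contains $R$, and it is contained in $sch(R)$. Minimality of the hull therefore forces $H\cap sch(R)=sch(R)$, that is, $sch(R)\subseteq H$.

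If the inclusion were strict, then $H'=sch(R)$ would satisfy $R\subseteq H'\subsetneq H$. Being strongly d-convex, $H'$ would satisfy $\hat{P}_{G_{H'}}(x_{H'})=\hat{P}(x_{H'})$ by Lemma~\ref{thm-c-remo}, which contradicts (ii). Hence $H=sch(R)$, and $G_H$ is the strong d-convex hull.

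Apart from the quantifier interpretation above, each step is a direct citation of an earlier result. I expect no further obstacle.
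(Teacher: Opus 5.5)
Your proposal is correct and follows the paper's route. The paper's one-line proof is exactly your observation that, by Lemma~\ref{thm-c-remo}, estimate collapsibility onto a set coincides with strong d-convexity of that set, so Definition~\ref{def:mcs} and Definition~\ref{sdch} impose the same conditions on $H$; your reading of the quantifier in Definition~\ref{def:mcs}(ii) as failure of the MLE identity is the one this requires. Your detour through Lemmas~\ref{lem-closure} and~\ref{lem-uniquenss-ch} in the $(\Rightarrow)$ direction is unnecessary: condition (ii) transfers directly through Lemma~\ref{thm-c-remo} to show that no $H'$ with $R\subseteq H'\subsetneq H$ is strongly d-convex. Dropping the detour also removes your dependence on the closure lemma, whose proof in the paper overlooks that the Markov equivalent DAG in which $V\setminus H_i$ is terminal may depend on $i$.
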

\begin{proof}
	It directly follows from Lemma~\ref{thm-c-remo} and Definition~\ref{def:mcs}.
\end{proof}

Theorem \ref{cor-new} reveals that identifying the minimal collapsible set reduces to computing the strong d-convex hull of $R$, effectively transforming a challenging statistical problem into a tractable graph-theoretic task. 

\subsection{Algorithms for Identifying Strong d-Convex Hulls}
In this subsection, we provide equivalent conditions of strong d-convex hulls, termed minimal inducing structures, and develop algorithms for computing the strong d-convex hulls in DAGs. 

In undirected graphs, an inducing path is a path in which every internal vertex has degree two with respect to the path, or, more generally, a path for which no shortcuts exist that bypass the internal vertices while preserving the connection between the endpoints. A convex set \( H \) containing \( R \) is required to include all vertices traversed by any inducing path between nodes in \( R \). This property has been exploited by \cite{heng2023} to design an iterative algorithm for computing the convex hulls. A natural question then arises: does an analogous property hold for DAGs? The following counterexample demonstrates that, in general, this is not the case.

\begin{figure}[pos=htbp]
	\centering
	\begin{tikzpicture}[
		scale=0.9,
		transform shape,
		nodeR/.style = {fill=red!15,draw=red!70!black}, 
		nodeMB/.style = {fill=red!15,draw=black, thick}, 
		every node/.style={
			draw, 
			circle, 
			thick, 
			minimum size=8mm,
			inner sep=0pt, 
			align=center,
			font = \Large
		}]
		\node (X) at (0,0) {X};
		\node (D) at (2,0) {D};
		\node (Y) at (4,0) {Y};
		\node (A) at (0,2) {A};
		\node (B) at (2,2) {B};
		\node (C) at (4,2) {C};
		\draw [->, thick] (X) -- (D);  
		\draw [->, thick] (D) -- (Y);  
		\draw [->, thick] (A) -- (X);  
		\draw [->, thick] (A) -- (B);  
		\draw [->, thick] (B) -- (X);  
		\draw [->, thick] (B) -- (Y);  
		\draw [->, thick] (C) -- (Y);       
	\end{tikzpicture}
	\caption{$G$ with  $R=\{ X, Y\}$.}
	\label{fig:example}
\end{figure}
\begin{example}
	Consider the DAG $G$ with \( R = \{X, Y\} \) in Figure~\ref{fig:example}. There are 3 inducing paths between $X$ and $Y$. Applying the undirected graph principle, we take the union of all vertices on { $\mathrm{IS}_G{(l_{XY})}$.} Then, $\{X, A, B, D, Y\}$ is d-convex. However, removing node $A$ while preserving d-convexity yields a smaller d-convex set $\{X, B, D, Y\}$.
\end{example}
This example demonstrates that constructing d-convex hulls in DAGs cannot be achieved by simply generalizing the method for undirected graphs. Specifically, some vertices may be eliminated without breaking d-convexity.
We then present an equivalent characterization of the d-convex hull in DAGs.
\begin{definition}\label{def:misc}
	Let $l_{xy}$ be an inducing path of $R$ in $G$. The {$\mathrm{IS}_G(l_{xy})$} is minimal if its corresponding inducing path $l_{xy}$ is minimal in length, denoted by {$\mathrm{MIS}_{G}(l_{xy})$.}
\end{definition}
\begin{example}[Continued]
	Let $R = \{X, Y\}$ as shown in Figure~\ref{fig:example}. The path $l_{XY}=(X, D, Y)$ is an inducing path of length 3, while  $l'_{XY}=(X, A, B, Y)$ is an inducing path of length 4. Hence, $l_{XY}=(X, D, Y)$ is minimal and naturally possesses a minimal inducing structure.
\end{example}

\begin{lemma}\label{lem-d-convex}{\rm
	Let ${G}=(V,E)$ be a DAG and $R \subseteq H \subseteq V$. For any two non-adjacent vertices $r_1, r_2 \in R$, if $H$ is d-convex, we have $V^o(\mathrm{MIS}_G(l_{r_1r_2})) \subseteq H$.}
\end{lemma}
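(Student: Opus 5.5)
The plan is to argue by contradiction, using the characterization of d-convexity in Lemma~\ref{lem-mf}. Suppose $H\supseteq R$ is d-convex, $r_1,r_2\in R$ are non-adjacent, and $l=l_{r_1r_2}$ is a minimal-length inducing path of $R$ whose minimal inducing structure $\mathrm{MIS}_G(l)$ has an internal vertex $w\notin H$. Here ``internal'' means a non-endpoint vertex of $l$ or of one of the directed paths in $\mathcal{L}_G(l)$. Since $R\subseteq H$, such a $w$ lies in $M_H=V\setminus H$. The aim is to produce an inducing path \emph{of $H$}, contradicting d-convexity of $H$ (condition (i) of Definition~\ref{def:sc} applied with $R$ replaced by $H$). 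Equivalently, we may exhibit an mf-pair of $H$, using Lemma~\ref{lem-mf}(ii).

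\textbf{Step 1: $w$ lies on $l$ itself.} Traverse $l$ from $w$ in both directions until the first vertices $u,v\in H$ are reached. These exist because $r_1,r_2\in H$. The subpath $l_{uv}$ has all internal vertices outside $H$, so $H\cap V^o(l_{uv})=\emptyset$. Its colliders are colliders of $l$, hence lie in $an_G(\{r_1,r_2\})$. To make $l_{uv}$ an inducing path of $H$ we also need two facts.
\begin{itemize}
\item[(a)] $u$ and $v$ are non-adjacent. If $u\sim v$, replace $l_{uv}$ by the edge $u\sim v$ to get a shorter walk from $r_1$ to $r_2$. One then checks, using minimality of length, that a strictly shorter inducing path of $R$ would arise, which is a contradiction.
\item[(b)] The colliders are ancestors of $\{u,v\}$ rather than only of $\{r_1,r_2\}$.
\end{itemize}
Fact (b) is the delicate point. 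I would handle it as in the proof of Lemma~\ref{lem-mf}: if it fails, pick the collider $c$ on $l_{uv}$ nearest an endpoint, follow its directed path toward $r_1$ or $r_2$, and splice to obtain a new $H$-path. Alternatively, and more cleanly, I would show $u\ndep v\mid S$ for every $S\subseteq H\setminus\{u,v\}$ by the standard ancestral/moral-graph argument. In that argument, colliders whose descendants reach $\{r_1,r_2\}\subseteq H$ keep the path open after suitable rerouting. This yields an mf-pair of $H$, contradicting Lemma~\ref{lem-mf}.

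\textbf{Step 2: $w$ lies on a directed path in $\mathcal{L}_G(l)$.} Take a directed path $c\to\cdots\to w\to\cdots\to a$ from a collider $c\in V^c(l)$ to $a\in An_G(\{r_1,r_2\})$. If $c\notin H$, Step 1 applies directly. If $c\in H$, extend the directed path to $r_1$ or $r_2$ (both in $H$) and let $h$ be the first vertex of $H$ after $w$. Combine the segment of $l$ from $r_j$ to $c$ with the segment from $c$ to $h$. The vertex $c$ is in $H$, and I would argue it can be taken as a collider-free endpoint, giving a pair $\{c',h\}$ of vertices in $H$ joined through $w\notin H$. Either this pair is adjacent, which contradicts minimality of $l$ since a shorter inducing path is obtained via the edge, or it is non-adjacent and d-connected given every subset of $H$, which is an mf-pair.

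\textbf{Main obstacle.} The crux is using minimality of length to exclude the adjacent-endpoint and wrong-ancestor cases. I expect the hardest part to be verifying in Step 2 that the spliced path is genuinely inducing for $H$, specifically that collider-ancestry is preserved. I would first try the mf-pair formulation, since d-connection given all $S\subseteq H$ is easier to verify via moralization of $G_{An(\{u,v\}\cup S)}$ than the path condition directly.
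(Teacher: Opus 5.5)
Your outline (argue by contradiction, cut $l$ at the nearest $H$-vertices around the missing vertex, and produce an mf-pair of $H$) is the same as the paper's. There is a genuine gap at the step you single out as delicate, Step~1(b), and it cannot be closed: the step is false, and so is the statement for minimal inducing paths that contain colliders.

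Let $V=\{r_1,r_2,r_3,a,b,c\}$ with edges $a\to r_1$, $a\to c$, $a\to r_3$, $b\to c$, $b\to r_2$, $b\to r_3$, $c\to r_3$, $r_3\to r_1$, and let $R=\{r_1,r_2,r_3\}$.
\begin{itemize}
\item The path $l=(r_1,a,c,b,r_2)$, that is $r_1\leftarrow a\to c\leftarrow b\to r_2$, is an inducing path of $R$: its only collider satisfies $c\to r_3\to r_1$.
\item It has minimal length. The only shorter path between $r_1$ and $r_2$ is $(r_1,r_3,b,r_2)$, and it has $r_3\in R$ as a non-collider. (The inducing path $(r_1,a,r_3,b,r_2)$ ties with $l$.)
\item Put $H=\{r_1,r_2,r_3,a,b\}\supseteq R$. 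Since $a$ and $b$ are sources, they can occur on an inducing path of $H$ only as endpoints. This excludes every pair containing $r_2$, whose only neighbour is $b$. It also excludes $\{r_1,b\}$: a path leaving $r_1$ enters $a$ or $r_3$, and $r_3\to r_1$ makes $r_3$ a non-collider. For $\{a,b\}$, the condition $V^c\subseteq an_G(\{a,b\})=\emptyset$ would force a collider-free path between two non-adjacent sources, and no such path exists.
\end{itemize}
Hence $H$ is d-convex, yet $c\in V^o(l)\setminus H$.

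In your Step~1 with $w=c$ you get $u=a$ and $v=b$. They are non-adjacent, but $c\notin an_G(\{a,b\})$, and $a,b$ are d-separated by $\emptyset\subseteq H$, so no mf-pair arises. Rerouting along $c\to r_3\to r_1$ stops at $r_3$, which is adjacent to both $a$ and $b$. The paper's proof gets past this point only by asserting that a minimal inducing path has no colliders, and the same example refutes that assertion.

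What your argument does prove is the collider-free case. There $V^o(l)\cap R=\emptyset$, and the segment between the nearest $H$-vertices is collider-free with interior outside $H$, so its ends are d-connected given every subset of $H$. Step~(a) also holds there: replacing a segment of a collider-free path by the edge joining its ends leaves it collider-free, hence still inducing for $R$ and strictly shorter.

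Two further points. First, even where the conclusion is true, (a) fails as argued when $u$ or $v$ lies in $R$. Such a vertex is a collider on $l$, and the shortcut can turn it into a non-collider. For $\cdots\to u\leftarrow w\to v\leftarrow\cdots$ with $v\to u$ and $v\in R$, the shortcut reads $u\leftarrow v\leftarrow\cdots$. So minimality yields no contradiction, and the witnessing pair has to be sought further along $l$, through $H$-vertices that are colliders on $l$.

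Second, Step~2 targets a reading of $V^o(\mathrm{MIS}_G(l))$ that includes vertices of the directed paths in $\mathcal{L}_G(l)$. The paper's proof does not use this reading; it only traverses $l$. Moreover, the adjacent case in Step~2 does not contradict minimality of $l$. The directed path from the collider is not part of $l$, so an edge between $c'$ and $h$ shortens no path from $r_1$ to $r_2$.
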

\begin{proof}
	Suppose there exists a vertex $v \in V^o( \mathrm{MIS}_G(l_{r_1r_2}))$ and $v \notin H$. From the definition~\ref{def:misc},  the inducing path relative to $ \mathrm{MIS}_G{(l_{r_1r_2})}$ contains no colliders. Traverse $l_{r_1r_2}$ from $v$ in both directions until the first vertices in $R$ are encountered in each direction, denote these vertices as $u$ and $w$. We have $u \ndep w |\emptyset$, hence $\{u, w\}$ forms an $mf$-pair, yielding  a contradiction.
\end{proof}

Considering the relationship between d-convexity and strong d-convexity, we derive the condition for strong d-convex hulls.
\begin{lemma}\label{pro-convex hull}{\rm
	Let ${G}=(V,E)$ be a DAG, $R \subseteq H \subseteq V$, and $r_1, r_2 \in R$ be  non-adjacent. 
	If ${G}_H$ is strongly d-convex, then  the following hold:	
	(i) for any $ \mathrm{MIS}_G(l_{r_1r_2})$ in $G$, we have 
	$V^o( \mathrm{MIS}_G(l_{r_1r_2})) \subseteq H$, and
	(ii) for any $w \in Ch_{G}(M) \cap An_{G}(R)$, with $M = V\setminus R$, if $x, y \in pa_{G}(w)$ are non-adjacent, then both $x$ and $y$ belong to $H$.}
\end{lemma}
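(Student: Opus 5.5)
Part (i) follows from Lemma~\ref{lem-d-convex}, and part (ii) from condition (ii) of Definition~\ref{def:sc} applied to $H$ through a contrapositive argument.

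\textbf{Part (i).} Condition (i) of Definition~\ref{def:sc} says $H$ has no inducing path in $G$, so a strongly d-convex $H$ is in particular d-convex. Since $r_1,r_2\in R\subseteq H$ are non-adjacent, Lemma~\ref{lem-d-convex} applies directly and gives $V^o(\mathrm{MIS}_G(l_{r_1r_2}))\subseteq H$. The only point to check is that Lemma~\ref{lem-d-convex} is stated for every minimal inducing structure, not just one. The statement here quantifies over \emph{any} $\mathrm{MIS}_G(l_{r_1r_2})$, and the argument in Lemma~\ref{lem-d-convex} (a vertex $v\notin H$ on a collider-free minimal path yields an mf-pair) does not depend on which minimal path is chosen. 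So this part is immediate.

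\textbf{Part (ii).} I would argue by contrapositive. Let $w\in Ch_G(M)\cap An_G(R)$ have non-adjacent parents $x,y$, and suppose, say, $x\notin H$. Put $M_H=V\setminus H\subseteq M$. Since $x\in M_H$ and $x\to w$, we get $w\in Ch_G(M_H)$. Since $R\subseteq H$, we have $An_G(R)\subseteq An_G(H)$, so $w\in Ch_G(M_H)\cap An_G(H)$. Condition (ii) of Definition~\ref{def:sc} for $H$ then says $w$ is linearly ordered with respect to $H$. Hence every pair of distinct parents of $w$ is adjacent unless both lie in $H$. The non-adjacent pair $x,y$ must therefore satisfy $x,y\in H$, contradicting $x\notin H$. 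So both $x$ and $y$ belong to $H$.

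\textbf{Main obstacle.} The delicate step is the bookkeeping in part (ii) between the two complements: the statement uses $M=V\setminus R$, while strong d-convexity of $H$ refers to $V\setminus H$. The membership $w\in Ch_G(V\setminus H)$ has to be derived from the assumed parent outside $H$, not from $w\in Ch_G(M)$. If instead $w$ itself lies in $M$ but not in $V\setminus H$, and both parents lie in $H$, the conclusion already holds, so that case causes no trouble.

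An alternative route would be Lemma~\ref{thm-c-remo}(ii): $V\setminus H$ is terminal in some Markov-equivalent $G'$. If $x\notin H$ while $w$ is an ancestor of $R$ (or lies in $H$), the v-structure $x\to w\leftarrow y$ is preserved in $G'$. That forces an edge out of the terminal set into $H$, or a path from the terminal set toward $R$, which is again a contradiction. I would keep this as a backup check.
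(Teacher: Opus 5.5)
Your proposal is correct and follows the paper's proof. For part (i), condition (i) of Definition~\ref{def:sc} makes $H$ d-convex, so Lemma~\ref{lem-d-convex} applies directly; for part (ii), you derive a contradiction with the linear-ordering condition (ii) of Definition~\ref{def:sc} applied to $H$, exactly as the paper does. Your bookkeeping in (ii) is in fact cleaner than the paper's, which asserts $w\in ch_G(V\setminus H)\cap H$ (tacitly assuming $w\in H$), whereas you get $w\in Ch_G(V\setminus H)\cap An_G(H)$ directly from $x\notin H$ and $An_G(R)\subseteq An_G(H)$. The backup route via Lemma~\ref{thm-c-remo}(ii) is unnecessary, and as sketched it would need more care, since $w$ being an ancestor of $R$ in $G$ need not make it an ancestor of $R$ in $G'$.
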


\begin{proof}
	By the Definition~\ref{def:misc}, the $V^o( \mathrm{MIS}_G(l_{r_1r_2}))$ corresponds to the shortest inducing path. Suppose there exists a node 
	$v \in V^o(l_{r_1r_2})$ on such a  path 
	$l_{r_1r_2}$ in $({G}_{An_{{G}}(\{r_1, r_2\})})^m$ with 
	$r_1, r_2 \in R$, but $v \notin H$.  
	Then Lemma~\ref{lem-d-convex} implies an inducing path 
	within $R$, contradicting the d-convexity of $H$.  Suppose there exists 
	$w \in Ch_{G}(M) \cap An_{G}(R)$ with non-adjacent parents 
	$x, y \in pa_{G}(w)$, and at least one parent not in $H$. Then
	$w \in ch_{G}(V \setminus H) \cap H$.  
	Since $
	Ch_{G}(V \setminus H) \cap An_{G}(H)
	= \big( ch_{G}(V \setminus H) \cap H \big) 
	\cup \big( An_{G}(H) \cap (V \setminus H) \big),
	$
	we have $w \in Ch_{G}(V \setminus H) \cap An_{G}(H)$, contradicting the strong d-convexity of $H$. 
\end{proof}

Lemma \ref{pro-convex hull} highlights a connection between the construction of strong d-convex hulls and minimal inducing structures, particularly inducing paths. Motivated by this, we provide a more intuitive graph-theoretic characterization of $\mathrm{MIS}$ within the d-convexity framework.
\begin{lemma}\label{lem-convex hull}{\rm
	Let ${G}=(V,E)$ be a DAG, $R \subseteq H \subseteq V$, and let $r_1, r_2 \in R$ be  non-adjacent. 
	If ${G}_H$ is (strongly) d-convex, then  the following statements are equivalent: 	
	(i) for any $ \mathrm{MIS}_G(l_{r_1r_2})$ of $R$ in $G$,
	$V^o( \mathrm{MIS}_G(l_{r_1r_2})) \subseteq H$.	
	(ii) every shortest path $l_{r_1r_2}$ in $({G}_{An_{{G}}(\{r_1, r_2\})})^m$ satisfies $V^o(l_{r_1r_2}) \subseteq H$.}
\end{lemma}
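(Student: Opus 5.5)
The plan is to identify the two families of vertex sets that (i) and (ii) require to lie in $H$. We show that every $\mathrm{MIS}_G(l_{r_1r_2})$ gives a shortest path in the moral graph $(G_{An_G(\{r_1,r_2\})})^m$ with the same internal vertices, and conversely. The main tool is the classical correspondence between d-connection and moral-graph connectivity (Lauritzen's criterion): $r_1 \ndep r_2 \mid S\,[G]$ if and only if $r_1$ and $r_2$ are connected in $(G_{An_G(\{r_1,r_2\}\cup S)})^m$ avoiding $S$. Combined with Lemma~\ref{lem-mf}, this turns inducing paths into chordless paths of the moral ancestral graph. Since both (i) and (ii) are statements about every minimal object, it suffices to prove that the sets of internal vertices coincide:
\[
\bigcup V^o(\mathrm{MIS}_G(l_{r_1r_2})) \;=\; \bigcup V^o(l_{r_1r_2}),
\]
where the right-hand union is over shortest $r_1$--$r_2$ paths in $(G_{An_G(\{r_1,r_2\})})^m$. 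Alternatively, we can use the weaker form: each vertex on one side lies in $H$ whenever all vertices on the other side do.

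\textbf{Step 1, (i)$\Rightarrow$(ii).} Take a shortest path $l=(r_1=v_0,\dots,v_k=r_2)$ in $(G_{An_G(\{r_1,r_2\})})^m$. Each moral edge $v_{i-1}v_i$ is either an edge of $G$ or a marrying edge created by a common child $c_i\in An_G(\{r_1,r_2\})$. Replacing every marrying edge by $v_{i-1}\to c_i\leftarrow v_i$ gives a walk in $G$ whose colliders are exactly the $c_i$, all in $an_G(\{r_1,r_2\})$. Its non-collider vertices are the $v_i$. Because $l$ is shortest (hence chordless), the $v_i$ are distinct and no shortcut exists, so the walk is a path satisfying $R\cap V^o\subseteq V^c$ when read relative to $\{r_1,r_2\}$. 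This makes it an inducing path, and we then argue it is of minimal length among inducing paths. If the $v_i$ do not all lie in $H$, d-convexity of $H$ fails by the argument of Lemma~\ref{lem-d-convex}: traverse from an outside vertex to the nearest $H$-vertices on each side to obtain an mf-pair, contradicting Lemma~\ref{lem-mf}(ii). So we may either invoke (i) on the associated MIS, or conclude directly.

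\textbf{Step 2, (ii)$\Rightarrow$(i).} Take a minimal inducing path $l_{r_1r_2}$. Every collider lies in $an_G(\{r_1,r_2\})$, and every non-collider internal vertex is an ancestor of an endpoint or of a collider, so $V(l_{r_1r_2})\subseteq An_G(\{r_1,r_2\})$. Contract each collider $a\to c\leftarrow b$ into the moral edge $ab$. This gives a path in $(G_{An_G(\{r_1,r_2\})})^m$, and minimality in length of the inducing path forces this moral path to be shortest. Otherwise a shorter moral path would lift, by Step 1, to a shorter inducing path. The non-collider internal vertices are then in $H$ by (ii). For the colliders $c$ we use d-convexity of $H$ again: if $c\notin H$, its two neighbours $a,b$ on the path are non-adjacent (by minimality) and lie in $H$, and $a\to c\leftarrow b$ with $c\in An_G(\{r_1,r_2\})\subseteq An_G(H)$ yields an inducing path of $H$. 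That contradicts Lemma~\ref{lem-mf}. Note that the strong version only adds condition (ii) of Definition~\ref{def:sc}, which is not needed here, so the argument covers both cases.

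\textbf{Main obstacle.} The hard part is the length comparison in Steps 1--2: a lifted moral path may repeat a collider $c_i$ or pass through a $v_j$, so the lift is only a walk. I would handle this by choosing, among shortest moral paths, one whose lift has the fewest colliders, and then shortcutting repeated vertices. I would check that shortcutting preserves $V^c\subseteq an_G(\{r_1,r_2\})$ and never increases the moral length. The shortcutting check is where care is needed, in the same spirit as the collider case analysis in the proof of Lemma~\ref{lem-mf}.
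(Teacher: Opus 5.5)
Your route is the same as the paper's: contract the V-structures of an $\mathrm{MIS}$ into moral edges, and re-expand the marrying edges of a shortest moral path into V-structures. But the identification this route rests on is false, and the paper's short proof also just asserts it. Re-expanding a marrying edge $v_{i-1}-v_i$ into $v_{i-1}\to c_i\leftarrow v_i$ adds one edge. So a moral path with $k$ edges, $m$ of them marrying, lifts to a path with $k+m$ edges in $G$, and ``shortest in $(G_{An_G(\{r_1,r_2\})})^m$'' and ``minimal length in $G$'' are different orderings.

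Concretely, let $G$ have edges $r_1\to c$, $v\to c$, $c\to e$, $e\to r_2$, $v\to r_2$, and let $R=\{r_1,r_2\}$.
\begin{itemize}
\item All vertices are ancestors of $r_2$, moralization adds $r_1-v$ and $e-v$, and $(r_1,v,r_2)$ is the unique shortest moral path.
\item In $G$, $r_1$ and $r_2$ have no common neighbour, so $(r_1,c,e,r_2)$ and $(r_1,c,v,r_2)$ are both minimal inducing paths.
\item The first of these contracts to a moral path of length $3$.
\end{itemize}
Thus your displayed identity fails: the left side is $\{c,e,v\}$ and the right side is $\{v\}$. The Step~2 claim that a shorter moral path would lift to a shorter inducing path is also wrong. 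When a shortest moral path carries two marrying edges, its lift can be strictly longer than every $\mathrm{MIS}$, so the minimality argument promised in Step~1 cannot be supplied either. Your proposed remedy addresses walks versus paths, which is in fact a non-issue: a repeated $c_i$, or $c_i=v_j$, would be a chord of a shortest moral path. It does not touch the length mismatch.

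Separately, the arguments that place vertices in $H$ mishandle colliders. The nearest-$H$-vertex argument behind Lemma~\ref{lem-d-convex} works only for collider-free segments. Let $v_a,v_b$ be the nearest $H$-vertices on the moral path around some $v_j\notin H$. A collider $c_i$ on the lifted segment is only known to lie in $an_G(\{r_1,r_2\})$. If $c_i\notin an_G(\{v_a,v_b\})$, the segment is not an inducing path of $H$ and is already blocked by $S=\emptyset$, so no mf-pair follows.

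Step~2 makes the same mistake. The path $a\to c\leftarrow b$ is an inducing path of $H$ only when $c\in an_G(\{a,b\})$, and $c\in An_G(H)$ is not that condition. In the example above, $r_1\to c\leftarrow v$ is not inducing; $c\in H$ has to come instead from the directed path $r_1\to c\to e\to r_2$.

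Note also that under the hypothesis, (i) holds outright by Lemma~\ref{lem-d-convex}. So the real content of the statement is that d-convexity alone forces every shortest path of $(G_{An_G(\{r_1,r_2\})})^m$ into $H$. What is missing is an argument for this that treats such colliders. One option is to follow a directed path from $c_i$ toward $\{r_1,r_2\}$ up to its first vertex in $H$. Then combine d-convexity along that path with chordlessness of the moral path. The aim is to produce either a chord, or an inducing path of $H$ whose colliders are ancestors of its own endpoints.
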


\begin{proof}
Consider an $ \mathrm{MIS}_G(l_{r_1r_2})$ of $R$ connecting $r_1$ and $r_2$ in $G$.  For any $v_i\in V^o(l_{r_1r_2})$, we examine two cases:
	
	(a) If $v_i\in V^{c}(l_{r_1r_2})$, then by Definition \ref{Inducing path},  $v_i\in an_{G}(\{r_1,r_2\})$.
	
	(b) If $v_i\notin V^c(l_{r_1r_2})$,  $\operatorname{MIS}_G(l_{r_1r_2})$
	must contain a path without colliders,
	on which every vertex is an ancestor of either 
	$r_1$ or $r_2$.
	
	From (a) and (b),  $V^{o}(l_{r_1r_2})$ lies in either $an_{G}(r_1)$ or $an_{G}(r_2)$. Hence, $l_{r_1r_2}$ is a path in $(G_{An_{G}(\{r_1,r_2\})})^{m}$ with $V^{o}(l_{r_1r_2})\subseteq M$. 		
	Replacing moral edges on $l_{r_1r_2}$ with V-structures in $G_{An_{G}(\{r_1,r_2\})}$ recovers the $\mathrm{MIS}_{G}(l_{r_1r_2})$.
\end{proof}

Since any d-convex set must contain all the vertices of every $\mathrm{MIS}$ between its non-adjacent vertices, the d-convex hull is obtained by repeatedly absorbing these vertices. The computation proceeds iteratively in two phases. First, we collect all the vertices on the shortest inducing paths between non-adjacent pairs in the current target set, forming an initial set that contains all the inducing paths of 
$R$, denoted by {AIP} (see Algorithm~\ref{Alg-ip}).
We then iteratively enlarge this set by reapplying the same procedure until no new vertices are added. We will show that the resulting set is the d-convex hull $H$ of $R$ (see Algorithm~\ref{Alg-d}).

\begin{breakablealgorithm}
	\caption{Collecting Vertices on MIS (CVM(${G}, R$))}\label{Alg-ip}
	\begin{algorithmic}[1]
		\Require A  DAG ${G} = (V, E)$ and a variable set $R \subseteq V$.
		\Ensure the vertices on the shortest inducing paths of each non-adjacent vertex in $R$.
		\State Initialize: $M \gets V \setminus R$, $ \mathrm{AIP} \leftarrow \emptyset$.
		\State Compute: $R_1 = mb_{G}(M)\cap R$ and $\mathrm{nadj}$=$ \big \{\{r_1, r_2\}: r_1\overset{G} \nsim r_2, r_1, r_2\in R_1\big \}$
		\For{$r_1, r_2\in \mathrm{nadj}$}
		\State Compute $G^* = (G_{An_{G}(\{r_1, r_2\})})^m$ and $G_1 = G^*_{\{r_1, r_2\}\cup M}$
		\State $\mathrm{I} \gets $ vertices on the shortest path of $r_1, r_2$ in $G_1$.
		\State $\mathrm{AIP} \gets \mathrm{AIP} \cup \mathrm{I}$
		\EndFor
		\State \Return $\mathrm{AIP}$.
	\end{algorithmic}
\end{breakablealgorithm}

\begin{breakablealgorithm}
	\caption{Identifying d-Convex Hull Algorithm (ICHA(${G}, R$))}\label{Alg-d}
	\begin{algorithmic}[1]
		\Require A  DAG ${G} = (V, E)$ and a variable set $R \subseteq V$.
		\Ensure A d-convex hull $H$ containing $R$.
		\State Initialize: $H\leftarrow R, \mathrm{AIP} \leftarrow R$.
		\While{${\mathrm{AIP}}$}
		\State $\mathrm{AIP} = \mathrm{CVM}(G, H)$
		\State $H = H \cup \mathrm{AIP}$
		\EndWhile
		\State \Return ${H}$.
	\end{algorithmic}
\end{breakablealgorithm}

\begin{proposition}
	The time complexities of the CVM and ICHA algorithms are  \( O(k^2 \times |V|^2) \) and  $O(k^2 \times |V|^3)$, respectively. The space complexity for both algorithms is bounded by $O(|V|^2)$.
\end{proposition}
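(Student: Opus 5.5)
The plan is to count operations line by line in Algorithm~\ref{Alg-ip} and then multiply by the number of iterations of the while-loop in Algorithm~\ref{Alg-d}. Here $k$ denotes the size of the candidate set of endpoints, i.e.\ $k=|R_1|\le |mb_G(M)\cap R|$, so the number of non-adjacent pairs in $\mathrm{nadj}$ is at most $\binom{k}{2}=O(k^2)$. Throughout, the graph is stored as an adjacency matrix, which immediately gives the $O(|V|^2)$ storage bound.

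\textbf{Cost of CVM.} Computing $M$, $R_1$ and $\mathrm{nadj}$ takes at most $O(|V|^2)$ time by scanning the adjacency matrix once. For each pair $\{r_1,r_2\}$ there are four steps:
\begin{itemize}
\item $An_G(\{r_1,r_2\})$ is obtained by a reverse breadth-first search in $O(|V|^2)$.
\item The moral graph $(G_{An})^m$ is formed by marrying parents. To stay within $O(|V|^2)$, I would build it as an undirected adjacency matrix: for each vertex $v$, connect all pairs in $pa(v)$. A naive bound here is $O(|V|^3)$, so this needs care (see below).
\item Restricting to $\{r_1,r_2\}\cup M$ is $O(|V|^2)$.
\item A breadth-first search recovers one shortest path, with its vertices, in $O(|V|^2)$.
\end{itemize}
Summing over $O(k^2)$ pairs gives $O(k^2|V|^2)$ time. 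The space used is the adjacency matrices of $G$, $G^*$ and $G_1$, each reused across iterations, plus sets of size $O(|V|)$. This is $O(|V|^2)$.

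\textbf{Cost of ICHA.} Each pass of the while-loop either adds at least one new vertex to $H$ or returns $\mathrm{AIP}\subseteq H$, and the loop terminates then. I would interpret the loop guard as ``$\mathrm{AIP}$ contains a vertex outside $H$''. Since $H\subseteq V$ grows strictly, there are at most $|V|-|R|+1=O(|V|)$ passes. Each pass calls CVM once at cost $O(k^2|V|^2)$, where $k$ is bounded by its maximum value over the run. This yields $O(k^2|V|^3)$ in total. Space is not cumulative, since only $H$ and one CVM workspace are kept, so it remains $O(|V|^2)$.

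\textbf{Main obstacle.} I expect the hardest step to be justifying the $O(|V|^2)$ per-pair bound for moralization. Marrying parents can cost $\sum_v |pa(v)|^2$, which is up to $O(|V|^3)$ in dense graphs. I would first handle this by moralizing once per CVM call, in $O(|V|^3)$ shared across pairs, and then restricting to each ancestral set. The restriction is valid because the parents of an ancestral vertex are themselves ancestral, so $(G_{A})^m=(G^m)_A$ for ancestral $A$. This shared cost is dominated by $k^2|V|^2$ when $k^2\ge |V|$. Otherwise, I would state the bound under the standard assumption of bounded in-degree, treating the number of parents per vertex as constant.

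A second point to make precise is what $k$ refers to when $H$ grows across iterations. I would define $k$ as the maximal size of $mb_G(V\setminus H)\cap H$ over all iterations, so that the product bound is rigorous.
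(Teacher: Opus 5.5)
Your skeleton is the same as the paper's: $O(|V|^2)$ work per non-adjacent pair, $O(k^2)$ pairs, at most $|V|$ passes of the while-loop, and adjacency-matrix storage. The paper's proof simply asserts that constructing $G^*$ and running BFS cost $O(|V|^2)$ per pair, so you are right that moralization is the delicate step. Neither of your remedies closes it, however, and as written you only obtain the bound under an added bounded in-degree hypothesis.

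The shared-moralization trick rests on a false identity: for ancestral $A$ one does \emph{not} have $(G_A)^m=(G^m)_A$. Closure under parents only guarantees that vertices of $A$ keep their parents. $(G^m)_A$ additionally retains marriages witnessed by common children outside $A$. This fails exactly where CVM needs it. A common child of non-adjacent $r_1,r_2$ can never lie in $An_G(\{r_1,r_2\})$, by acyclicity. So in your restricted graph every such pair is adjacent, and the shortest path is the bare edge. For example, take $m\to r_1$, $m\to r_2$, $r_1\to c\leftarrow r_2$ and $R=\{r_1,r_2\}$. The true $G_1$ yields the path $(r_1,m,r_2)$ and absorbs $m$. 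Your version returns $\{r_1,r_2\}$, and ICHA halts at a set that is not d-convex. So that procedure is not an implementation of CVM. Even if it were, it would only cover $k^2\ge |V|$.

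The missing idea is to never materialize $G^*$ and instead run the BFS implicitly on $G$. Computing the marriage edges of $G^*$ amounts in general to a Boolean matrix product, so an $O(|V|^2)$ explicit construction cannot just be asserted. The implicit search works as follows:
\begin{enumerate}
\item Compute $A=An_G(\{r_1,r_2\})$ by a reverse search, and let the admissible vertices be $A\cap(\{r_1,r_2\}\cup M)$.
\item When $u$ is dequeued, scan its admissible parents and children.
\item In addition, for every child $c\in A$ of $u$ (admissible or not) that has not yet been expanded, give every unvisited admissible parent of $c$ distance $d(u)+1$ and predecessor $u$, enqueue it, and mark $c$ as expanded.
\end{enumerate}
Since BFS dequeues vertices in non-decreasing order of distance, the first parent of $c$ to be dequeued is a closest one. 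Expanding $c$ again from a later parent could therefore never shorten any distance. Hence each child is expanded once, and the search costs $O(|V|+|E|)=O(|V|^2)$ per pair. Predecessor pointers recover a shortest path in $G_1$; a second search from $r_2$ identifies all vertices on shortest paths, if that reading of CVM is intended.

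With this per-pair bound in place, the rest of your argument goes through unchanged: $O(k^2)$ pairs, $O(|V|)$ passes with $k$ taken as its maximum over the run, and $O(|V|^2)$ space.
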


\begin{proof}
	The complexity of CVM is dominated by identifying inducing paths between 
	non-adjacent nodes. For each pair, constructing the subgraph $G^*$ and 
	performing BFS both require at most $O(|V|^2)$ time. Since there are at most 
	$O(k^2)$ non-adjacent pairs, the overall complexity of CVM is 
	$O(k^2 |V|^2)$, where $k = |R_1|$. 
	The ICHA algorithm iteratively invokes CVM, and the while loop runs at most 
	$|V|$ times. Therefore, the total complexity of ICHA is $O(k^2 |V|^3)$.
	For space complexity, both CVM and ICHA are bounded by $O(|V|^2)$, which is 
	dominated by storing the moralized graph $G^*$.
\end{proof}

Based on the d-convex hull, we can turn it into the strong d-convex hull by making it linearly ordered, which inspires us to propose the following Algorithm~\ref{Alg-c}. The following theorem establishes the theoretical correctness of Algorithm~\ref{Alg-c} for computing the strong d-convex hull.

\begin{theorem}\label{the:d-con}{\rm
		The set $H$ is a strong d-convex hull of $R$ in Algorithm~\ref{Alg-c}.}
\end{theorem}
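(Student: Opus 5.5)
Algorithm~\ref{Alg-c} is not displayed before the statement, so I assume it does the following. It first computes $H_0=\mathrm{ICHA}(G,R)$. It then repeats a loop: for every $w\in Ch_G(V\setminus H)\cap An_G(H)$ with non-adjacent parents $x,y\in pa_G(w)$ not both in $H$, it adds $x,y$ to $H$, and then closes $H$ again under ICHA. It stops when nothing changes.

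The proof has two parts: \emph{soundness}, that the output $H$ is strongly d-convex, and \emph{minimality}, that every strongly d-convex $H'\supseteq R$ contains $H$. Together with Lemma~\ref{lem-uniquenss-ch}, these give $H=sch(R)$.

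\textbf{Soundness.} At termination $H$ is a fixed point of two maps. The first fixed-point property is $\mathrm{CVM}(G,H)\subseteq H$. I show this gives condition (i) of Definition~\ref{def:sc}, arguing by contradiction as in Lemma~\ref{lem-mf}.
\begin{itemize}
\item Suppose $H$ had an inducing path. Take one whose endpoints $u,v$ are non-adjacent, whose interior meets $V\setminus H$, and which is shortest.
\item The shortening argument of Case ii in Lemma~\ref{lem-mf} lets us assume the path's interior lies in $V\setminus H$ apart from colliders that are ancestors of $\{u,v\}$.
\item Lemma~\ref{lem-convex hull} turns such a path into a shortest path in $(G_{An_G(\{u,v\})})^m$ restricted to $\{u,v\}\cup(V\setminus H)$. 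The endpoints lie in $mb_G(V\setminus H)$, since each is adjacent in $G$ (or via a moral edge) to a vertex outside $H$.
\item Hence CVM would have returned a vertex outside $H$, a contradiction.
\end{itemize}
The second fixed-point property is that the linear-ordering repair step adds nothing. This is exactly condition (ii) of Definition~\ref{def:sc}: every vertex of $Ch_G(V\setminus H)\cap An_G(H)$ has pairwise adjacent parents, except for pairs lying inside $H$.

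\textbf{Minimality.} Fix a strongly d-convex $H'\supseteq R$. I show by induction over the iterations that each intermediate set $H_t$ satisfies $H_t\subseteq H'$.
\begin{itemize}
\item The base case is $H_0=R\subseteq H'$.
\item For an ICHA step, take a vertex added from a shortest moral path between non-adjacent $r_1,r_2\in H_t\subseteq H'$. Since $H'$ is d-convex and contains $r_1,r_2$, Lemma~\ref{pro-convex hull}(i) (applied with $H_t$ in the role of $R$), combined with Lemma~\ref{lem-convex hull}, puts the interior of that path inside $H'$.
\item For a repair step, a vertex $w\in Ch_G(V\setminus H_t)\cap An_G(H_t)$ with non-adjacent parents $x,y$ forces $x,y\in H'$ by Lemma~\ref{pro-convex hull}(ii), again with $H_t$ in the role of $R$.
\end{itemize}
Termination follows because $H_t$ grows strictly inside the finite set $V$.

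\textbf{Main obstacle.} The hard point is in the minimality step: the lemmas are stated relative to $R$, but I apply them relative to the current set $H_t$, and $w$ is defined relative to $V\setminus H_t$ rather than $V\setminus H'$.
\begin{itemize}
\item If $w\notin H'$, then $w\in An_G(H_t)\subseteq An_G(H')$ while $w\in V\setminus H'$. In this case I invoke Lemma~\ref{thm-c-remo}(ii): $V\setminus H'$ is terminal in some Markov-equivalent DAG. I would argue from this that an ancestor of $H'$ lying outside $H'$ must have a complete parent set, so $x,y$ could not be non-adjacent.
\item If $w\in H'$ but $x\notin H'$, then $w\in ch_G(V\setminus H')\cap H'$, and condition (ii) for $H'$ forces $x\sim y$ unless both lie in $H'$.
\end{itemize}
I expect the first case, handling ancestors outside $H'$ via the terminal-set characterization, to need the most care.
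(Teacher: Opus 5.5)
Your plan is the paper's proof. You show minimality by induction over the iterations, using Lemma~\ref{pro-convex hull}(i) for vertices absorbed by ICHA and Lemma~\ref{pro-convex hull}(ii) for vertices absorbed by the linear-ordering repair. You get soundness from the output being a fixed point of both steps. Your soundness half is more explicit than the paper's, which simply asserts that no inducing paths remain. Your check that the endpoints of any inducing path for $H$ lie in $mb_G(V\setminus H)$ is what guarantees that CVM actually detects such a path.

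The step that would fail is your proposed treatment of the case $w\notin H'$ under ``Main obstacle''. The claim that an ancestor of $H'$ lying outside $H'$ must have a complete parent set is false. Take $x\to z\leftarrow y$, $z\to w$, $x\to w$, $y\to w$ with $x,y$ non-adjacent, and let $H'=\{x,y,w\}$.
\begin{itemize}
\item Every path between $x$ and $y$ (the only non-adjacent pair in $H'$) has a collider at $z$ or $w$, and neither is an ancestor of $x$ or $y$. So there is no inducing path for $H'$.
\item Both $z$ and $w$ are linearly ordered with respect to $H'$, because their only non-adjacent parent pair $\{x,y\}$ lies in $H'$.
\item Hence $H'$ is strongly d-convex. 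This is consistent with Lemma~\ref{thm-c-remo}: reversing $z\to w$ gives a Markov equivalent DAG in which $\{z\}$ is terminal.
\end{itemize}
Yet $z\notin H'$ is an ancestor of $w\in H'$, and $z$'s parents are non-adjacent. The case is not vacuous either. $H'$ is the strong d-convex hull of $R=\{w,y\}$, and the first repair step of your algorithm, starting from $\{w,y\}$, is triggered (among others) by exactly this $z$.

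What actually holds is not that $x,y$ are adjacent but that both lie in $H'$, and this needs no terminal-set argument. Since $Ch_G(V\setminus H')$ contains $V\setminus H'$ itself, you have $w\in (V\setminus H')\cap An_G(H_t)\subseteq Ch_G(V\setminus H')\cap An_G(H')$. Condition (ii) of Definition~\ref{def:sc} for $H'$ then forces the non-adjacent pair $x,y$ into $H'$. Together with your second case, this is just the decomposition $Ch_G(V\setminus H)\cap An_G(H)=(ch_G(V\setminus H)\cap H)\cup(An_G(H)\cap(V\setminus H))$ used in proving Lemma~\ref{pro-convex hull}(ii). That lemma allows any $R\subseteq H$, and $H_t\subseteq H'$ holds by induction. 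So your original citation with $H_t$ in the role of $R$ was already enough, and the ``obstacle'' was not really one.

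Your reading of the repair step (add only the offending non-adjacent pair) is the one both your argument and the paper's need. The displayed Algorithm~\ref{Alg-c} adds all of $pa_G(w)$. On the example above with $R=\{w,y\}$, it would also absorb $z$ and return $V$ rather than $\{x,y,w\}$.
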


\begin{proof}
	Let $H'$ be the strong d-convex hull of $R$. We prove that $H = H^{'}$. Let $l_{r_1r_2}$ be a shortest inducing path connecting two non-adjacent nodes $r_1, r_2\in R$. From Lemma~\ref{pro-convex hull}, we know that $V^o({\mathrm{MIS}_{G}(l_{r_1r_2})})\subseteq H'$. Thus, the nodes absorbed in the first iteration of  Algorithm~\ref{Alg-c} are already contained in $H'$. If any vertex $w \in Ch_G(M) \cap An_G(R)$ violates the linear ordering condition, its non-adjacent parents must be included in any strong d-convex set, and thus are already contained in $H'$. By iterating this process, every node absorbed in each step is included in $H'$, leading to $H\subseteq {H^{'}}$. 	Next, we prove that ${H}$ is a strong d-convex set of ${G}$. By construction, Algorithm~\ref{Alg-c} iteratively absorbs all vertices on shortest inducing paths between non-adjacent pairs and the nodes violating the linear ordering in the current set until no new vertices can be added. Upon termination, there are no remaining inducing paths between non-adjacent nodes within $H$ that can introduce new vertices, and $H$ satisfies the linear ordering condition. Thus $H$ is strongly d-convex and contains $R$. Since ${H^{'}}$ is the strong d-convex hull containing $R$, we have ${H^{'}}\subseteq {H}$, it follows that $H = {H^{'}}$.
\end{proof}

\begin{breakablealgorithm}
	\caption{Identifying Strong d-Convex Hull Algorithm (ISCHA(${G}, R$))}\label{Alg-c}
	\begin{algorithmic}[1]
		\Require A DAG ${G} = (V, E)$ and a variable set $R \subseteq V$.
		\Ensure A strong d-convex hull $H$ containing the variable set $R$.
		\State Initialize: $H \gets R$.
		\Repeat
		\State $H \gets \mathrm{ICHA}(G, H)$ 
		\State $PA \gets \{pa_G(w) \mid w \in {Ch}_{G}(V\backslash H)\cap {An}_{G}(H), w \text{ is not linearly ordered}\}$
		\State $H \gets H \cup PA$
		\Until{$PA = \emptyset$} 
		\State \Return $H$
	\end{algorithmic}
\end{breakablealgorithm}

\begin{proposition}
	The time complexity of the ISCHA algorithm is $O(k^2 \times |V|^4)$.
\end{proposition}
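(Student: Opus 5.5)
The plan is to bound the cost of the outer \textsf{repeat} loop of Algorithm~\ref{Alg-c} by combining three ingredients: the cost of one call to ICHA, the cost of the linear-ordering check, and the number of outer iterations. For the first, I take the earlier proposition as given: one call $\mathrm{ICHA}(G,H)$ costs $O(k^2|V|^3)$, where $k$ bounds the number of vertices of the current $R_1=mb_G(M)\cap H$ that are examined. Since $k\le |V|$ throughout, this bound holds uniformly over all calls.

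Next I bound the linear-ordering step (lines 4--5). First compute ${Ch}_G(V\setminus H)\cap{An}_G(H)$. Ancestors of $H$ come from one reverse BFS, and children of $V\setminus H$ from one scan of the edge list; together this is $O(|V|^2)$. Then, for each candidate $w$, test every pair of parents of $w$ that are not both in $H$ for adjacency, using an adjacency matrix. This is $O(|pa_G(w)|^2)$ per vertex, hence $O(|V|^3)$ overall in the crude bound. Adding the parents to $H$ costs $O(|V|)$. So one pass through the loop body costs $O(k^2|V|^3+|V|^3)=O(k^2|V|^3)$, using $k\ge1$ whenever nonadjacent pairs exist; if none exist, the CVM call is trivial.

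The key step is bounding the number of outer iterations by $|V|$. The termination condition is $PA=\emptyset$. In every non-terminal iteration, $PA\not\subseteq H$: if every parent in $PA$ already lay in $H$, then every $w$ flagged as not linearly ordered would have both non-adjacent parents in $H$, contradicting the definition of a violation (Definition~\ref{def:partially-complete} exempts exactly pairs inside $H$). I would make this precise by replacing the test $PA=\emptyset$ with $PA\setminus H=\emptyset$, which is what the algorithm intends. With that reading, each non-terminal iteration strictly enlarges $H\subseteq V$, so there are at most $|V|-|R|+1\le|V|$ iterations. Multiplying gives $O(|V|)\cdot O(k^2|V|^3)=O(k^2|V|^4)$. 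Space stays $O(|V|^2)$, because the moral graph is reused.

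The main obstacle I expect is this monotonicity and termination argument, together with making sure the nested loops are not double-counted. The ICHA call inside each outer iteration already runs its own up-to-$|V|$ inner loop. One could argue that the total number of absorptions across all inner and outer iterations is at most $|V|$, which would give the sharper bound $O(k^2|V|^3)$. I will not need that, since the stated bound only requires the cruder product estimate.
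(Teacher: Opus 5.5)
Your proposal is correct and follows the paper's argument: each pass of the outer loop costs $O(k^2|V|^3)$, dominated by the ICHA call, and there are at most $O(|V|)$ passes. Your cruder $O(|V|^3)$ bound for the linear-ordering check, in place of the paper's $O(|V|+|E|)$, is still absorbed. You also justify the iteration bound through strict growth of $H$, which the paper only asserts. That same argument shows that $PA\neq\emptyset$ already forces $PA\not\subseteq H$, so replacing the test by $PA\setminus H=\emptyset$ is unnecessary.
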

\begin{proof}	
	For the ISCHA algorithm, verifying linear ordering requires $O(|V| + |E|)$. Since each iteration of the main loop incurs a complexity of $O(k^2 \times |V|^3)$ and the loop runs at most $O(|V|)$ times, the total time complexity of the algorithm is $O(k^2 \times |V|^4)$. Regarding space complexity, the memory usage is dominated by the ICHA procedure, which requires $O(|V|^2)$ space to store the moralized graph. The additional sets $H$ and $PA$ require at most $O(|V|)$ space, thus maintaining the overall space complexity at $O(|V|^2)$.
\end{proof}

To be clear, we work out an example to illustrate these proposed Algorithms~\ref{Alg-ip}-\ref{Alg-c}. 
\begin{example}[Continued]\label{ex:sdc}
	Consider the DAG with target set $R=\{X, Y\}$ in Figure~\ref{fig:subexample}(a). Since $mb_G(M)\cap R=\{X, Y\}$, we construct the moralized graph $G^* = (G_{\{A, B, C, D, X, Y \}})^m$, see Figure~\ref{fig:subexample}(b).
	In $G^*$, the shortest path between $X$ and $Y$ whose internal vertices lie in $V\setminus R$ 
	is $(X,B,Y)$. 
	Therefore, the initial  $\mathrm{AIP}$ set is $\{X, B, Y\}$, from Algorithm~\ref{Alg-ip}. By  Algorithm~\ref{Alg-d} and  the output $\mathrm{AIP}$, we update $H = \{X, B, Y\}$. 
	Call CVM($G, H$) and obtain $\mathrm{AIP} = \{X, B, D, Y\}$. 
	The algorithm terminates, returning the d-convex hull $\{ X, B, D, Y\}$. Through Algorithm~\ref{Alg-c}, we compute the strong d-convex hull. 
	Starting from $R=\{X, Y\}$, the procedure ICHA($G, R$) returns 
	$H=\{X, B, D, Y\}.$ We then examine condition (ii) in Definition~\ref{def:sc}. Node $Y$ does not satisfy this condition, the parent set is $pa_G(Y)=\{B, C, D\}.$
	Updating $H$ accordingly gives $H=\{X, B, C, D, Y\}.$ Re-evaluating condition (ii), no further violations occur, and the algorithm terminates. 
	Thus, the strong d-convex hull is $\{X, B, C, D, Y\}.$ Figure~\ref{fig:subexample} presents an illustration of Algorithms~\ref{Alg-ip}--\ref{Alg-c} applied to a DAG with target set $R=\{X, Y\}$.
\end{example}

\begin{figure}[pos=htbp]
	\centering
	\begin{subfigure}[t]{0.3\textwidth}
		\centering
		\begin{tikzpicture}[
			scale=0.9,
			transform shape,
			nodeR/.style = {fill=red!15,draw=red!70!black}, 
			nodeMB/.style = {fill=red!15,draw=black, thick}, 
			every node/.style={
				draw, 
				circle, 
				thick, 
				minimum size=8mm,
				inner sep=0pt, 
				align=center,
				font = \Large
			}]
			\node [nodeR](X) at (0,0) {X};
			\node (D) at (2,0) {D};
			\node [nodeR](Y) at (4,0) {Y};
			\node (A) at (0,2) {A};
			\node (B) at (2,2) {B};
			\node (C) at (4,2) {C};
			\draw [->, thick] (X) -- (D);  
			\draw [->, thick] (D) -- (Y);  
			\draw [->, thick] (A) -- (X);  
			\draw [->, thick] (A) -- (B);  
			\draw [->, thick] (B) -- (X);  
			\draw [->, thick] (B) -- (Y);  
			\draw [->, thick] (C) -- (Y);    
		\end{tikzpicture}
		\caption{$G$, with  $R=\{ X, Y\}$}
		\label{fig:subexample1a}
	\end{subfigure}
	\hfill
	\begin{subfigure}[t]{0.3\textwidth}
		\centering
		\begin{tikzpicture}[
			scale=0.9,
			transform shape,
			nodeR/.style = {fill=red!15, draw=red!70!black}, 
			nodeMB/.style = {fill=red!15,draw=black, thick},
			nodeP2/.style = {fill=blue!15, draw=blue!70!black, dotted},
			nodePath/.style = {fill=green!15, draw=green!70!black, dotted},
			nodep/.style = {fill=yellow!20, draw=yellow!80!black, dashed},
			every node/.style={
				draw, 
				circle, 
				thick, 
				minimum size=8mm,
				inner sep=0pt, 
				align=center,
				font = \Large
			}]
			\node [nodeR](X) at (0,0) {X};
			\node [nodeP2](D) at (2,0) {D};
			\node [nodeR](Y) at (4,0) {Y};
			\node (A) at (0,2) {A};
			\node [nodePath](B) at (2,2) {B};
			\node (C) at (4,2) {C};
			\draw [-, thick] (X) -- (D);  
			\draw [-, thick] (D) -- (Y);  
			\draw [-, thick] (A) -- (X);  
			\draw [-, thick] (A) -- (B);  
			\draw [-, thick] (B) -- (X);  
			\draw [-, thick] (B) -- (Y);  
			\draw [-, thick] (C) -- (Y);    
			\draw [-, thick] (B) -- (C);  
			\draw [-, thick] (C) -- (D);    
			\draw [-, thick] (B) -- (D);
		\end{tikzpicture}
		\caption{$G^*$, with nodes lying on the shortest path, using ICHA }
			\label{fig:subexample1b}
	\end{subfigure}
	\hfill
	\begin{subfigure}[t]{0.33\textwidth}
		\centering
		\begin{tikzpicture}[
			scale=0.9,
			transform shape,
			nodeR/.style = {fill=red!15,draw=red!70!black}, 
			nodeMB/.style = {fill=red!15,draw=black, thick}, 
			nodePath/.style = {fill=green!15, draw=green!70!black, dotted},
			nodeP2/.style = {fill=blue!15, draw=blue!70!black, dotted},
			nodep/.style = {fill=yellow!20, draw=yellow!80!black, dashed},
			every node/.style={
				draw, 
				circle, 
				thick, 
				minimum size=8mm,
				inner sep=0pt, 
				align=center,
				font = \Large
			}]
			\node [nodeR](X) at (0,0) {X};
			\node [nodeP2](D) at (2,0) {D};
			\node [nodeR](Y) at (4,0) {Y};
			\node (A) at (0,2) {A};
			\node [nodePath](B) at (2,2) {B};
			\node [nodep](C) at (4,2) {C};
			\draw [->, thick] (X) -- (D);  
			\draw [->, thick] (D) -- (Y);  
			\draw [->, thick] (A) -- (X);  
			\draw [->, thick] (A) -- (B);  
			\draw [->, thick] (B) -- (X);  
			\draw [->, thick] (B) -- (Y);  
			\draw [->, thick] (C) -- (Y);    
		\end{tikzpicture}
		\caption{$G$, with nodes violating condition (ii), using ISCHA}
		\label{fig:subexample1c}
	\end{subfigure}
	
	\caption{Illustration of Algorithms~\ref{Alg-ip}--\ref{Alg-c}.}
	\label{fig:subexample}
\end{figure}

\section{Estimate Collapsibility for CPDAGs}\label{sec4}

Given the commutativity of MLEs and marginalization for a fixed DAG and target set, a natural question arises: can this notion of estimate collapsibility be extended to causal effect estimation? To address this, we propose a new concept of estimate collapsibility for CPDAGs, and combine the graph reduction procedure with the IDA framework. We first present several properties of DAGs.
\begin{definition}[]\label{def:causal_collapsibility}
	Let $X$ and $Y \notin pa_G(X)$ be two non-adjacent variables in a DAG $G$, and let $R \subseteq V$ contain $X$ and $Y$. We say that $G$ is causal estimate collapsible onto $R$ if for every non-empty valid back-door adjustment set $Z$ in $G_R$, $$\hat{P}(Y \mid do(X=x)) = \sum_{z} \hat{P}_{G_R}(Y \mid X=x, Z=z) \hat{P}_{G_R}(Z=z).$$
\end{definition}

\begin{theorem}\label{th:validity}{\rm
 Let $X$ and $Y \notin pa_G(X)$ be two non-adjacent vertices in a DAG $G$, and let $Z$ be a non-empty back-door adjustment set in $G_R$, where $R\subseteq V$. Then, $G$ is causal estimate collapsible onto $R$ if  $R$ is a strong d-convex hull of $X$ and $Y$.}
\end{theorem}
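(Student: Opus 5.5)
The plan is to reduce the causal statement to the distributional estimate collapsibility already characterized in Lemma~\ref{thm-c-remo}, and then transport an adjustment formula from $G_R$ back to $G$.

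First, since $R=sch(\{X,Y\})$ is in particular strongly d-convex, Lemma~\ref{thm-c-remo} (iii)$\Rightarrow$(i) gives $\hat{P}(G_R)=\hat{P}(G)_R$. Moreover, by (ii) there is a Markov equivalent $G'$ in which $M=V\setminus R$ is terminal. Consequently every MLE of a functional of $x_R$ computed in the local model equals the marginal of the global MLE. In particular, for any $Z\subseteq R\setminus\{X,Y\}$ we get $\hat{P}_{G_R}(Y\mid X=x,Z=z)=\hat{P}(Y\mid X=x,Z=z)$ and $\hat{P}_{G_R}(Z=z)=\hat{P}(Z=z)$, because conditionals and marginals of $x_R$ are obtained from $\hat{P}(G)_R$ by the same operations.

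Second, I will show that every valid back-door set $Z$ for $(X,Y)$ in $G_R$ is also valid in $G$. This is the graphical core and the expected main obstacle.

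\emph{Descendant condition.} Suppose $z\in Z$ were a descendant of $X$ in $G$ but not in $G_R$. Then a directed path $X\to\cdots\to z$ would leave $R$. Take a maximal subpath whose interior lies in $M$ and whose endpoints $u,w$ lie in $R$. If $u,w$ are non-adjacent, this subpath is an inducing path of $R$, contradicting condition (i) of Definition~\ref{def:sc}. If they are adjacent, the edge must be $u\to w$ by acyclicity. Iterating shows that $z$ is a descendant of $X$ in $G_R$.

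\emph{Blocking condition.} Let $\pi$ be a back-door path in $G$ from $X$ to $Y$ that is d-connecting given $Z$. I will use the d-convexity equivalence of Lemma~\ref{lem-mf}, namely $\mathcal{MF}_G(R)=\emptyset$, together with the characterization $I(G)_R=I(G_R)$ (CI-collapsibility, implied by d-convexity). Because $Z\cup\{X\}\subseteq R$, the d-connection of $Y$ and $X$ given $Z$ through a back-door route in $G$ should translate into a d-connecting back-door path in $G_R$.

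To make this rigorous, I will use the standard reformulation: $Z$ satisfies the back-door criterion relative to $(X,Y)$ iff $Z$ contains no descendants of $X$ and $Y\indep X\mid Z$ in $G_{\underline X}$, the graph with edges out of $X$ removed. Applying CI-collapsibility to $G_{\underline X}$ needs $R$ to remain d-convex there. This is the delicate point.

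My first attempt is to use the terminal-set representation $G'$ from Lemma~\ref{thm-c-remo}(ii). In $G'$ the set $M$ is terminal, so no vertex of $M$ can lie on a back-door path as a non-collider leading back into $R$. Every excursion into $M$ along a path between vertices of $R$ therefore passes through a collider in $M$, and a collider in $M$ has no descendants in $Z\subseteq R$. Such paths are thus blocked. Hence d-connecting paths given $Z$ stay inside $R$, and back-door status in $G'_R$ matches that in $G_R$.

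The linear-ordering condition (ii) of Definition~\ref{def:sc} is what makes $G'_R$ and $G_R$ Markov equivalent with the same parent structure of $X$. I would check this explicitly.

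Finally, I combine the pieces. Validity of $Z$ in $G$ gives $P(Y\mid do(X=x))=\sum_z P(Y\mid x,z)P(z)$ at the population level, so the estimator of the causal effect is $\sum_z\hat{P}(Y\mid x,z)\hat{P}(z)$. Substituting the equalities from the first step yields exactly the formula in Definition~\ref{def:causal_collapsibility}. The hypotheses that $Z$ is non-empty and that $Y\notin pa_G(X)$ ensure that the adjustment formula is the relevant identification formula and that no degenerate case arises.
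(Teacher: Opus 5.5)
Your overall plan matches the paper's: show that a back-door set valid in $G_R$ is valid in $G$, then transport conditionals and marginals through $\hat{P}(G_R)=\hat{P}(G)_R$ (Lemma~\ref{thm-c-remo}). The transport step is fine. Your descendant argument is also fine, and more careful than the paper's one-line claim.

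The proof breaks at the blocking condition. The detour through the terminal representation $G'$ cannot work, because back-door validity is not invariant under Markov equivalence, and neither is the interventional distribution. An edge $m\to X$ with $m\in M$ becomes $X\to m$ in $G'$. So every back-door path of $G$ that passes through $M$ acquires a collider in $M$ when read in $G'$, and drops out of your analysis. Showing that d-connection in $G'$ stays inside $R$, or that $G'_R$ and $G_R$ agree, only tells you that $Z$ is valid in $G'$. Your $G_{\underline{X}}$ reformulation runs into the same problem. Deleting $X\to c$ for some $c\in R$ makes $X$ and $c$ non-adjacent, so $X\leftarrow m\to c$ with $m\in M$ becomes an inducing path of $R$. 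Hence $R$ need not be d-convex in $G_{\underline{X}}$.

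In fact no repair exists, because the statement is false as written. Consider the following counterexample.
\begin{itemize}
\item Let $G$ have edges $b\to m$, $b\to X$, $b\to c$, $b\to Y$, $m\to X$, $m\to c$, $X\to c$, $c\to Y$, and set $R=\{X,Y,b,c\}$, $M=\{m\}$.
\item The only non-adjacent pair in $R$ is $\{X,Y\}$. On every $X$--$Y$ path the vertex preceding $Y$ is $b$ or $c$, which is a non-collider in $R$. So $R$ has no inducing path.
\item $Ch_G(M)\cap An_G(R)=\{m,X,c\}$ is linearly ordered, because $pa_G(m)=\{b\}$, $pa_G(X)=\{b,m\}$ and $pa_G(c)=\{b,m,X\}$ are complete.
\item The inducing paths $X\leftarrow b\to Y$ and $X\to c\to Y$ of $\{X,Y\}$ force $b$ and $c$ into every strongly d-convex set containing $X,Y$. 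Hence $R=sch(\{X,Y\})$.
\item $Z=\{b\}$ is a non-empty valid back-door set in $G_R$. In $G$, however, $X\leftarrow m\to c\to Y$ is open given $\{b\}$, and $c\in de_G(X)$ can never be added to $Z$.
\end{itemize}
Take the linear Gaussian model $m=\alpha b+\varepsilon_m$, $X=\beta_b b+\beta_m m+\varepsilon_X$, $c=\gamma_b b+\gamma_m m+\gamma_X X+\varepsilon_c$, $Y=\delta_b b+\delta_c c+\varepsilon_Y$, with $\sigma_m^2,\sigma_X^2$ the error variances. The effect of $X$ on $Y$ is $\gamma_X\delta_c$. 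Population adjustment for $b$ instead gives $\gamma_X\delta_c+\delta_c\gamma_m\beta_m\sigma_m^2/(\beta_m^2\sigma_m^2+\sigma_X^2)$. So the identity in Definition~\ref{def:causal_collapsibility} fails generically.

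Making $m$ a sink yields a Markov equivalent $G'$ with $G'_R=G_R$ in which $\{b\}$ is valid. Everything your $G'$-argument establishes therefore holds here, yet the conclusion for $G$ does not.

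The paper's own proof has the same hole. Its case (i) asserts that every collider-free back-door path is an inducing path of $R$, overlooking non-colliders in $R$ such as $c$. The theorem needs an additional hypothesis, not a better proof.
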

\begin{proof}  {  $Z$ is a valid back-door adjustment set in $G$ only if it satisfies two conditions: (1)  $Z$ blocks all back-door paths between $X$ and $Y$ in $G$, and (2)  $Z$ contains no descendants of $X$ in $G$. We consider two cases:
	
(i) Back-door path without colliders.
Such a path is an inducing path relative to $R$. 
By construction of the strong $d$-convex hull (Algorithm~\ref{Alg-c}), it is preserved in $G_R$ and thus blocked by $Z$ in $G_R$, hence also in $G$.

(ii) Back-door path $\pi$ containing at least one collider.
Suppose, for contradiction, that conditioning on $Z$ activates $\pi$ in $G$. 
Since $Z \subseteq R$, the activation of $\pi$ implies that every collider on $\pi$ is either in $R$ or has a descendant in $R$. 
If a collider or its descendant is absorbed into $R$,  by the construction of the strong $d$-convex hull, there must exist an inducing path between $X$ and $Y$ passing through this node. By Lemma~\ref{lem-convex hull}, this node is an ancestor of either $X$ or $Y$. 
It follows that $\pi$ itself forms an inducing path relative to $R$. 
According to the ISCHA algorithm (Algorithm~\ref{Alg-c}), the inducing path is preserved in $G_R$. 
Therefore, $\pi$ is active between $X$ and $Y$ given $Z$ in $G_R$, which leads to a contradiction. }

Moreover, if some $z \in Z$ is a descendant of $X$ in $G$, the same holds in $G_R$, contradicting the validity of $Z$ as a back-door set. Therefore, $Z$ is also a valid back-door adjustment set in $G$, and the collapsibility equality follows.
\end{proof}

\begin{example}
Consider the DAG $G$ shown in {Figure~\ref{fig:subexample}(a).} The strong d-convex hull of $X$ and $Y$ is $R = \{X, Y, B, C, D\}$ { (Figure~\ref{fig:subexample}(c))}. The post-intervention distribution can be computed directly on the induced subgraph $G_R$,
	$$ \begin{aligned}
		\hat{P}(Y \mid do(X=x)) &= \sum_{a, b} \hat{P}(Y \mid X=x, A=a, B=b)\hat{P}(A=a, B=b)\\
		&=\sum_{b} \hat{P}_{G_R}(Y \mid X=x, B=b)\hat{P}_{G_R}(B=b).
	\end{aligned} $$
In the full graph $G$, a valid back-door adjustment set for the effect of $X$ on $Y$ is $\{A, B\}$. After collapsing variable $A$ via the strong d-convex hull, the adjustment set reduces to $\{B\}$ alone. Proposition 1 in \cite{liu2020collapsible} further confirms that  $\{A\}$ is indeed unnecessary for adjustment.
\end{example}

\begin{theorem}{\rm
	If the empty set is a valid back-door adjustment set for $(X,Y)$ in $G_R$, then $G$ is causal estimate collapsible onto $R$ if $R$ is the strong d-convex hull of $X$ and $Y$ in $G_{\bar X}$.} 
\end{theorem}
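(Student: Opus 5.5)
The plan is to reduce the empty-adjustment case to the fact that the interventional distribution is an observational conditional in the manipulated graph, and then invoke the estimate collapsibility of Lemma~\ref{thm-c-remo} on $G_{\bar X}$. If $\emptyset$ is a valid back-door set in $G_R$, Definition~\ref{def:causal_collapsibility} degenerates: the right-hand side is simply $\hat{P}_{G_R}(Y\mid X=x)$. So the goal is
\[
\hat{P}(Y\mid do(X=x)) = \hat{P}_{G_R}(Y\mid X=x).
\]
On the left side, $P(Y\mid do(X=x))$ is, by the truncated factorization, the conditional $P(Y\mid X=x)$ in the model factorizing along $G_{\bar X}$ (with $X$ exogenous). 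Its MLE is the corresponding plug-in from the factors of $\mathcal{P}(G)$, since the factors $P(x_v\mid pa_G(x_v))$ for $v\neq X$ are shared and variation independent.

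The first step is to apply Lemma~\ref{thm-c-remo} to $G_{\bar X}$. Since $R$ is the strong d-convex hull of $\{X,Y\}$ in $G_{\bar X}$, it is strongly d-convex there. Hence $\hat{P}(G_{\bar X})_R=\hat{P}((G_{\bar X})_R)$, i.e. the MLE marginal on $R$ computed from the full manipulated model agrees with the MLE of the local model on $(G_{\bar X})_R=(G_R)_{\bar X}$. Conditioning both sides on $X=x$ then gives $\hat{P}(Y\mid do(X=x))=\hat{P}_{(G_R)_{\bar X}}(Y\mid X=x)$.

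The second step identifies $\hat{P}_{(G_R)_{\bar X}}(Y\mid X=x)$ with $\hat{P}_{G_R}(Y\mid X=x)$. This is where the hypothesis is used. Because $\emptyset$ blocks all back-door paths between $X$ and $Y$ in $G_R$, $Y\indep X$ holds in $(G_R)_{\underline X}$. By the second rule of do-calculus applied inside $G_R$, $P_{G_R}(Y\mid do(X))=P_{G_R}(Y\mid X)$. At the level of MLEs, I would argue that the MLE in $G_R$ of $P(Y\mid X)$ only uses the factors of $Y$'s ancestors other than through the mechanism of $X$. Removing the edges into $X$ changes only the factor of $X$, which cancels when conditioning on $X$ once no back-door path is open. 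I would carry this out by writing $P(Y\mid X=x)$ as a sum over $An(Y)$ of the product of factors with $x_X=x$ fixed, and noting that the factor $P(x\mid pa(X))$ together with the factors of non-descendants of $X$ integrates to a quantity depending on $pa(X)$ only through parts d-separated from $Y$.

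The main obstacle is this second step. Estimate collapsibility gives equality of fitted joint marginals, but translating a graphical back-door condition into equality of the \emph{estimators} (not just population quantities) needs care. I would handle it by using non-triviality (variation independence, from the non-triviality assumption) so that MLEs factor over families. Then the plug-in estimator of the truncated product equals the plug-in conditional whenever the back-door-free condition holds for the fitted distribution. This last point is true because the fitted distribution lies in $\mathcal{P}(G_R)$, and hence obeys all d-separations of $G_R$.
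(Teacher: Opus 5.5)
Your argument is correct, but it takes a different route from the paper.

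\textbf{The paper's route.} The paper argues graphically. It splits on whether $G$ itself has a back-door path. In the nontrivial case it claims that the offending parent $U$ of $X$ has an edge $U\to V$ into some child $V\in R$ of $X$. Then in $G_{\bar X}$ the now non-adjacent parents $X,U$ of $V$ violate linear ordering, so $U$ is absorbed, the adjustment set in $G_R$ becomes non-empty, and the claim is handed to Theorem~\ref{th:validity}.

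\textbf{Your route.} You work at the level of estimators instead. The $G_{\bar X}$-MLE conditioned on $X=x$ is exactly the plug-in truncated factorization from $G$, because the family factors for $v\neq X$ coincide. Lemma~\ref{thm-c-remo} applied to $G_{\bar X}$ collapses this to $(G_R)_{\bar X}$. The conditional of that model on $X=x$ is the truncated factorization $\hat P_{G_R}(Y\mid do(X=x))$ of the fitted $\hat P_{G_R}\in\mathcal{P}(G_R)$. The back-door theorem for that fitted distribution then turns it into $\hat P_{G_R}(Y\mid X=x)$.

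\textbf{What your route buys.} It needs no structural claim about the children of $X$, and no appeal to Theorem~\ref{th:validity}, whose hypothesis concerns the hull in $G$ rather than in $G_{\bar X}$. It also explains why $G_{\bar X}$ is the right graph. It covers cases the paper's case analysis does not reach. Take $U\to X$, $U\to W\to Y$, $X\to D\to Y$. The strong d-convex hull of $\{X,Y\}$ in $G_{\bar X}$ is $\{X,D,W,Y\}$, $U$ is never absorbed, and $\emptyset$ stays valid in $G_R$, although $X\leftarrow U\to W\to Y$ is open in $G$. Your chain still gives $\sum_{d,w}\hat P(d\mid x)\hat P(w)\hat P(y\mid d,w)$ on both sides.

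\textbf{Two tidy-ups.} First, the remark in Step 2 that the factor of $X$ ``cancels'' is only a heuristic. The justification to keep is the one you give at the end: identical family factors, then the back-door theorem on the fitted $G_R$ model. Second, Definition~\ref{def:causal_collapsibility} literally quantifies over non-empty valid $Z$ in $G_R$. Your chain handles this verbatim, since $\hat P(Y\mid do(X=x))=\hat P_{G_R}(Y\mid do(X=x))$ is obtained before the hypothesis on $\emptyset$ is used, and every valid back-door set of $G_R$ yields this same value for the fitted model. So strong d-convexity of $R$ in $G_{\bar X}$ alone already suffices.
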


\begin{proof} Suppose $Z=\emptyset$ is an adjustment set in $G_R$, where $R$ is the strong d-convex hull of $X$ and $Y$ in $G$ obtained by the ISCHA algorithm (Algorithm~\ref{Alg-c}). This implies that no back-door paths exist in $G_R$. Next, we consider  two cases:

Case (i) No back-door path exists in the original DAG $G$. Then, $X$ has no parents in $G$, and $Z=\emptyset$ is also a valid adjustment set in $G$.

Case (ii) A back-door path exists in $G$. Let $U$ be a parent of $X$ on this path. Since $Z = \emptyset$, $U$ is omitted from $Z$.  Because the ISCHA algorithm absorbs the shortest inducing paths, $U$ is bypassed by another node $V \in R$ and is therefore omitted. Thus, the local topological structure must satisfy $U \rightarrow X$ (since $U$ lies on a back-door path) and $X \rightarrow V$ (since $G_R$ contains no back-door paths). To maintain acyclicity, the edge $U \rightarrow V$ must exist. Now consider the manipulated graph $G_{\bar{X}}$. In $G_{\bar{X}}$, node $V$ remains a child of both $X$ and $U$. Since $V \in R$ and both $X$ and $U$ are parents of $V$, by the linear ordering condition, the parent $U$ must also be absorbed into the strong d-convex hull of $X$ and $Y$ in $G_{\bar X}$. Once $U$ is absorbed, the back-door path is captured within $G_R$, and the adjustment set becomes non-empty. The result, therefore, reduces to Theorem~\ref{th:validity}.
\end{proof}

In causal effect estimation, the assumption of a known causal network is very strong. In fact, one can currently obtain the Markov equivalence class underlying the observational data only via structure learning algorithms, and this class is often characterized by a CPDAG. To enhance collapsibility for causal estimates, we next propose a theory of collapsibility for causal estimation, specifically for CPDAGs.

\begin{theorem}\label{prop:CPDAG}{\rm
Suppose that $G$ is a DAG that is Markov equivalent to the CPDAG $\mathcal{G}$. For any two distinct vertices $X$ and $Y \notin \operatorname{pa}_{\mathcal{G}}(X)$, 
	a vertex set $R \supseteq \{X, Y\}$ is causal estimate collapsible in $\mathcal{G}$ if and only if it is causal estimate collapsible in $G$.}
\end{theorem}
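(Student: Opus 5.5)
The plan is to reduce causal estimate collapsibility in $\mathcal{G}$ to statements that depend only on the Markov equivalence class. Since the notion for a CPDAG has not been spelled out before the statement, I would first fix the natural reading. $R$ is causal estimate collapsible in $\mathcal{G}$ if the identity of Definition~\ref{def:causal_collapsibility} holds for the induced CPDAG $\mathcal{G}_R$, that is, for the members of the equivalence class restricted to $R$. The heart of the argument is then that both ingredients of that identity are class invariants. The first ingredient is the marginal MLEs $\hat P_{G_R}(\cdot)$. The second is the admissibility of the adjustment set used on both sides.

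\textbf{Step 1: the marginal MLE is a class invariant.} Invoke Lemma~\ref{thm-c-remo}. Under non-triviality, $\hat P(G_R)=\hat P(G)_R$ holds if and only if some $G'$ in the class of $G$ has $V\setminus R$ terminal. Condition (ii) of that lemma refers only to the equivalence class, so it holds for $G$ if and only if it holds for every $G''\equiv G$. Hence strong d-convexity of $R$ (condition (iii)) is the same for all members of $\mathcal{G}$.

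\textbf{Step 2: the induced graphs have the same MLE.} Because the MLE of a DAG model depends only on its independence model, and Markov equivalent DAGs give the same likelihood family, $\hat P_{G_R}=\hat P_{G''_R}$ whenever $G_R$ and $G''_R$ are Markov equivalent. I would verify this equivalence by showing that, when $V\setminus R$ is terminal in some $G'$, $G'_R$ is obtained by deleting sinks. Its independence model is then $I(G)_R$, which is common to the class. Thus every expression $\hat P_{G_R}(Y\mid X=x,Z=z)\hat P_{G_R}(Z=z)$ is independent of the chosen representative.

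\textbf{Step 3: transfer the back-door sets and the left-hand side.} I would identify $\hat P(Y\mid do(X=x))$ in $\mathcal{G}$ with the IDA-type quantity obtained from the parent sets of $X$ in the class. I would use the generalized back-door criterion of \cite{maathuis2015,perkovic2018} to argue that a set $Z\subseteq R$ is a valid adjustment set for $\mathcal{G}_R$ exactly when it is valid in each consistent DAG member. The hypothesis $Y\notin pa_{\mathcal{G}}(X)$ rules out the degenerate case where the orientation between $X$ and $Y$ changes the target.

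\textbf{Step 4: conclude both directions.} For ``only if'', collapsibility in $\mathcal{G}$ specializes to the representative $G$. For ``if'', apply Theorem~\ref{th:validity} to $G$, use Step 1 to move strong d-convexity to any other member $G''$, and use Step 2 to show the right-hand sides coincide.

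The main obstacle is Step 3. The back-door criterion in $G_R$ depends on orientations of undirected edges around $X$, and different members may admit different adjustment sets. I would first restrict attention to members sharing the orientation of $pa(X)$, as IDA does, and argue representative by representative. If that fails, I would weaken the claim to equality of the multiset of IDA estimates.
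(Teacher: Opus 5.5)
\textbf{The gap is in Steps~3--4, and it cannot be closed under the reading you fix at the start.}

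In the \emph{if} direction your hypothesis is the identity of Definition~\ref{def:causal_collapsibility} for $G$. You then use Step~1 to move \emph{strong d-convexity} to the other members, but nothing gives you strong d-convexity in the first place. Theorem~\ref{th:validity} only goes from ``$R$ is the strong d-convex hull of $\{X,Y\}$'' to causal estimate collapsibility, and the paper has no converse. The converse is in fact false, because the definition can hold vacuously.

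Here is a counterexample.
\begin{itemize}
\item Let $G''$ have edges $A\to B$, $A\to X$, $B\to X$, $A\to Y$, $B\to Y$, $X\to D$, $D\to Y$.
\item Let $G$ be obtained by reversing the covered edge $B\to X$. Both graphs have the same skeleton and the same v-structures $A\to Y\leftarrow D$ and $B\to Y\leftarrow D$, so they are Markov equivalent.
\item Take $R=\{X,B,Y\}$. Then $G_R$ is $X\to B\to Y$. It has no non-empty valid back-door set, so $G$ is causal estimate collapsible onto $R$ vacuously. This holds even though $X\to D\to Y$ is an inducing path, so $R$ is not even d-convex.
\item In $G''_R$, which is $X\leftarrow B\to Y$, the set $Z=\{B\}$ is valid. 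Since $G''_R$ makes $Y$ independent of $X$ given $B$, the right-hand side reduces to $\sum_b\hat p(y\mid b)\hat p(b)=\hat p(y)$, with $\hat p$ the empirical distribution.
\item By contrast, $\hat P(Y\mid do(X=x))$ in $G''$ depends on $x$ through $X\to D\to Y$ for generic data.
\end{itemize}
So under the member-by-member reading, the \emph{if} direction fails.

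Under the reading of Step~3 (sets valid in every member of $\mathcal{G}_R$, which has an undirected edge between $X$ and $B$ and the edge $B\to Y$), no non-empty set is valid in all members. The CPDAG-level condition is then vacuous while $G''$ violates it, so your \emph{only if} step fails instead. Transferring back-door sets between members is therefore neither possible ($\{B\}$ is valid in $G''_R$ but is a descendant of $X$ in $G_R$) nor the right goal. Falling back to multisets of IDA estimates changes the statement rather than proving it.

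\textbf{How the paper avoids this, and how to repair your argument.} The paper never compares adjustment sets across members. It argues only that the strong d-convex hull of $\{X,Y\}$ is the same vertex set in every DAG of $[\mathcal{G}]$: mf-pairs are d-separation statements, hence class invariants, and ISCHA is driven by them. Theorem~\ref{th:validity} can then be applied to each member separately with the same $R$, which is exactly what Subgraph IDA relies on.

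Your Step~1 already yields this invariance, and more convincingly than the paper does. The mf-pair argument controls only condition (i) of Definition~\ref{def:sc}, whereas condition (ii) involves parent sets that vary within the class. Condition (ii) of Lemma~\ref{thm-c-remo} is visibly a property of the whole equivalence class and covers both conditions.

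The repair is therefore:
\begin{itemize}
\item Drop Step~3, and read the theorem as \emph{$R=sch(\{X,Y\})$ in $G$ if and only if $R=sch(\{X,Y\})$ in every member of $[\mathcal{G}]$}.
\item Prove this from Step~1 together with Lemmas~\ref{lem-closure} and~\ref{lem-uniquenss-ch}: the family of strongly d-convex sets is the same for all members, hence so is its least element containing $\{X,Y\}$.
\item Invoke Theorem~\ref{th:validity} member by member.
\end{itemize}
Step~2 also needs a small fix. Your verification treats only the member $G'$ in which $V\setminus R$ is terminal. For an arbitrary member $G''$ you need $I(G''_R)=I(G'')_R$, which follows from the d-convexity of $R$ in $G''$.
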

\begin{proof}
Since all DAGs in the Markov equivalence class represented by $\mathcal{G}$ share the same conditional independence relations, they also have identical collections of $mf$-pairs. According to the definition of collapsible sets, for any two distinct DAGs $G_1$ and $G_2$ in the equivalence class, we have $\mathcal{MF}_{G_1}(R) = \mathcal{MF}_{G_2}(R)$. By Lemma~\ref{lem-mf} and Lemma~\ref{pro-convex hull}, we know the core of ISCHA is absorbing the $mf$-pairs, so they produce the same strong d-convex hull in $G_1$ and $G_2$. Consequently, because $\mathcal{G}$ represents this equivalence class, the strong d-convex hull of $\mathcal{G}$ is identical to that of any of its consistent DAGs.
\end{proof}

The set $R$ is purely a collection of vertices; the induced subgraphs $G_R$ in $\mathcal{G}_R$ may differ structurally. Nevertheless, (causal) estimate collapsibility of $R$ is invariant across all Markov equivalent DAGs in $\mathcal{G}_R$.

\begin{figure}[pos=htbp]
	\centering
	\begin{subfigure}[t]{0.32\textwidth}
		\centering
		\begin{tikzpicture}[
			scale=0.9,
			transform shape,
			nodeR/.style = {fill=red!15,draw=red!70!black}, 
			nodeMB/.style = {fill=red!15,draw=black, thick}, 
			every node/.style={
				draw, 
				circle, 
				thick, 
				minimum size=8mm,
				inner sep=0pt, 
				align=center,
				font = \Large
			}]
			\node [nodeR](X) at (0,0) {X};
			\node (D) at (2,0) {D};
			\node [nodeR](Y) at (4,0) {Y};
			\node (A) at (0,2) {A};
			\node (B) at (2,2) {B};
			\node (C) at (4,2) {C};
			\draw [-, thick] (X) -- (D);  
			\draw [->, thick] (D) -- (Y);  
			\draw [-, thick] (A) -- (X);  
			\draw [-, thick] (A) -- (B);  
			\draw [-, thick] (B) -- (X);  
			\draw [->, thick] (B) -- (Y);  
			\draw [->, thick] (C) -- (Y);    
			
		\end{tikzpicture}
		\caption{$\mathcal{G}$}
		\label{fig:CPDAG-example1a}
	\end{subfigure}
	\hfill
	\begin{subfigure}[t]{0.32\textwidth}
		\centering
		\begin{tikzpicture}[
			scale=0.9,
			transform shape,
			nodeR/.style = {fill=red!15,draw=red!70!black}, 
			nodeMB/.style = {fill=red!15,draw=black, thick}, 
			every node/.style={
				draw, 
				circle, 
				thick, 
				minimum size=8mm,
				inner sep=0pt, 
				align=center,
				font = \Large
			}]
			
			\node [nodeR](X) at (0,0) {X};
			\node (D) at (2,0) {D};
			\node [nodeR](Y) at (4,0) {Y};
			\node (A) at (0,2) {A};
			\node (B) at (2,2) {B};
			\node (C) at (4,2) {C};
			\draw [->, thick] (X) -- (D);  
			\draw [->, thick] (D) -- (Y);  
			\draw [->, thick] (A) -- (X);  
			\draw [->, thick] (B) -- (A);  
			\draw [->, thick] (B) -- (X);  
			\draw [->, thick] (B) -- (Y);  
			\draw [->, thick] (C) -- (Y);     
			
		\end{tikzpicture}
		\caption{A DAG $G$ in $\mathcal{G}$}
		\label{fig:CPDAG-example1b}
	\end{subfigure}
	\hfill
	\begin{subfigure}[t]{0.32\textwidth}
		\centering
		\begin{tikzpicture}[
			scale=0.9,
			transform shape,
			nodeR/.style = {fill=red!15, draw=red!70!black}, 
			nodeMB/.style = {fill=red!15,draw=black, thick},
			nodeP2/.style = {fill=blue!15, draw=blue!70!black, dotted},
			nodePath/.style = {fill=green!15, draw=green!70!black, dotted},
			nodep/.style = {fill=yellow!20, draw=yellow!80!black, dashed},
			every node/.style={
				draw, 
				circle, 
				thick, 
				minimum size=8mm,
				inner sep=0pt, 
				align=center,
				font = \Large
			}]		
			\node [nodeR](X) at (0,0) {X};
			\node (D) at (2,0) {D};
			\node [nodeR](Y) at (4,0) {Y};
			\node (B) at (2,2) {B};
			\node (C) at (4,2) {C};
			\draw [-, thick] (X) -- (D);  
			\draw [->, thick] (D) -- (Y);   
			\draw [-, thick] (B) -- (X);  
			\draw [->, thick] (B) -- (Y);  
			\draw [->, thick] (C) -- (Y);    
		\end{tikzpicture}
		\caption{$\mathcal{G}_R$}
		\label{fig:CPDAG-example1c}
	\end{subfigure}
	
	\hfill
	\begin{subfigure}[t]{0.32\textwidth}
		\centering
		\begin{tikzpicture}[
			scale=0.9,
			transform shape,
			nodeR/.style = {fill=red!15, draw=red!70!black}, 
			nodeMB/.style = {fill=red!15,draw=black, thick},
			nodeP2/.style = {fill=blue!15, draw=blue!70!black, dotted},
			nodePath/.style = {fill=green!15, draw=green!70!black, dotted},
			nodep/.style = {fill=yellow!20, draw=yellow!80!black, dashed},
			every node/.style={
				draw, 
				circle, 
				thick, 
				minimum size=8mm,
				inner sep=0pt, 
				align=center,
				font = \Large
			}]		
			\node [nodeR](X) at (0,0) {X};
			\node (D) at (2,0) {D};
			\node [nodeR](Y) at (4,0) {Y};
			\node (B) at (2,2) {B};
			\node (C) at (4,2) {C};
			\draw [->, thick] (X) -- (D);  
			\draw [->, thick] (D) -- (Y);   
			\draw [->, thick] (X) -- (B);  
			\draw [->, thick] (B) -- (Y);  
			\draw [->, thick] (C) -- (Y);   
		\end{tikzpicture}
		\caption{${G}_R^1$}
		\label{fig:CPDAG-example1d}
	\end{subfigure}
	\hfill
	\begin{subfigure}[t]{0.32\textwidth}
		\centering
		\begin{tikzpicture}[
			scale=0.9,
			transform shape,
			nodeR/.style = {fill=red!15,draw=red!70!black}, 
			nodeMB/.style = {fill=red!15,draw=black, thick}, 
			nodePath/.style = {fill=green!15, draw=green!70!black, dotted},
			nodeP2/.style = {fill=blue!15, draw=blue!70!black, dotted},
			nodep/.style = {fill=yellow!20, draw=yellow!80!black, dashed},
			every node/.style={
				draw, 
				circle, 
				thick, 
				minimum size=8mm,
				inner sep=0pt, 
				align=center,
				font = \Large
			}]
			\node [nodeR](X) at (0,0) {X};
			\node (D) at (2,0) {D};
			\node [nodeR](Y) at (4,0) {Y};
			\node (B) at (2,2) {B};
			\node (C) at (4,2) {C};
			\draw [->, thick] (D) -- (X);  
			\draw [->, thick] (D) -- (Y);   
			\draw [->, thick] (X) -- (B);  
			\draw [->, thick] (B) -- (Y);  
			\draw [->, thick] (C) -- (Y);   
		\end{tikzpicture}
		\caption{${G}_R^2$}
		\label{fig:CPDAG-example1e}
	\end{subfigure}
	\hfill
	\begin{subfigure}[t]{0.32 \textwidth}
		\centering
		\begin{tikzpicture}[
			scale=0.9,
			transform shape,
			nodeR/.style = {fill=red!15, draw=red!70!black}, 
			nodeMB/.style = {fill=red!15,draw=black, thick},
			nodeP2/.style = {fill=blue!15, draw=blue!70!black, dotted},
			nodePath/.style = {fill=green!15, draw=green!70!black, dotted},
			nodep/.style = {fill=yellow!20, draw=yellow!80!black, dashed},
			every node/.style={
				draw, 
				circle, 
				thick, 
				minimum size=8mm,
				inner sep=0pt, 
				align=center,
				font = \Large
			}]		
			\node [nodeR](X) at (0,0) {X};
			\node (D) at (2,0) {D};
			\node [nodeR](Y) at (4,0) {Y};
			\node (B) at (2,2) {B};
			\node (C) at (4,2) {C};
			\draw [->, thick] (X) -- (D);  
			\draw [->, thick] (D) -- (Y);   
			\draw [->, thick] (B) -- (X);  
			\draw [->, thick] (B) -- (Y);  
			\draw [->, thick] (C) -- (Y);   
		\end{tikzpicture}
		\caption{${G}_R^3$}
		\label{fig:CPDAG-example1f}
	\end{subfigure}
	
	\caption{A causal estimate collapsible Set in $\mathcal{G}$.}
	\label{fig:CPDAG-example}
\end{figure}
\begin{example}\label{ex:dagcollap}
Figure~\ref{fig:CPDAG-example}(a) presents a CPDAG $\mathcal{G}$ corresponding to  Figure~\ref{fig:subexample}(a), which has 8 Markov equivalent DAGs. By selecting a Markov equivalent DAG of $\mathcal{G}$  as shown in Figure~\ref{fig:CPDAG-example}(b), the node $\{A\}$ can be collapsed in ${G}$. Applying Theorem~\ref{prop:CPDAG} yields Figure~\ref{fig:CPDAG-example}(c), which contains only 3 Markov equivalent DAGs. These DAGs are listed in Figures~\ref{fig:CPDAG-example}(d)-\ref{fig:CPDAG-example}(f). Thus, applying our method to these three DAGs suffices to obtain all possible causal effects of $X$ on $Y$.
\end{example}

Example~\ref{ex:dagcollap} demonstrates that causal estimate collapsibility can effectively reduce both 
the number of Markov equivalent DAGs to be enumerated and the size of the CPDAG $\mathcal{G}$ when estimating all possible causal effects, thereby significantly improving computational efficiency. Based on these results, we integrate the graph reduction procedure with the IDA framework, as summarized in Algorithm~\ref{alg:collapsible_ida}.

\begin{breakablealgorithm}
	\caption{Subgraph IDA for CPDAGs}\label{alg:collapsible_ida}
	\begin{algorithmic}[1]
		\Require A CPDAG $\mathcal{G}$, treatment $X$, outcome $Y$, and observational data
		\Ensure A multiset $\Theta$ of possible causal effects of $X$ on $Y$
		\State Initialize $\Theta \leftarrow \emptyset$
		
		\State Obtain an arbitrary DAG $G \in [\mathcal{G}]$ via Meek's rules~\citep{meek1995}
		\State $R_{\text{init}} \leftarrow \mathrm{ISCHA}(G, \{X,Y\})$ 
		\For{each DAG $G \in [\mathcal{G}]$ consistent with $\mathcal{G}$}
		\State $R \leftarrow R_{\text{init}}$
		\State $Z \leftarrow \operatorname{pa}_{G_R}(X)$
		
		\If{$Z = \emptyset$}
		\State Construct the manipulated graph $G_{\bar X}$
		\State $R \leftarrow \mathrm{ISCHA}(G_{\bar X}, \{X,Y\})$
		\State $Z \leftarrow \operatorname{pa}_{G_R}(X)$
		\EndIf
		\If{$Z = \emptyset$}
		\State Estimate $\theta$ via $P(Y\mid X)$
		\Else
		\State Estimate $\theta$ via $\sum_z P(Y\mid X,Z=z)P(Z=z)$
		\EndIf
		\State $\Theta \leftarrow \Theta \cup \{\theta\}$
		\EndFor
		\State \Return $\Theta$
	\end{algorithmic}
\end{breakablealgorithm}

To detail the performance of our proposed algorithms for causal effect estimation, we first employ various simulations to demonstrate their benefits.

\section{ Numerical Simulation and Empirical Analysis}\label{sec5}
Based on this collapsible strategy and the theoretical results above,  we design four experiments to evaluate the performance of our method. 

(i) Demonstrate how the proposed collapsible strategy reduces the model dimension.

(ii) For probabilistic reasoning, the ISCHA algorithm is used to extract a minimal collapsible subgraph containing the target variables, compute the marginal probability on this subgraph, verify its consistency with the full-network inference, and assess the efficiency gain.
 
 (iii) For causal effect estimation, we apply Subgraph IDA on the local subgraph, compare its results with those from IDA, and verify both their consistency and the computational speedup.
 
 (iv) We evaluate the scalability and practical utility of Subgraph IDA on CBNs of varying sizes and sample sizes.

\subsection{Collapsible Strategy }
  
Figure~\ref{fig:hailfinder}(a) presents a weather forecasting CBN consisting of 56 variables from the bnlearn repository \footnote{\url{https://www.bnlearn.com/bnrepository/}}. We focus on two key variables: the forecast for the mountainous region (MountainFcst) and the area of moisture and dry air (AreaMoDryAir). Inferences on these two variables can be conducted using a much smaller CBN, namely the subgraph shown in Figure~\ref{fig:hailfinder}(b), which is induced by our ISCHA algorithm and consists of 16 variables. Thus, the original 56-dimensional model is collapsed onto a 16-dimensional submodel, thereby reducing computational complexity for subsequent research. The detailed reduction process, described in Algorithms~\ref{Alg-ip}--\ref{Alg-c}, is as follows.

Consider the target set $R = \{\text{AreaMoDryAir, MountainFcst}\}$. First, by invoking Algorithm~\ref{Alg-ip}, we identify the initial $\mathrm{AIP}$ set 
$\{\text{AreaMoDryAir, CldShadeOth, InsInMt, MountainFcst}\}.$
This step captures the minimal inducing paths between the target variables, effectively pruning 52 irrelevant nodes. Subsequently, Algorithm~\ref{Alg-d} verifies the d-convexity of this set. In this instance, the procedure returns $H = \mathrm{AIP}$, confirming that no additional nodes are required to satisfy d-convexity.
However, while these four variables preserve model collapsibility, the requirements for estimate collapsibility are stricter. Starting from the current set $H$, Algorithm~\ref{Alg-c} iteratively evaluates linear ordering as defined in Definition~\ref{def:sc}. It identifies that several key parent nodes such as \text{CombVerMo}, \text{SubjVertMo}, and \text{AreaMeso\_ALS} must be included to satisfy the strong d-convex criterion. After several iterations of linear ordering, the original 56-variable model is collapsed onto a 16-variable submodel that is sufficient for exact inference on $R$, significantly reducing computational complexity, as shown in Figure~\ref{fig:hailfinder}(b).

\begin{figure}[pos=htbp]
	\centering
	\begin{minipage}[b]{0.65\textwidth}
		\centering
		\includegraphics[width=\textwidth]{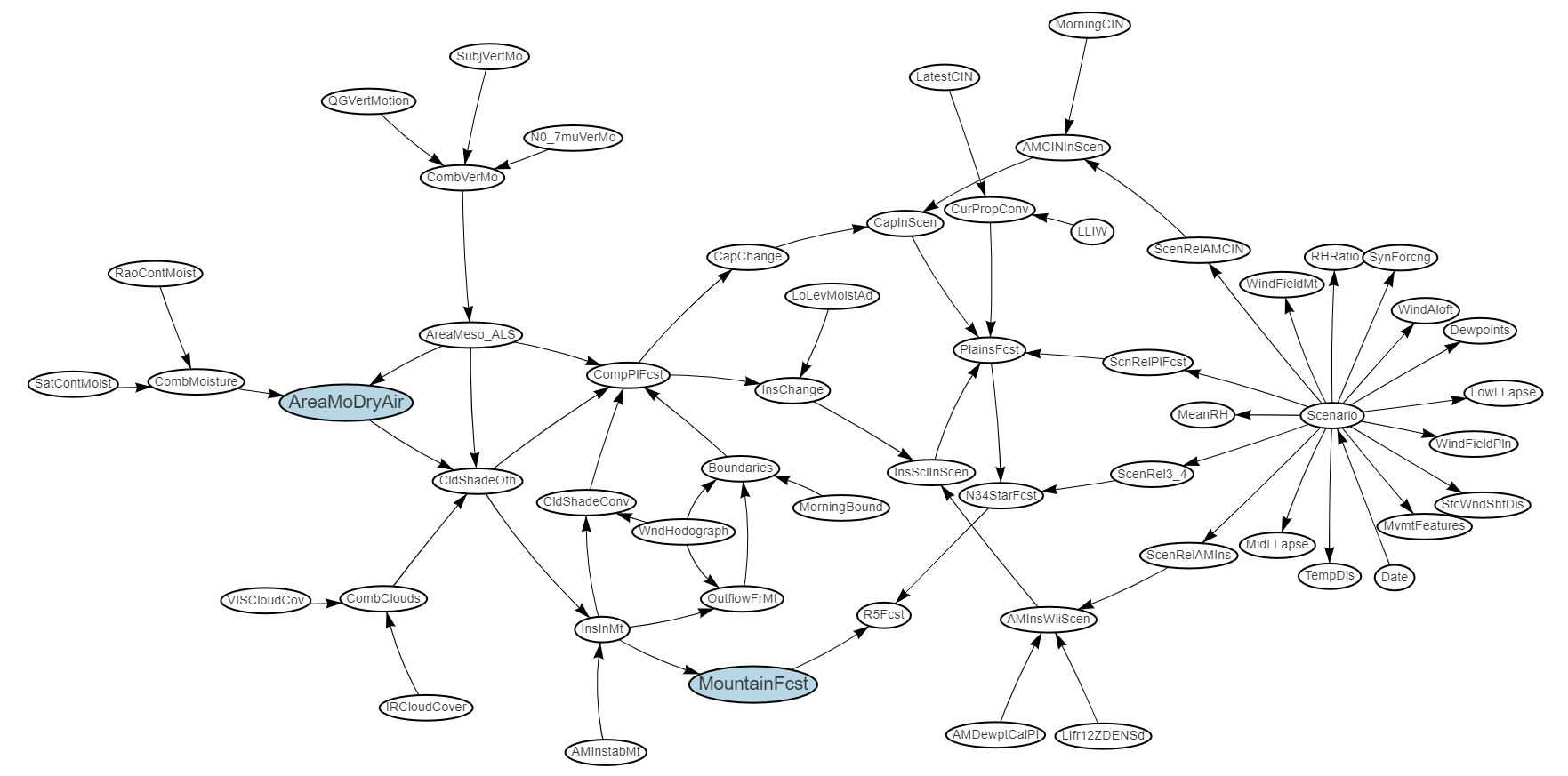}
	\end{minipage}
	\hfill 
	\begin{minipage}[b]{0.30\textwidth}
		\includegraphics[width=\textwidth]{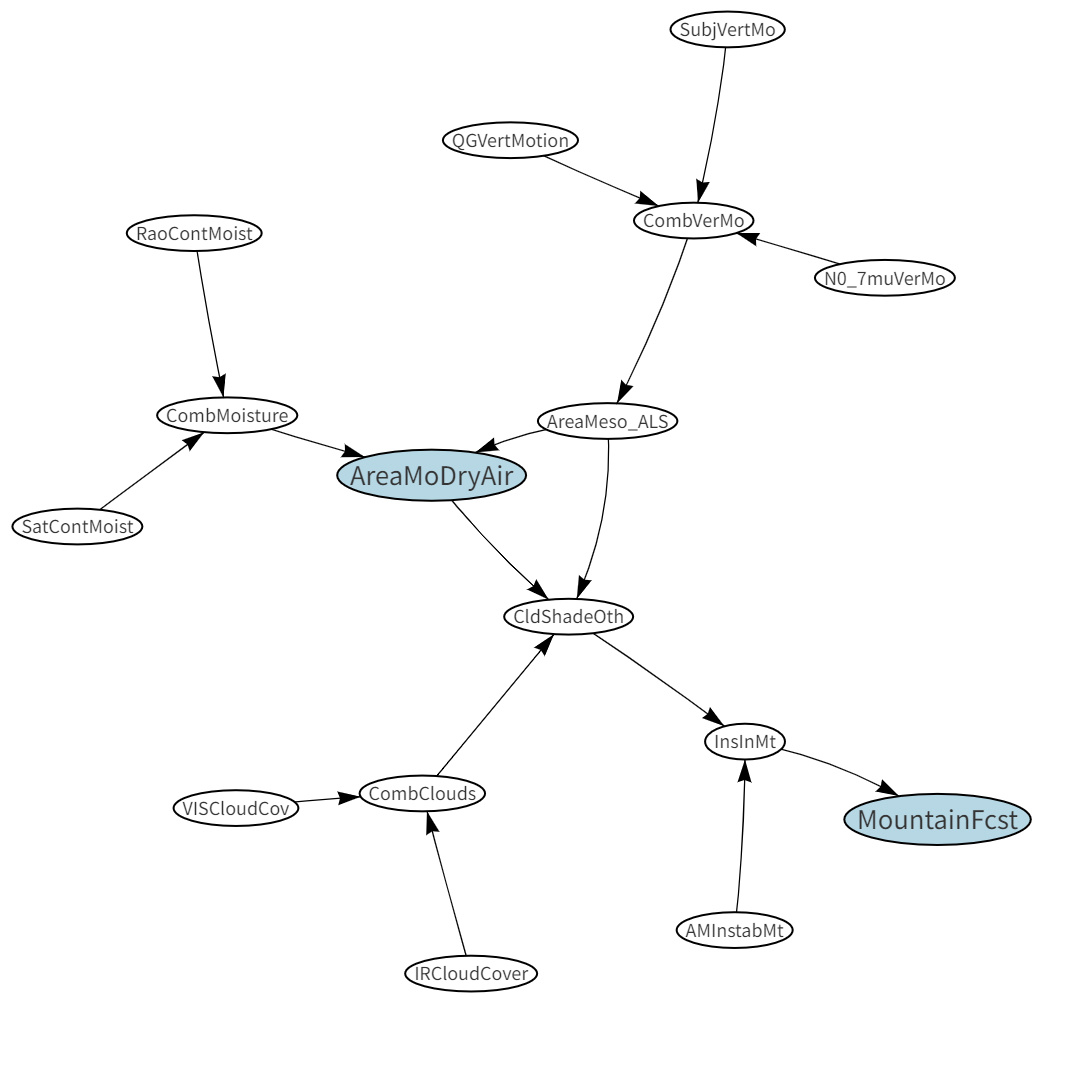}
	\end{minipage}
	\caption{(a) A graph of the Hailfinder network consisting of 56 variables, and (b) a subgraph induced by the minimal collapsible set containing the variables of interest.}
	\label{fig:hailfinder}
\end{figure}

\subsection{Probabilistic Reasoning}
The central idea is to leverage the ISCHA algorithm to answer specific probabilistic queries by extracting a minimal collapsible subgraph, thereby avoiding costly global inference. In this experiment, we select CBNs of varying sizes from the ~\href{https://www.bnlearn.com/bnrepository/}{bnlearn repository}. For each CBN, we randomly select 100 pairs of target variables $(X, Y)$. Simulated observational datasets are generated via forward sampling from the true joint distribution of the network, with sample sizes $N \in \{500, 1000, 2500, 5000, 7500, 10000\}$. For each sample size, the experiment is repeated 100 times. Given the known network structure and sampled data, the model parameters are learned by MLE. We compute the exact marginal joint probability $P(X, Y)$ using the variable elimination algorithm under two schemes: (i) Model fitting and probabilistic reasoning are performed on the entire original network;  (ii) The ISCHA algorithm first extracts the collapsed subgraph for the target pair, then reasoning is conducted on the reduced subgraph. 

We measure the discrepancy between the probability distributions obtained from the global approach $P$ and the local approach $Q$  using the
Kullback-Leibler divergence, defined as $D_{\mathrm{KL}}(P\|Q) = \sum_{x \in \mathcal{X}} P(x) \log \frac{P(x)}{Q(x)}.$
Concurrently, we calculate the speedup ratio ($\mathrm{SR}$) and the node reduction ratio ($\mathrm{NRR}$), defined respectively as  
$$\mathrm{SR} = \frac{\mathrm{Time}(\mathrm{Full}) - \mathrm{Time}(\mathrm{Local})}{\mathrm{Time}(\mathrm{Full})},
 \qquad
\mathrm{NRR}=\frac{\mathrm{Node}(\mathrm{Full}) - \mathrm{Node}(\mathrm{Local})}{\mathrm{Node}(\mathrm{Full})}.
$$

\begin{table}[pos=htbp]
	\centering
	\caption{Performance of the ISCHA-based Local approach for probabilistic queries.}
	\label{tab}
	\renewcommand{\arraystretch}{1.3}
	\setlength{\tabcolsep}{3.5mm}
\begin{tabular*}{\linewidth}{@{\extracolsep{\fill}}lrrrrrrr@{}}
		\toprule
		\multirow{2}{*}{Networks}
		& \multicolumn{2}{c}{Nodes}
		& \multicolumn{2}{c}{Time (s)}
		& KL-divergence
		& NRR
		& SR \\
		\cmidrule(lr){2-3} \cmidrule(lr){4-5}
		& Full  & Local & Full  & Local & (Full vs Local) &  (\%) & (\%) \\
		\midrule
		Hailfinder   & 56    & 14.19 & 0.00107 & 0.00104 & $1.19 \times 10^{-17}$  & 75 & 3 \\
		Win95pts     & 76    & 8.77  & 0.00517 & 0.00469 & $8.64 \times 10^{-18}$  & 88 & 9 \\
		Pathfinder   & 109   & 4.68  & 0.00051 & 0.00040 & $4.99 \times 10^{-18}$  & 96 & 22 \\
		Munin1       & 186   & 22.35 & 0.00207 & 0.00193 & $9.14 \times 10^{-19}$ & 88 & 7 \\
		Andes        & 223   & 68.83 & 0.00517 & 0.00469 & $8.64 \times 10^{-18}$  & 69 & 9 \\
		\bottomrule
	\end{tabular*}
\end{table}
Table~\ref{tab} demonstrates that the local subgraph accurately preserves the probability distribution of the full graph, with the KL divergence being negligible (on the order of $10^{-18}$).
This high accuracy is accompanied by a substantial reduction in graph scale, where the node reduction ratio reaches up to 96\%.
Moreover, the speedup ratio shows that our local inference method is computationally efficient, and the query-time savings clearly exceed the overhead of graph collapse, thereby outperforming global inference in total runtime.
 
\subsection{ Causal Effect Estimation}
To evaluate the accuracy and computational efficiency of the proposed Subgraph IDA algorithm in estimating causal effects, we generate underlying true DAGs based on Erdős-Rényi random graphs. We vary the number of vertices $|V| \in \{20,40,60,80,100\}$, and the edge density $d \in \{0.05,0.1,0.15\}$, and then derive their corresponding CPDAGs. For each configuration, we randomly select 100 node pairs to serve as the treatment and outcome variables. We compute the causal effects using both the proposed Subgraph IDA and the IDA (implemented via the \texttt{ida()} function in the \texttt{pcalg} package)\citep{maathuis2010}.  

We employ the following indices to comprehensively evaluate the performance.
{(i) Recall and Precision.} For a given equivalence class represented by a CPDAG, the IDA algorithm returns a multiset of possible causal effects. To systematically compare the sets of causal effects estimated on the full graph versus the local subgraph, we define the following metrics:
$$\mathrm{REC}=\frac{|E_{\mathrm{full}}\cap E_{\mathrm{local}}|}{|E_{\mathrm{full}}|} , \mathrm{PREC} = \frac{|E_{\mathrm{local}}\cap E_{\mathrm{full}}|}{|E_{\mathrm{local}}|},$$
where $E_{\mathrm{full}}$ and $E_{\mathrm{local}}$ denote the sets of causal effect values (after removing duplicates) estimated on the full graph and the local subgraph, respectively. Achieving $\mathrm{REC} = 1$ and $\mathrm{PREC} = 1$ demonstrates that the local estimation perfectly matches the global estimation, empirically confirming that our approach neither loses any genuine causal information nor introduces any spurious causal effects.
{(ii) Speedup Factor.} To directly illustrate the computational efficiency advantage of Subgraph IDA, we calculate the speedup factor:
$\mathrm{S} = \frac{T_{\mathrm{full}}}{T_{\mathrm{local}}},$
where $T_{\mathrm{full}}$ denotes the execution time for estimating causal effects using IDA, and $T_{\mathrm{local}}$ represents the execution time of our Subgraph IDA. A speedup factor $S \ge 1$ indicates that the Subgraph IDA algorithm achieves faster computational performance.

\begin{figure}[pos=htbp]
	\centering
	\includegraphics[width=0.96\textwidth]{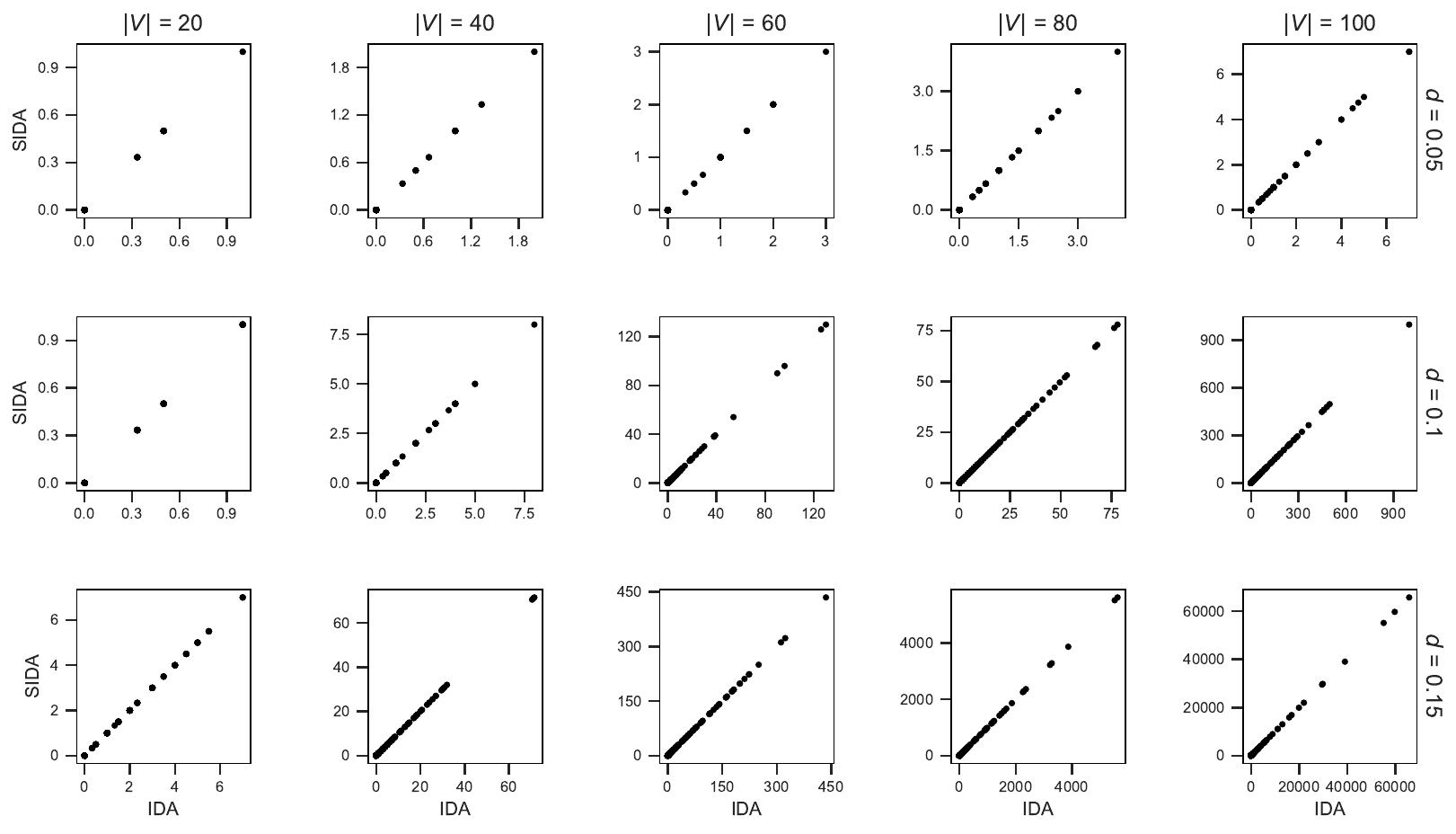}
	\caption{Causal Effect Performance of Subgraph IDA (SIDA) vs. IDA under Varying Node Sizes and Network Densities.}
		\label{fig:mainresults1}
\end{figure}
\begin{table}[pos=htbp]
	\centering
	\caption{Experimental Results on Random Graphs}
	\label{tab2}
\begin{tabular*}{\linewidth}{@{\extracolsep{\fill}}ccccccc@{}}
		\toprule
		\multirow{2}{*}{Nodes} & \multirow{2}{*}{Density} & \multirow{2}{*}{Recall} & \multirow{2}{*}{Precision} 
		& \multicolumn{2}{c}{Time (s)} & \multirow{2}{*}{Speedup} \\
		\cmidrule(lr){5-6}
		& & & & Local & Full & \\
		\midrule
		20  & 0.05 & 1.0000 & 1.0000 & 0.0039 & 0.0099 & 2.7973  \\
		20  & 0.10 & 1.0000 & 1.0000 & 0.0041 & 0.0114 & 3.0075  \\
		20  & 0.15 & 1.0000 & 1.0000 & 0.0066 & 0.0100 & 1.9381  \\
		\addlinespace
		40  & 0.05 & 1.0000 & 1.0000 & 0.0068 & 0.0339 & 5.6806  \\
		40  & 0.10 & 1.0000 & 1.0000 & 0.0084 & 0.0275 & 4.3347  \\
		40  & 0.15 & 1.0000 & 1.0000 & 0.0174 & 0.0430 & 4.8261  \\
		\addlinespace
		60  & 0.05 & 1.0000 & 1.0000 & 0.0088 & 0.0451 & 7.4439  \\
		60  & 0.10 & 1.0000 & 1.0000 & 0.0609 & 0.2018 & 7.1802  \\
		60  & 0.15 & 0.9990 & 1.0000 & 0.0659 & 0.1963 & 7.6221  \\
		\addlinespace
		80  & 0.05 & 1.0000 & 1.0000 & 0.0112 & 0.0928 & 13.9504 \\
		80  & 0.10 & 1.0000 & 1.0000 & 0.1020 & 0.2606 & 12.2631 \\
		80  & 0.15 & 1.0000 & 1.0000 & 0.2329 & 0.4982 & 13.5119 \\
		\addlinespace
		100 & 0.05 & 0.9978 & 1.0000 & 0.0261 & 0.1853 & 18.2208 \\
		100 & 0.10 & 1.0000 & 1.0000 & 0.4157 & 1.0523 & 17.4892 \\
		100 & 0.15 & 1.0000 & 1.0000 & 1.0781 & 1.7683 & 17.5248 \\
		\bottomrule
	\end{tabular*}
\end{table}

Figure~\ref{fig:mainresults1} visually validates the consistency of causal effect estimates between Subgraph IDA (SIDA) and the IDA method across different network sizes $(|V| = 20, 40, 60, 80, 100)$ and graph densities $(d = 0.05, 0.1, 0.15)$. Each subplot presents a scatter plot of causal effect values estimated by SIDA against those from IDA, where all points lie perfectly along the diagonal line. This demonstrates that SIDA produces identical causal effect estimates to the original IDA method, regardless of network scale or density. Complementing this visual evidence, Table~\ref{tab2} summarizes the experimental results across different network sizes and graph densities.  As shown, both REC and PREC remain nearly consistently at 1.0 across all experimental configurations. This confirms that the multiset of causal effects estimated by Subgraph IDA is identical to the IDA output, preserving all genuine causal information without introducing spurious effects. Meanwhile, the local method consistently reduces computational time compared with the full-graph approach, yielding substantial speedup across all configurations. The time savings are particularly pronounced for medium to large networks, demonstrating the efficiency advantage of performing causal effect estimation on a collapsed local subgraph rather than the full graph.



\subsection{Empirical Analysis}
To evaluate the scalability and practical utility of Subgraph IDA, we select representative CBNs from the ~\href{https://www.bnlearn.com/bnrepository/}{bnlearn repository}, ranging from small to medium-sized networks (e.g., Sachs, Alarm, Insurance) to large networks (e.g., Hepar II, Pathfinder, Munin1). For each network and each sample size $N$ (ranging from 500 to 10,000), we randomly sampled multiple pairs of treatment and outcome variables and repeated the experiment 50 times. We estimate causal effects using noisy covariance matrices to simulate finite sample errors, allowing us to compare the exactness and speedup of Subgraph IDA against IDA.

\begin{figure}[pos=htbp]
	\centering
	\centering
	\includegraphics[width=\textwidth]{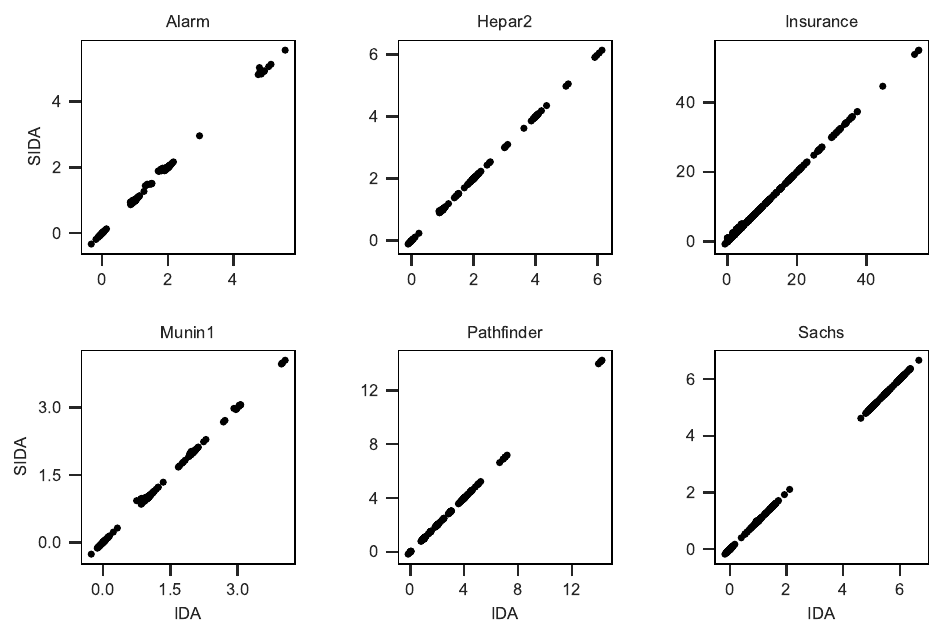}
	\caption{Causal Effect Performance of Subgraph IDA (SIDA) vs. IDA under Varying Networks.}
	\label{fig:main_results2}
\end{figure}
\begin{table}[pos=htbp]
	\centering
	\caption{Experimental Results on Real-World BNs}
	\label{tab3}
	\begin{tabular*}{\linewidth}{@{\extracolsep{\fill}}ccccccc@{}}
		\toprule
		\multirow{2}{*}{Network} 
		& \multirow{2}{*}{Recall} 
		& \multirow{2}{*}{Precision} 
		& \multicolumn{2}{c}{Time (s)} 
		& \multirow{2}{*}{Speedup} \\
		\cmidrule(lr){4-5}
		& & & Local & Full & \\
		\midrule
		Sachs       & 0.9367  & 1.0000  & 0.0088  & 0.0095  & 1.13    \\
		Insurance   & 0.9183  & 1.0000  & 0.0215  & 0.0290  & 1.82    \\
		Alarm       & 0.8633  & 1.0000  & 0.0118  & 0.0354  & 3.44    \\
		Hepar2      & 0.9678  & 1.0000  & 0.0856  & 0.1788  & 3.00    \\
		Pathfinder  & 0.9933  & 1.0000  & 0.0282  & 0.1337  & 7.27    \\
		Munin1      & 0.9050  & 1.0000  & 0.1342  & 0.8026  & 30.37   \\
		\bottomrule
	\end{tabular*}
\end{table}

 Figure~\ref{fig:main_results2} and Table~\ref{tab3} present the comparison of causal effect estimates on real world BNs. The scatter plots show that the two methods yield nearly identical causal effect values across all tested networks, with points tightly distributed along the diagonal line. As shown in the table, the precision remains 1.0 across all networks, while the recall remains relatively high ($\ge$ 0.86). The perfect precision demonstrates that Subgraph IDA does not introduce any spurious causal effects. The slight drop in recall can be attributed to finite sample noise: when true causal effects are zero, IDA may produce near-zero estimates due to sampling variability, while Subgraph IDA tends to exclude such negligible values after pruning irrelevant graph structures, leading to a minor reduction in recall. The speedup factor $S$ increases with network size ($S \geq 1$ across all networks), ranging from a mild improvement on small networks to up to 30.37$\times$ on large networks such as Munin1, confirming that Subgraph IDA becomes more efficient as networks scale up.
%

\section{ Conclusion}\label{sec6}
 {To efficiently estimate causal effects for CPDAGs in high-dimensional settings, we introduced the concept of estimate collapsibility for causal effect estimation. We utilized strong d-convex hulls to characterize minimal collapsible sets and designed an algorithm to identify them. Based on the graph reduction procedure, we combined it with the back-door criterion to estimate causal effects for CPDAGs, which reduces the enumeration of Markov equivalent DAGs and adjustment sets. Our method ensures that the causal effect estimated from the local submodel coincides exactly with that from the full model, while delivering substantial speedup in high dimensional settings. These key properties make our method highly practical for real world applications.
 	
Nonetheless, our work has certain limitations that merit attention. Specifically, the current definition of inducing paths restricts causal effect estimation to non-adjacent $(X,Y)$ pairs, i.e., those with purely indirect causal effects. Moreover, the framework does not yet account for graphs with latent variables, and we have not explored whether other types of adjustment sets can benefit from collapsibility. These limitations provide promising directions for future research.}









\section*{Acknowledgement}
The first author and the third author were supported by the National Natural Science Foundation of China (12561047), the Xinjiang Talent Development Fund (XJRC-2025-KJ-PY-KJLJ-108), and the 2025 Central Guidance for Local Science and Technology Development Fund (ZYYD2025ZY20). 
The second author was supported in part by the National Natural Science Foundation of China (12426520 and 12426105).

\printcredits

\bibliographystyle{cas-model2-names}

\bibliography{cas-refs}

\begin{thebibliography}{41}
\expandafter\ifx\csname natexlab\endcsname\relax\def\natexlab#1{#1}\fi
\providecommand{\url}[1]{\texttt{#1}}
\providecommand{\href}[2]{#2}
\providecommand{\path}[1]{#1}
\providecommand{\DOIprefix}{doi:}
\providecommand{\ArXivprefix}{arXiv:}
\providecommand{\URLprefix}{URL: }
\providecommand{\Pubmedprefix}{pmid:}
\providecommand{\doi}[1]{\href{http://dx.doi.org/#1}{\path{#1}}}
\providecommand{\Pubmed}[1]{\href{pmid:#1}{\path{#1}}}
\providecommand{\bibinfo}[2]{#2}
\ifx\xfnm\relax \def\xfnm[#1]{\unskip,\space#1}\fi
\bibitem[{Asmussen and Edwards(1983)}]{asmussen1983}
\bibinfo{author}{Asmussen, S.}, \bibinfo{author}{Edwards, D.},
  \bibinfo{year}{1983}.
\newblock \bibinfo{title}{Collapsibility and response variables in contingency
  tables}.
\newblock \bibinfo{journal}{Biometrika} \bibinfo{volume}{70},
  \bibinfo{pages}{567--578}.
\bibitem[{Braun and Schwartz(2025)}]{braun2025}
\bibinfo{author}{Braun, M.}, \bibinfo{author}{Schwartz, E.M.},
  \bibinfo{year}{2025}.
\newblock \bibinfo{title}{Where {A}/{B} testing goes wrong: How divergent
  delivery affects what online experiments cannot (and can) tell you about how
  customers respond to advertising}.
\newblock \bibinfo{journal}{Journal of Marketing} \bibinfo{volume}{89},
  \bibinfo{pages}{71--95}.
\bibitem[{Chickering(2002)}]{chickering2002learning}
\bibinfo{author}{Chickering, D.M.}, \bibinfo{year}{2002}.
\newblock \bibinfo{title}{Learning equivalence classes of {B}ayesian network
  structures}.
\newblock \bibinfo{journal}{Journal of Machine Learning Research}
  \bibinfo{volume}{2}, \bibinfo{pages}{445--498}.
\bibitem[{Deng et~al.(2025)Deng, Sun and Liu}]{deng}
\bibinfo{author}{Deng, Y.X.}, \bibinfo{author}{Sun, Y.}, \bibinfo{author}{Liu,
  H.X.}, \bibinfo{year}{2025}.
\newblock \bibinfo{title}{Identifying collapsible sets in directed graphical
  models via inducing paths}.
\newblock \bibinfo{journal}{Statistics and Computing} \bibinfo{volume}{35},
  \bibinfo{pages}{115}.
\bibitem[{Didelez and Edwards(2004)}]{didelez2004}
\bibinfo{author}{Didelez, V.}, \bibinfo{author}{Edwards, D.},
  \bibinfo{year}{2004}.
\newblock \bibinfo{title}{Collapsibility of graphical {CG}-regression models}.
\newblock \bibinfo{journal}{Scandinavian Journal of Statistics}
  \bibinfo{volume}{31}, \bibinfo{pages}{535--551}.
\bibitem[{Edwards(1990)}]{edwards1990}
\bibinfo{author}{Edwards, D.}, \bibinfo{year}{1990}.
\newblock \bibinfo{title}{Hierarchical interaction models}.
\newblock \bibinfo{journal}{Journal of the Royal Statistical Society: Series B
  (Methodological)} \bibinfo{volume}{52}, \bibinfo{pages}{3--20}.
\bibitem[{Guo and Perkovic(2021)}]{guo2021minimal}
\bibinfo{author}{Guo, R.}, \bibinfo{author}{Perkovic, E.},
  \bibinfo{year}{2021}.
\newblock \bibinfo{title}{Minimal enumeration of all possible total effects in
  a {M}arkov equivalence class}, in: \bibinfo{booktitle}{International
  Conference on Artificial Intelligence and Statistics},
  \bibinfo{organization}{PMLR}. pp. \bibinfo{pages}{2395--2403}.
\bibitem[{Hauser and B\"{u}hlmann(2012)}]{hauser2012}
\bibinfo{author}{Hauser, A.}, \bibinfo{author}{B\"{u}hlmann, P.},
  \bibinfo{year}{2012}.
\newblock \bibinfo{title}{Characterization and greedy learning of
  interventional {M}arkov equivalence classes of directed acyclic graphs}.
\newblock \bibinfo{journal}{Journal of Machine Learning Research}
  \bibinfo{volume}{13}, \bibinfo{pages}{2409--2464}.
\bibitem[{Henckel et~al.(2024)Henckel, Buttenschoen and Maathuis}]{Henckel2024}
\bibinfo{author}{Henckel, L.}, \bibinfo{author}{Buttenschoen, M.},
  \bibinfo{author}{Maathuis, M.H.}, \bibinfo{year}{2024}.
\newblock \bibinfo{title}{Graphical tools for selecting conditional
  instrumental sets}.
\newblock \bibinfo{journal}{Biometrika} \bibinfo{volume}{111},
  \bibinfo{pages}{771--788}.
\bibitem[{Henckel et~al.(2022)Henckel, Perkovi{\'c} and
  Maathuis}]{henckel2022graphical}
\bibinfo{author}{Henckel, L.}, \bibinfo{author}{Perkovi{\'c}, E.},
  \bibinfo{author}{Maathuis, M.H.}, \bibinfo{year}{2022}.
\newblock \bibinfo{title}{Graphical criteria for efficient total effect
  estimation via adjustment in causal linear models}.
\newblock \bibinfo{journal}{Journal of the Royal Statistical Society Series B:
  Statistical Methodology} \bibinfo{volume}{84}, \bibinfo{pages}{579--599}.
\bibitem[{Heng and Sun(2023)}]{heng2023}
\bibinfo{author}{Heng, P.}, \bibinfo{author}{Sun, Y.}, \bibinfo{year}{2023}.
\newblock \bibinfo{title}{Algorithms for convex hull finding in undirected
  graphical models}.
\newblock \bibinfo{journal}{Applied Mathematics and Computation}
  \bibinfo{volume}{445}, \bibinfo{pages}{127852}.
\bibitem[{Heng et~al.(2026)Heng, Sun and Guo}]{heng2024}
\bibinfo{author}{Heng, P.}, \bibinfo{author}{Sun, Y.}, \bibinfo{author}{Guo,
  J.H.}, \bibinfo{year}{2026}.
\newblock \bibinfo{title}{Structural dimension reduction in {B}ayesian
  networks}.
\newblock \DOIprefix\doi{10.48550/arXiv.2601.08236}.
\bibitem[{Imbens and Rubin(2015)}]{imbens2015causal}
\bibinfo{author}{Imbens, G.W.}, \bibinfo{author}{Rubin, D.B.},
  \bibinfo{year}{2015}.
\newblock \bibinfo{title}{Causal Inference in Statistics, Social, and
  Biomedical Sciences}.
\newblock \bibinfo{publisher}{Cambridge university press}.
\bibitem[{Kim and Kim(2006)}]{kim2006}
\bibinfo{author}{Kim, S.H.}, \bibinfo{author}{Kim, S.H.}, \bibinfo{year}{2006}.
\newblock \bibinfo{title}{A note on collapsibility in {DAG} models of
  contingency tables}.
\newblock \bibinfo{journal}{Scandinavian Journal of Statistics}
  \bibinfo{volume}{33}, \bibinfo{pages}{575--590}.
\bibitem[{Koller and Friedman(2009)}]{koller2009}
\bibinfo{author}{Koller, D.}, \bibinfo{author}{Friedman, N.},
  \bibinfo{year}{2009}.
\newblock \bibinfo{title}{Probabilistic Graphical Models: Principles and
  Techniques}.
\newblock \bibinfo{publisher}{MIT press}.
\bibitem[{Lauritzen(1996)}]{lauritzen1996}
\bibinfo{author}{Lauritzen, S.L.}, \bibinfo{year}{1996}.
\newblock \bibinfo{title}{Graphical Models}.
\newblock \bibinfo{publisher}{Clarendon Press}, \bibinfo{address}{Oxford}.
\bibitem[{Liu and Guo(2013)}]{liu2013}
\bibinfo{author}{Liu, B.H.}, \bibinfo{author}{Guo, J.H.}, \bibinfo{year}{2013}.
\newblock \bibinfo{title}{Collapsibility of conditional graphical models}.
\newblock \bibinfo{journal}{Scandinavian Journal of Statistics}
  \bibinfo{volume}{40}, \bibinfo{pages}{191--203}.
\bibitem[{Liu et~al.(2020)Liu, Fang, He and Geng}]{liu2020collapsible}
\bibinfo{author}{Liu, Y.}, \bibinfo{author}{Fang, Z.Y.}, \bibinfo{author}{He,
  Y.B.}, \bibinfo{author}{Geng, Z.}, \bibinfo{year}{2020}.
\newblock \bibinfo{title}{Collapsible {IDA}: Collapsing parental sets for
  locally estimating possible causal effects}, in:
  \bibinfo{booktitle}{Conference on Uncertainty in Artificial Intelligence},
  \bibinfo{organization}{PMLR}. pp. \bibinfo{pages}{290--299}.
\bibitem[{Maathuis et~al.(2010)Maathuis, Colombo, Kalisch and
  Bühlmann}]{maathuis2010}
\bibinfo{author}{Maathuis, M.}, \bibinfo{author}{Colombo, D.},
  \bibinfo{author}{Kalisch, M.}, \bibinfo{author}{Bühlmann, P.},
  \bibinfo{year}{2010}.
\newblock \bibinfo{title}{Predicting causal effects in large-scale systems from
  observational data}.
\newblock \bibinfo{journal}{Nature methods} \bibinfo{volume}{7},
  \bibinfo{pages}{247--8}.
\bibitem[{Maathuis and Colombo(2015)}]{maathuis2015}
\bibinfo{author}{Maathuis, M.H.}, \bibinfo{author}{Colombo, D.},
  \bibinfo{year}{2015}.
\newblock \bibinfo{title}{A generalized back-door criterion}.
\newblock \bibinfo{journal}{The Annals of Statistics} \bibinfo{volume}{43(3)},
  \bibinfo{pages}{1060--1088}.
\bibitem[{Maathuis et~al.(2009)Maathuis, Kalisch and
  B{\"u}hlmann}]{maathuis2009}
\bibinfo{author}{Maathuis, M.H.}, \bibinfo{author}{Kalisch, M.},
  \bibinfo{author}{B{\"u}hlmann, P.}, \bibinfo{year}{2009}.
\newblock \bibinfo{title}{Estimating high-dimensional intervention effects from
  observational data}.
\newblock \bibinfo{journal}{The Annals of Statistics} \bibinfo{volume}{37(6A)},
  \bibinfo{pages}{3133--3164}.
\bibitem[{Madigan and Mosurski(1990)}]{madigan1990}
\bibinfo{author}{Madigan, D.}, \bibinfo{author}{Mosurski, K.},
  \bibinfo{year}{1990}.
\newblock \bibinfo{title}{An extension of the results of {A}smussen and
  {E}dwards on collapsibility in contingency tables}.
\newblock \bibinfo{journal}{Biometrika} \bibinfo{volume}{77},
  \bibinfo{pages}{315--319}.
\bibitem[{Meek(1995)}]{meek1995}
\bibinfo{author}{Meek, C.}, \bibinfo{year}{1995}.
\newblock \bibinfo{title}{Causal inference and causal explanation with
  background knowledge}, in: \bibinfo{booktitle}{Proceedings of the Eleventh
  Conference on Uncertainty in Artificial Intelligence}, pp.
  \bibinfo{pages}{403--410}.
\bibitem[{Pearl(2009)}]{pearl2009}
\bibinfo{author}{Pearl, J.}, \bibinfo{year}{2009}.
\newblock \bibinfo{title}{Causality}.
\newblock \bibinfo{publisher}{Cambridge university press}.
\bibitem[{Perkovic et~al.(2017)Perkovic, Textor, Kalisch and
  Maathuis}]{perkovic2018}
\bibinfo{author}{Perkovic, E.}, \bibinfo{author}{Textor, J.},
  \bibinfo{author}{Kalisch, M.}, \bibinfo{author}{Maathuis, M.H.},
  \bibinfo{year}{2017}.
\newblock \bibinfo{title}{Complete graphical characterization and construction
  of adjustment sets in {M}arkov equivalence classes of ancestral graphs}.
\newblock \bibinfo{journal}{Journal of Machine Learning Research}
  \bibinfo{volume}{18}, \bibinfo{pages}{8132--8193}.
\bibitem[{Pfaltz(1971)}]{Pfaltz1971}
\bibinfo{author}{Pfaltz, J.L.}, \bibinfo{year}{1971}.
\newblock \bibinfo{title}{Convexity in directed graphs}.
\newblock \bibinfo{journal}{Journal of Combinatorial Theory, Series B}
  \bibinfo{volume}{10}, \bibinfo{pages}{143--162}.
\bibitem[{Pham and Nguyen(2026)}]{PHAM2026}
\bibinfo{author}{Pham, T.T.}, \bibinfo{author}{Nguyen, T.T.},
  \bibinfo{year}{2026}.
\newblock \bibinfo{title}{Causal reasoning over ontology-enriched graphs for
  interpretable medical {AI}}.
\newblock \bibinfo{journal}{Knowledge-Based Systems} \bibinfo{volume}{341},
  \bibinfo{pages}{115803}.
\bibitem[{Renero et~al.(2026)Renero, Maestre and Ochoa}]{Renero2026}
\bibinfo{author}{Renero, J.}, \bibinfo{author}{Maestre, R.},
  \bibinfo{author}{Ochoa, I.}, \bibinfo{year}{2026}.
\newblock \bibinfo{title}{Rex: Causal discovery based on machine learning and
  explainability techniques}.
\newblock \bibinfo{journal}{Pattern Recognition} \bibinfo{volume}{172},
  \bibinfo{pages}{112491}.
\bibitem[{Shadi et~al.(2025)Shadi, Mirshekali and Shaker}]{shadi2025}
\bibinfo{author}{Shadi, M.R.}, \bibinfo{author}{Mirshekali, H.},
  \bibinfo{author}{Shaker, H.R.}, \bibinfo{year}{2025}.
\newblock \bibinfo{title}{Explainable artificial intelligence for energy
  systems maintenance: A review on concepts, current techniques, challenges,
  and prospects}.
\newblock \bibinfo{journal}{Renewable \& Sustainable Energy Reviews}
  \bibinfo{volume}{216}, \bibinfo{pages}{115668}.
\bibitem[{Simpson(1951)}]{simpson1951}
\bibinfo{author}{Simpson, E.H.}, \bibinfo{year}{1951}.
\newblock \bibinfo{title}{The interpretation of interaction in contingency
  tables}.
\newblock \bibinfo{journal}{Journal of the Royal Statistical Society: Series B
  (Methodological)} \bibinfo{volume}{13}, \bibinfo{pages}{238--241}.
\bibitem[{Spirtes et~al.(2001)Spirtes, Glymour and
  Scheines}]{spirtes2001causation}
\bibinfo{author}{Spirtes, P.}, \bibinfo{author}{Glymour, C.},
  \bibinfo{author}{Scheines, R.}, \bibinfo{year}{2001}.
\newblock \bibinfo{title}{Causation, Prediction, and Search}.
\newblock \bibinfo{publisher}{MIT press}.
\bibitem[{Van Der~Zander et~al.(2014)Van Der~Zander, Liskiewicz and
  Textor}]{van2014}
\bibinfo{author}{Van Der~Zander, B.}, \bibinfo{author}{Liskiewicz, M.},
  \bibinfo{author}{Textor, J.}, \bibinfo{year}{2014}.
\newblock \bibinfo{title}{Constructing separators and adjustment sets in
  ancestral graphs}, in: \bibinfo{booktitle}{Conference on Uncertainty in
  Artificial Intelligence}, \bibinfo{publisher}{AUAI Press}. pp.
  \bibinfo{pages}{907--916}.
\bibitem[{Van Der~Zander et~al.(2019)Van Der~Zander, Li{\'s}kiewicz and
  Textor}]{van2019}
\bibinfo{author}{Van Der~Zander, B.}, \bibinfo{author}{Li{\'s}kiewicz, M.},
  \bibinfo{author}{Textor, J.}, \bibinfo{year}{2019}.
\newblock \bibinfo{title}{Separators and adjustment sets in causal graphs:
  Complete criteria and an algorithmic framework}.
\newblock \bibinfo{journal}{Artificial Intelligence} \bibinfo{volume}{270},
  \bibinfo{pages}{1--40}.
\bibitem[{Verma(1993)}]{verma1993}
\bibinfo{author}{Verma, T.S.}, \bibinfo{year}{1993}.
\newblock \bibinfo{title}{Graphical Aspects of Causal Models}.
\newblock \bibinfo{type}{Technical Report} \bibinfo{number}{R-191}. University
  of California, Los Angeles (UCLA), Computer Science Department.
\bibitem[{Wang et~al.(2025)Wang, Tao, Qin and Zhou}]{wang2025}
\bibinfo{author}{Wang, T.Z.}, \bibinfo{author}{Tao, L.}, \bibinfo{author}{Qin,
  T.}, \bibinfo{author}{Zhou, Z.H.}, \bibinfo{year}{2025}.
\newblock \bibinfo{title}{Estimating possible causal effects with latent
  variables via adjustment and novel rule orientation}.
\newblock \bibinfo{journal}{Artificial Intelligence} \bibinfo{volume}{347},
  \bibinfo{pages}{104387}.
\bibitem[{Wang et~al.(2011)Wang, Guo and He}]{wang2011}
\bibinfo{author}{Wang, X.F.}, \bibinfo{author}{Guo, J.H.}, \bibinfo{author}{He,
  X.M.}, \bibinfo{year}{2011}.
\newblock \bibinfo{title}{Finding the minimal set for collapsible graphical
  models}.
\newblock \bibinfo{journal}{Proceedings of the American Mathematical Society}
  \bibinfo{volume}{139}, \bibinfo{pages}{361--373}.
\bibitem[{Wermuth(1987)}]{wermuth1987}
\bibinfo{author}{Wermuth, N.}, \bibinfo{year}{1987}.
\newblock \bibinfo{title}{Parametric collapsibility and the lack of moderating
  effects in contingency tables with a dichotomous response variable}.
\newblock \bibinfo{journal}{Journal of the Royal Statistical Society: Series B
  (Methodological)} \bibinfo{volume}{49}, \bibinfo{pages}{353--364}.
\bibitem[{Xie and Geng(2009)}]{xie2009}
\bibinfo{author}{Xie, X.C.}, \bibinfo{author}{Geng, Z.}, \bibinfo{year}{2009}.
\newblock \bibinfo{title}{Collapsibility for directed acyclic graphs}.
\newblock \bibinfo{journal}{Scandinavian Journal of Statistics}
  \bibinfo{volume}{36}, \bibinfo{pages}{185--203}.
\bibitem[{Xie et~al.(2025a)Xie, Guo and He}]{xie20251}
\bibinfo{author}{Xie, X.D.}, \bibinfo{author}{Guo, J.H.}, \bibinfo{author}{He,
  S.Y.}, \bibinfo{year}{2025}a.
\newblock \bibinfo{title}{Collapsibility of the conditional models of
  {CG}-graphical models}.
\newblock \bibinfo{journal}{Scandinavian Journal of Statistics}
  \bibinfo{volume}{52}, \bibinfo{pages}{1735--1762}.
\bibitem[{Xie et~al.(2025b)Xie, Guo and Sun}]{xie20252}
\bibinfo{author}{Xie, X.D.}, \bibinfo{author}{Guo, J.H.}, \bibinfo{author}{Sun,
  Y.}, \bibinfo{year}{2025}b.
\newblock \bibinfo{title}{Dags as minimal {I}-maps for the induced models of
  causal {B}ayesian networks under conditioning}.
\newblock \bibinfo{journal}{Journal of Machine Learning Research}
  \bibinfo{volume}{26}, \bibinfo{pages}{1--62}.
\bibitem[{Zhang and Zhong(2026)}]{ZHANG2026}
\bibinfo{author}{Zhang, Q.}, \bibinfo{author}{Zhong, H.}, \bibinfo{year}{2026}.
\newblock \bibinfo{title}{A mutual information-driven submodular optimization
  approach to covariate selection in causal effect estimation}.
\newblock \bibinfo{journal}{Knowledge-Based Systems} \bibinfo{volume}{341},
  \bibinfo{pages}{115808}.

\end{thebibliography}



\end{document}